\documentclass{article}




     \usepackage[nonatbib,final]{neurips_2019}

\usepackage[utf8]{inputenc} 
\usepackage[T1]{fontenc}    
\usepackage{hyperref}       
\usepackage{url}            
\usepackage{booktabs}       
\usepackage{amsfonts}       
\usepackage{nicefrac}       
\usepackage{microtype}      

\usepackage{tabularx}       


\bibliographystyle{plain}

\usepackage{etoolbox}

\usepackage{soul}
\usepackage[small]{caption}
\usepackage{graphicx}
\usepackage{amsmath}
\usepackage{booktabs}
\urlstyle{same}

\usepackage{cancel}

\usepackage{algorithm}
\usepackage{algcompatible}
\usepackage[noend]{algpseudocode}

\usepackage{float}
\usepackage{hyperref}
\usepackage{mathtools}
\usepackage{multirow,color,graphicx}
\usepackage{subcaption}
\usepackage{wrapfig}
\usepackage{caption}
\usepackage{xfrac,amsfonts,amsbsy}
\usepackage[titletoc]{appendix}
\usepackage{xspace}
\usepackage{savesym,verbatim}
\usepackage[titletoc]{appendix}
\usepackage{paralist}
\usepackage{subscript}
\usepackage{tikz}
\usepackage{verbatim}
\usepackage{amsthm}

\newtheorem{theorem}{Theorem}
\newtheorem{lemma}[theorem]{Lemma}

\newtheorem{corollary}[theorem]{Corollary}

\makeatletter

\newcommand{\Rmnum}[1]{\expandafter\@slowromancap\romannumeral #1@}
\makeatother



\setlength{\abovecaptionskip}{2pt}
\setlength{\belowcaptionskip}{2pt}
\setlength{\intextsep}{3pt} 
\setlength{\dblfloatsep}{0.15 \baselineskip plus 3pt minus 2pt} 
\setlength{\dbltextfloatsep}{0.15 \baselineskip plus 3pt minus 2pt} 
\setlength\floatsep{0.15 \baselineskip plus 3pt minus 2pt} 



\DeclareMathOperator*{\argmax}{arg\,max}



\newcommand{\numinstance}{n} 
\newcommand{\horizon}{T} 

\newcommand{\instance}{\ensuremath{i}} 

\newcommand{\observation}{\ensuremath{y}} 

\newcommand{\RecallPrOf}[2]{g_{#1}\left(#2\right)} 
\newcommand{\RecallPr}{g} 



\newcommand{\realization}{\ensuremath{\phi}\xspace}
\newcommand{\Realization}{\ensuremath{\Phi}\xspace}

\newcommand{\Instances}{\ensuremath{\{1, \dots, \numinstance\}}}

\newcommand{\Sequences}{\ensuremath{\Sigma}}

\newcommand{\sequence}{\ensuremath{\sigma}}
\newcommand{\seqoflen}[1]{\ensuremath{\sequence_{1:{#1}}}}
\newcommand{\seqelement}[1]{\ensuremath{\sequence_{{#1}}}}

\newcommand{\obshistory}{\ensuremath{{y}}}
\newcommand{\obsoflen}[1]{\obshistory_{1:{#1}}}
\newcommand{\obselement}[1]{\observation_{{#1}}}

\newcommand{\seqpolicy}{\sigma^\policy}
\newcommand{\obspolicy}{\obshistory^\policy}

\newcommand{\seqpolicyprime}{\sigma^{\policy'}}
\newcommand{\obspolicyprime}{\obshistory^{\policy'}}

\newcommand{\seqgreedyoflen}[1]{\sequence^{\text{g}}_{1:{#1}}}
\newcommand{\obsgreedyoflen}[1]{\obshistory^\text{g}_{1:{#1}}}

\newcommand{\seqpolicyoflen}[1]{\sequence^{\policy}_{1:{#1}}}
\newcommand{\obspolicyoflen}[1]{\obshistory^{\policy}_{1:{#1}}}

\newcommand{\seqpolicyelement}[1]{\ensuremath{\sequence^{\policy}_{{#1}}}}

\newcommand{\policy}{\ensuremath{\pi}}
\newcommand{\policyopt}{\ensuremath{\policy^*}}
\newcommand{\policygreedy}{\ensuremath{\policy^{\text{g}}}}

\newcommand{\params}{\theta}

\newcommand{\submratio}{\gamma}
\newcommand{\curvature}{\omega}

\newcommand{\objective}{f}
\newcommand{\avgobj}[1]{F\left({#1}\right)}
\newcommand{\gain}[1]{\Delta\left({#1}\right)}


\newcommand{\getcount}[1]{\text{count}\left(#1\right)}
\newcommand{\defref}[1]{Definition~\ref{#1}}
\newcommand{\tableref}[1]{Table~\ref{#1}}
\newcommand{\figref}[1]{Fig.~\ref{#1}}
\newcommand{\eqnref}[1]{\text{Eq.}~(\ref{#1})}
\newcommand{\secref}[1]{\S\ref{#1}}

\newcommand{\thmref}[1]{Theorem~\ref{#1}}
\newcommand{\corref}[1]{Corollary~\ref{#1}}

\newcommand{\lemref}[1]{Lemma~\ref{#1}}
\renewcommand{\algref}[1]{Algorithm~\ref{#1}}

\newtheorem{definition}{Definition}



\newcommand{\expctover}[2]{\mathbb{E}_{#1}\!\left[#2\right]}
\newcommand{\maxover}[2]{\max_{#1}\!\left[#2\right]}

\def \argmax {\mathop{\rm arg\,max}}

\newcommand{\given}{\mid}

\makeatletter
\let\amsmath@bigm\bigm

\renewcommand{\bigm}[1]{%
  \ifcsname fenced@\string#1\endcsname
  \expandafter\@firstoftwo
  \else
  \expandafter\@secondoftwo
  \fi
  {\expandafter\amsmath@bigm\csname fenced@\string#1\endcsname}%
  {\amsmath@bigm#1}%
}

\newcommand{\DeclareFence}[2]{\@namedef{fenced@\string#1}{#2}}
\makeatother

\DeclareFence{\mid}{|}

\newcommand{\NonNegativeReals}{\ensuremath{\mathbb{R}_{\ge 0}}}




\newcommand{\paren} [1] {\ensuremath{ \left( {#1} \right) }}







\usepackage{caption}



\title{Teaching Multiple Concepts to a Forgetful Learner}

\author{
  \textbf{Anette Hunziker}\textsuperscript{\textdagger} \quad 
  \textbf{Yuxin Chen}\textsuperscript{$\mathparagraph$} \quad 
  \textbf{Oisin Mac Aodha}\textsuperscript{$\mathsection$} \quad
  \textbf{Manuel Gomez Rodriguez}\textsuperscript{*}\\
  \textbf{Andreas Krause}\textsuperscript{\textdaggerdbl} \quad
  \textbf{Pietro Perona}\textsuperscript{$\star$} \quad
  \textbf{Yisong Yue}\textsuperscript{$\star$} \quad
  \textbf{Adish Singla}\textsuperscript{*}\\
  \\
  \textsuperscript{\textdagger}University of Zurich, \texttt{anette.hunziker@gmail.com}, \\
  \textsuperscript{$\mathparagraph$}University of Chicago, \texttt{chenyuxin@uchicago.edu}, \\
  \textsuperscript{$\mathsection$}University of Edinburgh, \texttt{oisin.macaodha@ed.ac.uk}, \\
  \textsuperscript{\textdaggerdbl}ETH Zurich, \texttt{krausea@ethz.ch}, \\
  \textsuperscript{$\star$}Caltech, \texttt{\{perona, yyue\}@caltech.edu}, \\
  \textsuperscript{*}MPI-SWS, \texttt{\{manuelgr, adishs\}@mpi-sws.org}\\
}

\begin{document}
\maketitle
\newtoggle{longversion}
\settoggle{longversion}{true}
\iftoggle{longversion}{\newcommand{\extversion}{the Appendix}}
{\newcommand{\extversion}{the extended version of this paper}}
\begin{abstract}
How can we help a forgetful learner learn multiple concepts within a limited time frame? While there have been extensive studies in designing optimal schedules for teaching a \emph{single} concept given a learner'{}s memory model, existing approaches for teaching \emph{multiple} concepts are typically based on heuristic scheduling techniques without theoretical guarantees. 
In this paper, we look at the problem from the perspective of discrete optimization and introduce a novel algorithmic framework for teaching multiple concepts with strong performance guarantees.  Our framework is both generic, allowing the design of teaching schedules for different memory models, and also interactive, allowing the teacher to adapt the schedule to the underlying forgetting mechanisms of the learner.
Furthermore, for a well-known memory model, we are able to identify a regime of model parameters where our framework is guaranteed to achieve high performance. 
We perform extensive evaluations using simulations along with real user studies in two concrete applications: (i) an educational app for online vocabulary teaching; and (ii) an app for teaching novices how to recognize animal species from images. 
Our results demonstrate the effectiveness of our algorithm compared to popular heuristic approaches.

\end{abstract}


\section{Introduction}\label{sec.intro}

In many real-world educational applications, human learners often intend to learn more than one concept. For example, in a language learning scenario, a learner aims to memorize many vocabulary words from a foreign language. In citizen science projects such as eBird \cite{sullivan2009ebird} and iNaturalist \cite{van2018inaturalist}, the goal of a learner is to recognize multiple animal species from a given geographic region. As the number of concepts increases, the learning problem can become very challenging due to the learner's limited memory and propensity to forget.
It has been well established in the psychology literature that in the context of \emph{human learning}, the knowledge of a learner decays rapidly without reconsolidation \cite{ebbinghaus1885gedachtnis}. Somewhat analogously, in the sequential \emph{machine learning} setting, modern machine learning methods, such as artificial neural networks, can be drastically disrupted when presented with new information from different domains, which leads to catastrophic interference and forgetting \cite{mccloskey1989catastrophic,kirkpatrick2017overcoming}.
Therefore, to retain long-term memory (for both human and machine learners), it is crucial to devise teaching strategies that adapt to the underlying forgetting mechanisms of the learner.

Teaching forgetful learners requires repetition.
Properly scheduled repetitions and reconsolidations of previous knowledge have proven effective for a wide variety of real-world learning tasks, including piano
\cite{simmons2012distributed}, surgery \cite{verdaasdonk2007influence,spruit2015increasing},
video games \cite{shebilske1999interlesson},
 and vocabulary learning \cite{bloom1981effects}, among others. 
For many of the above applications, it has been shown that by carefully designing the scheduling policy, one can achieve substantial gains over simple heuristics (such as spaced repetition at fixed time intervals, or a simple round robin schedule) \cite{balota2007expanded}. Unfortunately, while there have been extensive (theoretical) results in teaching a single concept using spaced repetition algorithms, existing approaches for teaching multiple concepts are typically based on heuristics without theoretical guarantees.



In this paper, we explore the following research question: \emph{Given limited time, can we help a forgetful learner efficiently learn multiple concepts in a principled manner?} More concretely, we consider an adaptive setting where at each time step, the teacher needs to pick a concept from a finite set based on the learner's previous responses, and the process iterates until the learner's time budget is exhausted. Given a memory model of the learner, what is an optimal teaching curriculum? How should this sequence be adapted based on the learner's performance history?

\begin{figure*}[!t]
  \centering
  {
    \includegraphics[width=1\textwidth]{./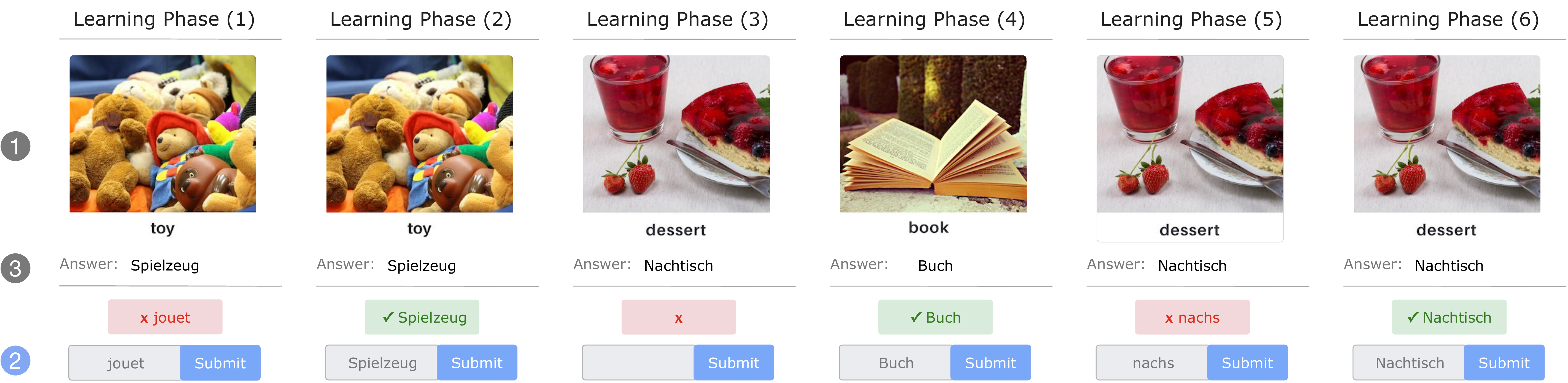}
    \vspace{.15mm}
    \caption{Illustration of our adaptive teaching framework applied to German vocabulary learning, shown here for six time steps in the learning phase. Each time step proceeds in three stages: (1) the system displays a flashcard with an image and its English description, (2) the learner inputs the German translation, and (3) the system provides feedback in the form of the correct answer if the input is incorrect.
    }\label{fig:intro:teaching}
  }
\end{figure*}

\renewcommand{\UrlFont}{\footnotesize}

\subsection{Overview of our approach} For a high-level overview of our approach, consider the example in \figref{fig:intro:teaching}, which illustrates one of our applications 
on German vocabulary learning \cite{teachinggerman}. 
Here, our goal is to teach the learner three German words in six time steps. One trivial approach could be to show the flashcards in a round robin fashion. However, 
the round robin sequence is deterministic and thus not capable of adapting to the learner's performance. In contrast, our algorithm outputs an adaptive teaching sequence based on the learner's performance. 

\looseness=-1
Our algorithm is based on a novel formulation of the adaptive teaching problem. In \secref{sec.model}, we propose a novel discrete optimization problem, where we seek to maximize a natural surrogate objective function that characterizes the learner's expected performance throughout the teaching session. 
Note that constructing the optimal teaching policy boils down to solving a stochastic sequence optimization problem, which is NP-hard in general.
In \secref{sec.algorithms}, we introduce our greedy algorithm, and derive
performance guarantees based on two intuitive data-dependent properties. While it can be challenging to compute these performance bounds, we show that for certain learner memory models, these bounds can be estimated efficiently. Furthermore, we identify parameter settings of the memory models where the greedy algorithm is guaranteed to achieve high performance. Finally, we demonstrate that our algorithm achieves significant improvements over baselines for both simulated learners (cf. \secref{sec.simulations}) and human learners (cf. \secref{sec.userstudy}).



\section{The Teaching Model}\label{sec.model}
We now formalize the problem addressed in this paper.

\subsection{Problem setup}
Suppose that the teacher aims to teach the learner $\numinstance$  concepts in a finite time horizon $\horizon$. We highlight the notion of a concept via two concrete examples: (i) when teaching the vocabulary of a foreign language, each concept corresponds to a word, and (ii) when teaching to recognize different animal species, each concept corresponds to an animal name. We consider flashcard-based teaching, where each concept is associated with a flashcard (cf. \figref{fig:intro:teaching}).  

We study the following interactive teaching protocol: At time step $t$, the teacher picks a concept from the set $\Instances$ and presents its corresponding flashcard to the learner without revealing its correct answer. The learner then tries to recall the concept. Let us use $\observation_t \in 
\{0, 1\}$ to denote the learner's recall at time step $t$. Here, $\observation_t = 1$ means that the learner successfully recalls the concept (e.g., the learner correctly recognizes the animal species), and $\observation_t = 0$ otherwise. After the learner makes an attempt, the teacher observes the outcome $\observation_t$ and reveals the correct answer.


\subsection{Learner's memory model}
Let us use $(\sequence, \obshistory)$ to denote any sequence of concepts and observations. In particular, we use $\seqoflen{t}$ to denote the sequence of concepts picked by the teacher up to time $t$.
Similarly, we use $\obsoflen{t}$ to denote the sequence of observations up to time $t$. Given the history $(\seqoflen{t}, \obsoflen{t})$, we are interested in modeling the learner's probability to recall concept $i$ at a future time $\tau \in [t+1, T]$. In general, the learner's probability to recall concept $i$ could depend on the history of teaching concept $i$ or related concepts.\footnote{As an example, for German vocabulary learning, the recall probability for the concept ``Apfelsaft" (apple juice) could depend on the flashcards shown for ``Apfelsaft" and ``Apfel" (apple).}
Formally, we capture the learner's recall probability for concept $i$ by a memory model $\RecallPrOf{i}{\tau, (\seqoflen{t}, \obsoflen{t})}$ that depends on the entire history $(\sequence, \obshistory)$. In \secref{sec.algorithm.hlr}, we study an instance of the learner model captured by exponential forgetting curve (see Eq.~\eqref{eq:hlrmodel}).

\subsection{The teaching objective}
There are several objectives of interest to the teacher, for instance, maximizing the learner's performance in recalling all concepts measured at the end of the teaching session. However, given that the learning phase might stretch over a long time duration for language learning, another natural objective is to measure learner's performance across the entire teaching session. For any given sequence of concepts and observations $(\seqoflen{T}, \obsoflen{T})$ of length $T$, we consider the following objective:
\begin{align}
  \objective(\seqoflen{T}, \obsoflen{T}) =\frac{1}{\numinstance\horizon}
  \sum_{\instance=1}^{\numinstance}
  \sum_{\tau=1}^\horizon \RecallPrOf{i}{\tau+1, \paren{\seqoflen{\tau}, \obsoflen{\tau}}}.\label{eq:objective_for_T}
\end{align}
Here, $\RecallPrOf{i}{\cdot}$ denotes the recall probability of concept $\instance$ at $\tau+1$, given the history up to time step $\tau$. Concretely, for a given concept $\instance \in [n]$, our objective function can be interpreted as the (discrete) area under the learner's recall curve for concept $i$ across the teaching session.

The teacher's teaching strategy can be represented as a \emph{policy} $\policy: (\sequence, \obshistory) \rightarrow \Instances$, which maps any history (i.e., sequence of concepts selected $\sequence$ and observations $\obshistory$) to the next concept to be taught. For a given policy $\policy$, we use $(\seqpolicyoflen{T}, \obspolicyoflen{T})$ to denote a random trajectory from the policy until time $T$. The average utility of a policy $\policy$ is defined as:
\begin{align}
  \avgobj{\policy} = \expctover{\seqpolicy, \obspolicy}{\objective(\seqpolicyoflen{T}, \obspolicyoflen{T})} \label{eq:avgobj}.
\end{align}

Given the learner's memory model for each concept $\instance$ and the time horizon $\horizon$, we seek the optimal teaching policy that achieves the maximal average utility:
\begin{align}\label{eq:optproblem}
  \policyopt \in \max_{\policy} \avgobj{\policy}.
\end{align}



It can be shown that finding the optimal solution for Eq. \eqref{eq:optproblem} is NP-hard (proof is provided in the supplemental materials).


\begin{theorem}\label{thm:nphard}
Problem \eqref{eq:optproblem} is NP-hard, even when the objective function does not depend on the learner's responses. 
\end{theorem}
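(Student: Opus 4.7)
The plan is to exploit the hypothesis that $g_i$ is independent of the learner's observations and reduce from a canonical NP-hard sequencing problem; I would use \textsc{Directed Hamiltonian Path} (\textsc{DHP}). First, observe that when each $g_i$ does not depend on $y$, the random trajectory $(\seqpolicyoflen{T}, \obspolicyoflen{T})$ influences $f$ only through $\seqpolicyoflen{T}$, so $\avgobj{\policy}$ in \eqnref{eq:avgobj} reduces to $\expctover{\seqpolicy}{\objective(\seqpolicyoflen{T})}$, and the optimum is attained by a deterministic policy. Hence \eqnref{eq:optproblem} collapses to the combinatorial problem $\max_{\sequence \in \Instances^{\horizon}} \sum_{\instance,\tau} \RecallPrOf{\instance}{\tau+1, \seqoflen{\tau}}$, and the flexibility to choose $g_i$ as an arbitrary $[0,1]$-valued function of the history leaves ample room to embed \textsc{DHP}.

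\textbf{Reduction.} Given a directed graph $G=(V,E)$ with $|V|=n$, I would construct a teaching instance with $n$ concepts (one per vertex), horizon $\horizon = n$, and recall functions
\begin{align*}
\RecallPrOf{\instance}{\tau+1,\seqoflen{\tau}} = \indfunc\bigl[\seqelement{\tau} = \instance \text{ and } \seqoflen{\tau} \text{ is a simple directed path in } G\bigr].
\end{align*}
Each $g_i$ is $\{0,1\}$-valued (and thus a valid probability), observation-free, and polynomial-time evaluable from the history (verify vertex-distinctness and that all consecutive pairs lie in $E$), so the reduction is polynomial in $|G|$.

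\textbf{Correctness.} Under this construction, at each $\tau$ at most one term in the inner sum over $\instance$ is nonzero (the one with $\instance=\seqelement{\tau}$), so
\begin{align*}
\objective(\seqoflen{\horizon}) = \frac{1}{n\horizon}\sum_{\tau=1}^{\horizon}\indfunc\bigl[\seqoflen{\tau}\text{ is a simple directed path in }G\bigr].
\end{align*}
Because failure of the simple-path property propagates forward in $\tau$, the sum equals the length $\ell$ of the longest prefix of $\seqoflen{\horizon}$ that traces a simple directed path in $G$. Hence $\max_\policy \avgobj{\policy} = 1/n$ if and only if $G$ admits a Hamiltonian path, which completes the reduction and establishes NP-hardness of the decision version of \eqnref{eq:optproblem}.

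\textbf{Main obstacle.} On the combinatorial side the obstacle is essentially absent: the hypothesis grants enough freedom in specifying $g_i$ that the embedding is almost immediate. The subtlety worth being explicit about is the passage from the stochastic formulation in \eqnref{eq:avgobj}--\eqnref{eq:optproblem} to the deterministic sequence-optimization view; one must argue that when $g_i$ is observation-free no adaptive (observation-dependent) policy can outperform the best deterministic sequence. This follows from a standard interchange argument---fix the random bits of any optimal policy and replace the induced distribution over sequences by its best realization---which I would spell out explicitly to make the reduction airtight.
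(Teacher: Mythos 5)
Your reduction is correct, but it takes a genuinely different route from the paper. The paper proves hardness by showing that the objective class in \eqnref{eq:objective} can \emph{encode any non-negative string submodular function}: it defines $\RecallPrOf{i}{\tau+1,\cdot}$ in terms of the marginal gains $\mu(i \mid \sigma_{1:t_i-1})$ of an arbitrary string submodular $\mu$, with a reweighting factor $\numinstance\horizon/\bigl(c(\horizon-t_i+1)\bigr)$ so that the double sum telescopes to $\mu(\sigma_{1:t})/c$, and then invokes the known NP-hardness of string submodular maximization from \cite{zhang2016string}. You instead give a direct, self-contained reduction from \textsc{Directed Hamiltonian Path}, using indicator-valued recall functions that are nonzero only at the step immediately after a concept is shown and only while the taught prefix traces a simple path in $G$; the objective then counts the longest simple-path prefix, and the threshold $1/n$ certifies a Hamiltonian path. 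Both proofs exploit exactly the same freedom in the model --- that $g_i$ may be an arbitrary $[0,1]$-valued, observation-free function of the full history --- and both correctly dispose of adaptivity by noting that when $g_i$ ignores $\observation$ the expectation in \eqnref{eq:avgobj} is maximized by a deterministic sequence (your interchange argument is the right, and needed, remark). What each buys: your argument is more elementary and does not lean on an external hardness citation; the paper's argument additionally establishes a structural fact (the teaching objective subsumes all string submodular sequence functions) that dovetails with its later use of string-submodularity machinery in \thmref{thm:bound_general} and \corref{cor:bound_general}. One cosmetic caveat: your $g_i$ should be stated as depending on the last element of the supplied history (so it is well defined under the truncated form in \eqnref{eq:objective}), and, like the paper's construction, it is a degenerate memory model, so neither proof says anything about hardness for the restricted HLR family --- consistent with the paper's own footnote.
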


\section{Teaching Algorithm and Analysis}\label{sec.algorithms}


We now present a simple, greedy approach for constructing teaching policies. To measure the teaching progress at time $t < T$, we introduce the following generalization of objective defined in Eq.~\eqref{eq:objective_for_T}:


\begin{align}
  \objective&(\seqoflen{t}, \obsoflen{t}) = 
  \frac{1}{\numinstance\horizon}
  \sum_{\instance=1}^{\numinstance}
  \sum_{\tau=1}^\horizon \RecallPrOf{i}{\tau+1, \paren{\seqoflen{\min(\tau, t)}, \obsoflen{\min(\tau, t)}}}.\label{eq:objective}
\end{align}
Note that this is equivalent to extending $(\seqoflen{t}, \obsoflen{t})$ to length $T$ by filling in the remaining sequence from $t+1$ to $T$ with empty concepts and observations.
%
%
%
%
Given the history $(\seqoflen{t-1}, \obsoflen{t-1})$, we define the conditional marginal gain of teaching a concept $i$ at time $t$ as:
\begin{align}
  \gain{i \given \seqoflen{t-1}, \obsoflen{t-1}} =
  \mathbb{E}_{y_t}&\left[\objective(\seqoflen{t-1} \oplus \instance, \obsoflen{t-1} \oplus \observation_t) - \right. 
                  \left. \objective(\seqoflen{t-1}, \obsoflen{t-1}) \right],
                    \label{eq:marginal_gain_item}
\end{align}
where $\oplus$ denotes the concatenation operation, and the expectation is taken over the randomness of learner's recall $y_t$, conditioned on the history $(\seqoflen{t-1}, \obsoflen{t-1})$. The greedy algorithm, as described in \algref{alg:greedy}, iteratively selects the concept that maximizes this conditional marginal gain. 

\begin{algorithm}[h!]
  \caption{Adaptive Teaching Algorithm}\label{alg:greedy}
  \begin{algorithmic}
    \State Sequence $\sequence \leftarrow \emptyset$; observation history $\obshistory \leftarrow \emptyset$
    \For{$t=\{1,\dots, T\}$}
    \State Select $\instance_t \leftarrow \argmax_{\instance} \gain{i \given \sequence, \obshistory} $ 
    \State Show $\instance_t$ to the learner; Observe $\observation_t$
    \State Update $\sequence \leftarrow \sequence \oplus \instance_t$, $\obshistory \leftarrow \obshistory \oplus y_t $
    \EndFor
  \end{algorithmic}
\end{algorithm}

\subsection{Theoretical guarantees}
We now present a general theoretical framework for analyzing the performance of the adaptive teaching algorithm (\algref{alg:greedy}). Our bound depends on two natural properties of the objective function $\objective$, both related to a notion of \emph{diminishing returns} of a sequence function. Intuitively, the following two properties reflect how much 
a greedy choice can affect the optimality of the solution.
\begin{definition}[Online stepwise submodular coefficient] \label{def:submratio}
  Consider policy $\policy$ for time $T$. The online submodular coefficient of function $f$ with respect to policy $\pi$ at step $t$ is defined as
  \begin{align}
    \submratio_t^\policy= \min_{\seqpolicyoflen{t}, \obspolicyoflen{t}}\submratio(\seqpolicyoflen{t}, \obspolicyoflen{t}) \label{eq:submratio}
  \end{align}
  where
  $\submratio(\sequence, \obshistory) = \min_{i, (\sequence', \obshistory'): |\sequence| + |\sequence'|<T}
  \frac{\gain{i \given \sequence, \obshistory} }
  {\gain{i \given \sequence \oplus \sequence', \obshistory \oplus \obshistory'}}$
  denotes the minimal ratio between the gain of any concept $i$ given the current history $(\sequence, \obshistory)$ and the gain of $i$ in any future steps.
\end{definition}
\begin{definition}[Online stepwise backward curvature]\label{def:curvature}
  Consider policy $\policy$ for time $T$. The online backward curvature of function $f$ with respect to policy $\pi$ at step $t$ is defined as
  \begin{align}
    \label{eq:curvature}
    \curvature_t^\policy= \max_{\seqpolicyoflen{t}, \obspolicyoflen{t}}\curvature(\seqpolicyoflen{t}, \obspolicyoflen{t})
  \end{align}
  where $\curvature(\sequence, \obshistory)=
  \maxover{\seqpolicyprime, \obspolicyprime}{1 - \frac{\objective(\sequence \oplus \seqpolicyprime, \obshistory \oplus \obspolicyprime) - \objective(\seqpolicyprime, \obspolicyprime)}
    {\objective(\sequence, \obshistory)-\objective(\emptyset)}}$
  denotes the normalized maximal second-order difference when considering the current history $(\sequence, \obshistory)$.
\end{definition}
Here, $\submratio(\sequence, \obshistory)$ and $\curvature(\sequence, \obshistory)$ generalizes the notion of \emph{string submodularity} and \emph{total backward curvature} for sequence functions \cite{zhang2016string}  to the stochastic setting. Intuitively, $\submratio(\sequence, \obshistory)$ measures the degree of diminishing returns of a sequence function in terms of the \emph{ratio} between the conditional marginal gains. If $\forall (\sequence, \obshistory), \submratio(\sequence, \obshistory) = 1$, then the conditional marginal gain of adding any concept to any subsequent observation history is non-decreasing. 
In contrast, $\curvature(\sequence, \obshistory)$ measures the degree of diminishing returns in terms of the \emph{difference} between the marginal gains.
As our first main theoretical result, we provide a data-dependent bound on the average utility of the greedy policy against the optimal policy.
\begin{theorem}\label{thm:bound_general}
  Let $\policygreedy$ be the online greedy policy induced by \algref{alg:greedy}, and $F$ be the objective function as defined in \eqnref{eq:avgobj}. 
  Then for all policies $\policyopt$,
  \begin{align}
    \avgobj{\policygreedy} \geq \avgobj{\policyopt} \sum_{t=1}^T \paren{\frac{\submratio^{\text{g}}_{T-t}}{T}\prod_{\tau=0}^{t-1}\left( 1-\frac{\curvature^{\text{g}}_\tau \submratio^{\text{g}}_\tau}{T} \right)}, \label{eq:thm:bound_general}
  \end{align}
  where $\submratio^{\text{g}}_t$ and $\curvature^{\text{g}}_t$ denote the online stepwise submodular coefficient and online stepwise backward curvature of $\objective$ with respect to the policy $\policygreedy$ at time step $t$.
\end{theorem}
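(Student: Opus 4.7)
The plan is to extend the classical recursive greedy analysis for submodular maximization to the adaptive stochastic sequence setting, with $\submratio_t^g$ and $\curvature_t^g$ playing the roles of the usual submodular/curvature constants. Let $F_t^g := \expctover{\seqgreedy, \obsgreedy}{\objective(\seqgreedyoflen{t}, \obsgreedyoflen{t})}$ denote the expected value of the greedy policy after $t$ steps, so that $\avgobj{\policygreedy} = F_T^g$ and $F_0^g = \objective(\emptyset)$. The aim is to establish a one-step inequality that lower-bounds the greedy increment $F_{t+1}^g - F_t^g$ (equivalently, upper-bounds the residual $\avgobj{\policyopt} - F_t^g$), then unroll it over the $T$ time steps to reach the sum-product expression in Eq.~\eqref{eq:thm:bound_general}.

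The main one-step inequality is produced by chaining three ingredients at step $t+1$. (i) Because \algref{alg:greedy} selects the concept of maximum conditional marginal gain, averaging over the $T$ actions taken by $\policyopt$ along its own random trajectory gives $\gain{i_{t+1} \given \seqgreedyoflen{t}, \obsgreedyoflen{t}} \geq \frac{1}{T}\sum_{\tau=1}^T \gain{\seqopt_\tau \given \seqgreedyoflen{t}, \obsgreedyoflen{t}}$, which weakens the maximum into an average. (ii) The online stepwise submodular coefficient of \defref{def:submratio} then allows each right-hand term to be related (up to a factor $\submratio_t^g$) to the gain of $\seqopt_\tau$ evaluated along the longer context $\seqgreedyoflen{t} \oplus \seqoptoflen{\tau-1}$, so that the sum telescopes into a quantity proportional to the total value gained by appending $\policyopt$'s continuation on top of the greedy prefix. (iii) The online stepwise backward curvature of \defref{def:curvature} then relates this "greedy-then-optimal" hybrid value back to $\avgobj{\policyopt}$ itself, since appending the optimal continuation after $\seqgreedyoflen{t}$ preserves at least a $(1-\curvature_t^g)$ fraction of $F_t^g$.

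Combining these three steps yields a recursion of the form
\[
F_{t+1}^g - F_t^g \ \geq\ \frac{\submratio_t^g}{T}\bigl(\avgobj{\policyopt} - \curvature_t^g \cdot F_t^g\bigr),
\]
which can be rearranged as $\avgobj{\policyopt} - F_{t+1}^g \leq \bigl(1 - \submratio_t^g \curvature_t^g/T\bigr)\bigl(\avgobj{\policyopt} - F_t^g\bigr) + (\text{linear correction in } \submratio_t^g/T)$. Unrolling from $F_0^g = 0$ accumulates a product $\prod_{\tau=0}^{t-1}\bigl(1 - \submratio_\tau^g \curvature_\tau^g/T\bigr)$ attached to the contribution of the $t$-th step, and the factor $\submratio_{T-t}^g/T$ appearing in the statement emerges after collecting the $T$ terms in reverse order of steps. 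A standard induction on $t$ then delivers Eq.~\eqref{eq:thm:bound_general}.

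The principal obstacle is the adaptive stochastic nature of the problem: the trajectory $(\seqopt, \obsopt)$ of $\policyopt$ is random and is \emph{not} naturally coupled to the greedy trajectory, so every averaging step must be established pointwise, conditional on the greedy history, and then integrated via the tower rule. The worst-case definitions of $\submratio_t^g$ and $\curvature_t^g$ (minima/maxima over histories reachable by $\policygreedy$) are precisely what allow these constants to be pulled outside the relevant conditional expectations, but the order in which conditioning is applied — greedy history first, then optimal continuation — must be laid out explicitly. A secondary subtlety is bookkeeping on indices: since the definitions index histories by their length $t$ while the recursion naturally produces gains at step $t+1$, the alignment that yields $\submratio_{T-t}^g$ together with $\prod_{\tau=0}^{t-1}$ in the statement requires careful relabeling when unrolling.
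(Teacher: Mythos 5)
Your proposal follows essentially the same route as the paper's proof: your steps (i)--(ii) reproduce the paper's key lemma bounding the greedy one-step gain by $\frac{\submratio^{\text{g}}_t}{T}\,\gain{\policyopt \given \seqgreedyoflen{t}, \obsgreedyoflen{t}}$ via telescoping along the appended optimal continuation and the tower rule, your step (iii) is exactly the paper's use of the backward curvature to compare the hybrid value $\objective(\seqgreedyoflen{t}\oplus\seqopt,\cdot)$ with $\avgobj{\policyopt}$, and the resulting recursion $F^{\text{g}}_{t+1} \geq \frac{\submratio^{\text{g}}_t}{T}\avgobj{\policyopt} + \bigl(1-\frac{\submratio^{\text{g}}_t\curvature^{\text{g}}_t}{T}\bigr)F^{\text{g}}_t$ and its unrolling coincide with the paper's argument. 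The plan is correct as far as it goes, with only the expected bookkeeping (index alignment and conditioning order) left to write out, which matches what the paper itself does.
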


The summand on the R.H.S. of \eqnref{eq:thm:bound_general} is in fact a lower bound on the expected one-step gain of the greedy policy. 
We can further relax the bound by considering the worst-case online stepwise submodularity ratio and curvature across all time steps, given by the following corollary.
\begin{corollary}\label{cor:bound_general}
  Let $\submratio^{\text{g}} = \min_t \submratio^{\text{g}}_t$ and $\curvature^{\text{g}} = \max_t \curvature^{\text{g}}_t$. 
  For all $\policyopt$,
  $\avgobj{\policygreedy} \geq \frac{1}{\curvature^{\text{g}}}\left( 1-e^{-{\submratio^{\text{g}}\curvature^{\text{g}}}}\right) \avgobj{\policyopt}.$
\end{corollary}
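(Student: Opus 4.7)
My plan is to derive \corref{cor:bound_general} by algebraically simplifying the bound of \thmref{thm:bound_general} using worst-case stepwise quantities and then evaluating a geometric sum.

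First, I would lower bound the right-hand side of \thmref{thm:bound_general} by replacing each occurrence of $\submratio^{\text{g}}_\tau$ with the global minimum $\submratio^{\text{g}}$ and each $\curvature^{\text{g}}_\tau$ with the global maximum $\curvature^{\text{g}}$. The cleanest way to justify this uniform replacement is to revisit the one-step recursion that underlies the proof of \thmref{thm:bound_general}, which controls the expected cumulative objective $F_t$ after $t$ greedy steps via $F_t - F_{t-1} \geq (\submratio^{\text{g}}_t / T)\,\avgobj{\policyopt} - (\curvature^{\text{g}}_t \submratio^{\text{g}}_t / T)\,F_{t-1}$. Substituting worst-case values on both sides gives the simpler recursion
\[
F_t \;\geq\; \Bigl(1 - \tfrac{\curvature^{\text{g}}\submratio^{\text{g}}}{T}\Bigr) F_{t-1} + \tfrac{\submratio^{\text{g}}}{T}\,\avgobj{\policyopt},
\]
and unrolling from $F_0 = 0$ yields $\avgobj{\policygreedy} = F_T \geq \avgobj{\policyopt}\cdot\tfrac{\submratio^{\text{g}}}{T}\sum_{t=0}^{T-1}(1 - \curvature^{\text{g}}\submratio^{\text{g}}/T)^t$.

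Second, I would evaluate this geometric series in closed form,
\[
\frac{\submratio^{\text{g}}}{T}\cdot\frac{1 - (1 - \curvature^{\text{g}}\submratio^{\text{g}}/T)^T}{\curvature^{\text{g}}\submratio^{\text{g}}/T} \;=\; \frac{1}{\curvature^{\text{g}}}\Bigl(1 - (1 - \curvature^{\text{g}}\submratio^{\text{g}}/T)^T\Bigr),
\]
and then apply the elementary inequality $(1 - x/T)^T \leq e^{-x}$ with $x = \submratio^{\text{g}}\curvature^{\text{g}}$ to conclude $\avgobj{\policygreedy} \geq \tfrac{1}{\curvature^{\text{g}}}\bigl(1 - e^{-\submratio^{\text{g}}\curvature^{\text{g}}}\bigr)\avgobj{\policyopt}$, which is the claimed bound.

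The main obstacle I anticipate is justifying the uniform substitution in the first step. Because $\submratio^{\text{g}} \leq \submratio^{\text{g}}_\tau$ but $\curvature^{\text{g}} \geq \curvature^{\text{g}}_\tau$, replacing both inside the shrinkage factors $(1 - \curvature^{\text{g}}_\tau \submratio^{\text{g}}_\tau / T)$ of the unrolled bound in \eqnref{eq:thm:bound_general} pushes the two quantities in opposite directions, so a term-by-term substitution into the product is delicate. Performing the replacement at the per-step recursion level sidesteps this, since under the natural normalizations $\submratio^{\text{g}}_t, \curvature^{\text{g}}_t \in [0,1]$ the right-hand side of the recursion is monotone in each stepwise quantity in the direction consistent with the stated worst-case replacement, so the unrolling commutes with the substitution.
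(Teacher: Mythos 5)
Your endgame coincides with the paper's: both arguments reduce \corref{cor:bound_general} to \thmref{thm:bound_general}, collapse the bound to the geometric series $\frac{\submratio^{\text{g}}}{T}\sum_{t}\bigl(1-\submratio^{\text{g}}\curvature^{\text{g}}/T\bigr)^{t}$, and finish with $(1-x/T)^T\le e^{-x}$ at $x=\submratio^{\text{g}}\curvature^{\text{g}}$. Where you genuinely differ is the point at which the worst-case substitution is made. The paper substitutes directly into the unrolled bound \eqnref{eq:thm:bound_general}, replacing $\submratio^{\text{g}}_{T-t}$ by $\submratio^{\text{g}}$ in the prefactor and each factor $1-\submratio^{\text{g}}_\tau\curvature^{\text{g}}_\tau/T$ by $1-\submratio^{\text{g}}\curvature^{\text{g}}/T$ in a single step; as you note, the latter replacement is not a pointwise inequality, since it needs $\submratio^{\text{g}}_\tau\curvature^{\text{g}}_\tau\le\submratio^{\text{g}}\curvature^{\text{g}}$, which does not follow from $\submratio^{\text{g}}_\tau\ge\submratio^{\text{g}}$ and $\curvature^{\text{g}}_\tau\le\curvature^{\text{g}}$ alone. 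So your worry about the term-by-term route is well founded — the paper's one-line chain glosses over exactly this — and performing the substitution at the level of the one-step recursion (the paper's \eqnref{eq:avgobj_recursive_bound}) before unrolling is the natural repair; your closed-form evaluation and exponential bound are then correct (and your $\sum_{t=0}^{T-1}$ indexing matches the stated constant more cleanly than the paper's $\sum_{t=1}^{T}$). The one thing you should not leave implicit is the monotonicity claim that licenses your substitution: the right-hand side of the recursion is increasing in the stepwise coefficient $\submratio^{\text{g}}_t$ only when $\avgobj{\policyopt}\ge\curvature^{\text{g}}\,F_{t-1}$, where $F_{t-1}$ denotes the expected objective of the greedy prefix. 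This holds when $\curvature^{\text{g}}\le 1$ and the prefix value never exceeds $\avgobj{\policyopt}$ (true for a monotone objective such as the HLR teaching objective, where marginal gains are non-negative), but it is not a consequence of the bare Definitions~\ref{def:submratio}--\ref{def:curvature}; either state it as a hypothesis or verify it for the objective at hand — the paper itself implicitly needs a comparable benign condition for its own substitution step.
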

Note that \corref{cor:bound_general} generalizes the string submodular optimization result from \cite{zhang2016string} to the stochastic setting.  In particular, for the special case where $\submratio^{\text{g}}=\curvature^{\text{g}}=1$ and $\objective(\seqoflen{t}, \obsoflen{t})$ is independent of $\obsoflen{t}$, \corref{cor:bound_general} reduces to $\objective{(\sequence^{\text{g}}, \cdot)} \geq \left( 1-e^{-1}\right) \objective{(\sequence^*, \cdot)}$ where $\sequence^{\text{g}}, \sequence^*$ denote the sequences selected by the greedy and the optimal algorithm. However, constructing the bounds in \thmref{thm:bound_general} and \corref{cor:bound_general} requires us to compute $\submratio^{\text{g}}_t, \curvature^{\text{g}}_t$, which is as expensive as computing $\avgobj{\policyopt}$. In the following subsection, we investigate a specific learning setting, and provide a polynomial time approximation algorithm for computing the theoretical lower bound in \thmref{thm:bound_general}.

\subsection{Analysis for HLR memory model}\label{sec.algorithm.hlr}
Here, we consider the setting where the learner's memory for each concept $i \in [\numinstance]$ is captured by an independent HLR memory model. Concepts being independent means that the memory model $\RecallPrOf{i}{\tau, (\seqoflen{t}, \obsoflen{t})}$ for concept $i$ only depends on the history when flashcards for concept $i$ was shown.\footnote{We note that the hardness result of Theorem~\ref{thm:nphard} doesn't directly apply to this setting with independent concepts. Nevertheless, the problem setting is still computationally challenging. If we express the optimal solution using Bellman equations and apply dynamic programming, the number of states will be exponential in the number of concepts $\numinstance$ and polynomial w.r.t. time horizon $T$.}

More specifically, we consider the case of an HLR memory model with the following exponential forgetting curve \cite{settles2016trainable}: 
\begin{align}\label{eq:hlrmodel}
  \RecallPrOf{\instance}{\tau, (\seqoflen{t}, \obsoflen{t})} = 2^{-\frac{\tau-\ell_i}{h_i}},
\end{align}
where $\ell_i$ is the last time concept $i$ was taught, and $h_i = 2^{\langle \params_i, n_i\rangle}$ denotes the half life of the learner's recall probability of concept $i$. Here, $\theta_i = (a_i, b_i, c_i)$ parameterizes the learner's retention rate, and $n_i= (n^i_+, n^i_-, 1)$, 
where $n^i_+:=|\{\tau' \in [t]: \seqelement{\tau'} = i~\wedge~\obselement{\tau'} = 1\}|$ and $n^i_-:=|\{\tau' \in [t]: \seqelement{\tau'} = i~\wedge~\obselement{\tau'} = 0\}|$ denote the number of correct and incorrect recalls of concept $i$ in $(\seqoflen{t}, \obsoflen{t})$, respectively. Intuitively, $a_i$ scales $n^i_+$, $b_i$ scales $n^i_-$, and $c_i$ is an offset that can be considered as scaling time.

We would like to bound the performance of \algref{alg:greedy}. While computing $\submratio^{\text{g}}_t, \curvature^{\text{g}}_t$ is intractable in general, we show that one can efficiently approximate $\submratio^{\text{g}}_t, \curvature^{\text{g}}_t$ for the HLR model with $a_i=b_i$. 
\begin{theorem}\label{lem:bound_for_params}
  Assume that the learner is characterized by the HLR model (\eqnref{eq:hlrmodel}) where $\forall i,~a_i=b_i$. We can compute empirical bounds on $\submratio_t, \curvature_t$ in polynomial time.
\end{theorem}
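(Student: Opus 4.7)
The plan rests on two structural consequences of the assumption $a_i = b_i$. First, the half-life $h_i = 2^{a_i(n^i_+ + n^i_-) + c_i}$ depends only on the total number of times concept $i$ has been shown, so $\RecallPrOf{i}{\tau, (\seqoflen{t}, \obsoflen{t})}$, and therefore $\objective$, are deterministic functions of $\sequence$ alone. The expectation over $y_t$ in Eq.~\eqref{eq:marginal_gain_item} disappears, yielding $\gain{i \given \sequence, \obshistory} = \objective(\sequence \oplus i) - \objective(\sequence)$. Second, the independence of the concepts assumed in this subsection gives the decomposition $\objective(\sequence) = \tfrac{1}{\numinstance T}\sum_i \objective_i(\sequence)$, where $\objective_i$ depends only on the positions at which concept $i$ appears in $\sequence$.

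For $\submratio^{\text{g}}_t$, the plan is to compute the value exactly by enumeration. The numerator $\gain{i \given \sequence}$ is a sum of $T - |\sequence|$ exponentials in closed form and evaluable in $O(T)$ time. For the denominator $\gain{i \given \sequence \oplus \sequence'}$, the key observation is that its value depends on $\sequence'$ only through the triple $(m, k, \ell) := \bigl(|\sequence'|,\, k_i^{\sequence \oplus \sequence'},\, \ell_i^{\sequence \oplus \sequence'}\bigr)$, since these three numbers determine the new half-life, the insertion position, and the remaining horizon. There are $O(T^3)$ feasible triples (subject to $k \in [k_i^{\sequence}, k_i^{\sequence} + m]$ and $\ell \in \{\ell_i^{\sequence}, |\sequence|+1, \dots, |\sequence|+m\}$), and for each one the gain is given in closed form. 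Taking the maximum over the triples and the minimum over $i \in \Instances$ yields $\submratio(\sequence, \obshistory)$ in polynomial time, and iterating this along the (now deterministic) greedy trajectory recovers every $\submratio^{\text{g}}_t$.

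For $\curvature^{\text{g}}_t$, the quantity does not separate across concepts because the same $\sequence'$ is used for every $i$. The plan is to relax. Writing $\objective(\sequence \oplus \sequence') - \objective(\sequence') = \sum_i \bigl[\objective_i(\sequence \oplus \sequence') - \objective_i(\sequence')\bigr]$ and noting that each per-concept term $X_i$ depends on $\sequence'$ only through the positions $Q_i \subseteq [|\sequence'|]$ at which $i$ appears, I use the inequality $\min_{\sequence'} \sum_i X_i(\sequence') \ge \sum_i \min_{Q_i, m_i} X_i(Q_i, m_i)$ to replace the joint minimization by independent per-concept minimizations. Each per-concept minimum is then solved by a dynamic program with state $(\tau, k_i, \ell_i)$, where at each $\tau \le m_i$ one decides whether to include $\tau$ in $Q_i$ and accrues the corresponding per-step difference, while for $\tau > m_i$ the state freezes and the tail is summed in closed form. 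The DP has $O(T^3)$ states per concept, so the whole procedure produces a valid upper bound on $\curvature(\sequence, \obshistory)$ in polynomial time.

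The main obstacle is the curvature step: unlike $\submratio$, which is controlled by a single concept at a time, $\curvature$ is a global quantity coupling all concepts through the shared sequence $\sequence'$, and so cannot be attacked concept-by-concept without relaxation. The inequality above sacrifices tightness but preserves validity as an upper bound; the HLR structure under $a_i = b_i$ is essential here, since it is precisely what lets the per-concept objective $\objective_i$ be summarized by the finite state $(\tau, k_i, \ell_i)$ and hence handled by polynomial-time DP rather than by exponential search over $\sequence'$.
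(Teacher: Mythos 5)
Your proposal is correct, and on the submodularity half it is essentially the paper's argument: the paper also exploits that $a_i=b_i$ makes $\objective$ observation\-/independent (so the greedy trajectory is deterministic and $\submratio^{\text{g}}_t,\curvature^{\text{g}}_t$ can be evaluated at a single history), and that the future gain of concept $i$ depends on the extension only through its length and the count/last\-/occurrence of $i$. The only difference is that the paper proves a small dominance lemma --- among all extensions of a given length containing $s$ insertions of $i$, front\-/loading those insertions maximizes the future gain --- which lets \algref{alg:submratio} enumerate just the pairs (extension length, number of insertions), i.e.\ $O(T^2)$ candidates instead of your $O(T^3)$ triples $(m,k,\ell)$; both procedures return the same (indeed exact) value of $\submratio^{\text{g}}_t$. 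Where you genuinely diverge is the curvature. The paper never decouples concepts there: it writes $\curvature_t \le 1+\max_{\policy}\{(\objective(\seqpolicy,\cdot)-\objective(\seqgreedyoflen{t}\oplus\seqpolicy,\cdot))/\objective(\seqgreedyoflen{t},\cdot)\}$, splits the sums over $\tau$, drops an aggregate block that is provably nonpositive, and bounds the competitor's late\-/horizon recall $\sum_{\tau=\horizon-t+1}^{\horizon}\RecallPrOf{i}{\tau+1,\seqpolicyoflen{\tau},\cdot}$ by the same sum under the ``teach $i$ at every step'' sequence, yielding the closed\-/form bound \eqnref{eq:empirical_bound_curvature} that is evaluable essentially in linear time. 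Your route --- the decoupling inequality $\min_{\sequence'}\sum_i X_i(\sequence')\ge\sum_i\min_{Q_i,m_i}X_i(Q_i,m_i)$ followed by a per\-/concept dynamic program over (position, count, last occurrence) --- is also a valid polynomial\-/time upper bound on $\curvature_t$, since relaxing the joint constraint that the $Q_i$ arise from a single sequence $\sequence'$ only moves the bound in the safe direction, and it is potentially tighter than the paper's cruder analytic relaxation, at the cost of $O(nT^3)$ DP states versus a closed form. Both approaches then plug the resulting empirical bounds into \thmref{thm:bound_general}, so your overall proof strategy is sound; the comparison is tightness-plus-cost (your DP) versus simplicity (the paper's analytic bound).
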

\thmref{lem:bound_for_params} shows that it is feasible to compute explicit lower bounds on the utility of \algref{alg:greedy} against the maximal achievable utility. The following theorem shows that for certain types of learners, the algorithm is guaranteed to achieve a high utility.

\begin{theorem}\label{prop:bound_a_eq_b}
  Consider the task of teaching $\numinstance$ concepts where each concept is following an independent HLR memory model sharing the same parameters, i.e., $\forall~i, \theta_i = (a, a, 0)$. A sufficient condition for the algorithm to achieve $1-\epsilon$ utility is $a \geq \max\left\{ \log T, \log \left(3n \right), \log \left(\frac{2n^2} {\epsilon T} \right) \right\}$, where the parameter $a$ essentially captures the learner's memory strength.
\end{theorem}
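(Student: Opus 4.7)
The strategy is a direct analysis of $\avgobj{\policygreedy}$ under the HLR model with $\theta_i=(a,a,0)$. The pivotal simplification from $a=b$ is that the half-life $h_i=2^{a(n_+^i+n_-^i)}=2^{a k_i}$ depends only on the total number of presentations $k_i$, not on the outcomes $y_t$; hence the learner's dynamics, and therefore the greedy policy itself, are deterministic. The plan is to (i) characterize greedy's schedule, (ii) decompose the per-concept recall sum cycle by cycle, and (iii) assemble the three conditions on $a$ to obtain $1-\epsilon$.

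First I would show that under the stated conditions greedy presents each of the $n$ concepts exactly once during the first $n$ steps and then round-robin reteaches the "oldest" concept thereafter. At step $t\leq n$, the marginal gain of showing a novel concept is of order $T-t$, since its recall jumps from $2^{-(\tau+1)}\approx 0$ to $2^{-(\tau+1-t)/2^a}\approx 1$ for every $\tau\geq t$; whereas a re-teach of an already-shown concept yields only $O(T^2/2^a)$, obtained by applying $2^{-x}\geq 1-x\ln 2$ to the telescoping difference of two recall curves whose half-lives are $2^{ka}$ and $2^{(k+1)a}$. The conditions $a\geq\log T$ and $a\geq\log(3n)$ are what make the novel-concept gain dominate uniformly across all untaught candidates and all $t\leq n$, and an analogous comparison at $t>n$ identifies the concept with the oldest last-presentation as the greedy pick. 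Given this round-robin structure, I would split
\[\sum_{\tau=1}^T \RecallPrOf{i}{\tau+1,\cdot}\]
into a pre-teaching part bounded by $\sum_{\tau<i}2^{-(\tau+1)}\leq 1/2$ and a sequence of cycles of length at most $n$; in the $k$th cycle the half-life is $2^{ka}$, so each term is at least $1-n\ln 2/2^{ka}$, and summing these as a geometric series bounds the total decay loss per concept by $O(n^2/2^a)$.

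Combining over all $n$ concepts and normalizing by $nT$ yields $\avgobj{\policygreedy}\geq 1-O\!\left(n^2/(T\cdot 2^a)\right)$ once the pre-teaching contribution is accounted for carefully. The third condition $a\geq\log\!\bigl(2n^2/(\epsilon T)\bigr)$ is precisely calibrated to make this decay term at most $\epsilon/2$; combined with the slack furnished by the first two conditions, this delivers $\avgobj{\policygreedy}\geq 1-\epsilon$. The main obstacle is establishing the greedy schedule rigorously in Step 1: the novel-vs-reteach comparison must remain favorable throughout the first $n$ steps even as the number of competing reteach candidates grows, and at subsequent steps the greedy identification of the oldest concept requires uniform marginal-gain bounds across the varying half-lives $2^{ka}$. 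The three logarithmic slacks on $a$ are each tuned to carry a distinct estimate, and the bookkeeping must respect all three simultaneously.
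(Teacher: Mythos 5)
Your proposal follows essentially the same route as the paper's proof: establish that under the stated parameter conditions the greedy policy coincides with round robin (the $a\geq\log T$ condition playing exactly this role), then lower-bound the round-robin utility by summing per-concept, per-cycle recall deficits as a geometric series driven by the growing half-lives $2^{ka}$, and finally calibrate the remaining conditions $a\geq\log(3n)$ and $a\geq\log\left(\frac{2n^2}{\epsilon T}\right)$ so the accumulated deficit stays below $\epsilon$. The paper packages the geometric-deficit step through the ratios $\frac{1-p_{i,r+1}}{1-p_{i,r}}$ together with the auxiliary constraint $1-p_{i,2}\leq\frac{1-p_{i,1}}{2}$ rather than your linearization $2^{-x}\geq 1-x\ln 2$, but this is the same argument in slightly different form, and both sketches leave the greedy-equals-round-robin step at a comparable level of rigor.
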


Note that \thmref{prop:bound_a_eq_b} provides a sufficient condition for our algorithm to achieve a high utility. One interesting open question is to establish an upper bound for the greedy (or the optimal) algorithm under particular model configurations, e.g., to provide a necessary condition for achieving a certain target utility under the HLR model.

\section{Simulations}\label{sec.simulations}
In this section, we experimentally evaluate our algorithm by simulating learner responses based on a known memory model. This allows us to inspect the behavior of our algorithm and compare it with several baseline algorithms in a controlled setting. 


\subsection{Experimental setup}
\paragraph{Dataset}
We simulated concepts of two different types: ``easy'' and ``difficult''. The learner's memory for each concept is captured by an independent HLR model. Concepts of the same type share the same parameter configurations. Specifically, for ``easy'' concepts the parameters are $\theta_1=(a_1=10, b_1=5, c_1=0)$, and for ``difficult'' concepts  the parameters are $\theta_2 = (a_2=3, b_2=1.5, c_2=0)$, with the following interpretation in terms of recall probabilities.
For ``easy'' concepts, the recall probability of a concept $i$ drops to
$\RecallPrOf{i}{\tau=2, (\sequence_1 = i, \observation_1 = 1)} = 2^{-1/(2^{a_1+c_1})}=0.99$ and $\RecallPrOf{i}{\tau=2, (\sequence_1 = i, \observation_1 = 0)} = 2^{-1/(2^{b_1+c_1})} = 0.98$
in the immediate next step after showing concept $i$. For ``difficult'' concepts these probabilities are $(0.92, 0.78)$.

\paragraph{Evaluation metric} \looseness -1 We consider two different criteria when assessing the performance. Our first evaluation metric is the objective value as defined in \eqnref{eq:objective}, which measures the learner's average cumulative recall probability \emph{across} the entire teaching session. The second evaluation metric is the learner's average recall probability at the \emph{end} of the teaching session. We call this second objective ``Recall at $T+s$'', where $s > 0$ denotes how far in the future we choose to evaluate the learner's recall.

\paragraph{Baselines} To demonstrate the performance of our adaptive greedy policy (referred to as \textsf{GR}), we consider three baseline algorithms. The first baseline, denoted by \textsf{RD}, is a random teacher that presents a random concept at each time step. The second baseline, denoted by \textsf{RR}, is a round robin teaching policy that picks concepts according to a fixed round robin schedule, i.e., iterating through concepts at each time step. Our third baseline is a variant of the teaching strategy employed by 
\cite{settles2016trainable}, 
which can be considered as a generalization of the popular Leitner and Pimsleur systems \cite{leitner1972so,pimsleur1967memory}. 
At each time step, the teacher chooses to display the concept with the lowest recall probability according to the HLR memory model of the learner. We refer to this algorithm as \textsf{LR}.

\subsection{Simulation results}
We first evaluate the performance as a function of the teaching horizon $T$.
In \figref{fig:obj_vs_T} and \figref{fig:rec_vs_T}, we plot the objective value and average recall at $T+s$ for all algorithms over 10 random trials, where we set $s=10$, $n=20$ with half easy and half difficult concepts, and vary $T\in [40, 80]$.  As we can see from both plots, \textsf{GR} consistently outperforms baselines in all scenarios. 
The gap between the performances of \textsf{GR} and the baselines is more significant for smaller $T$. As we increase the time budget, the performance of all algorithms improves---this behavior is expected, as it corresponds to the scenario where all concepts get a fair chance of repetition with abundant time budget. 
In \figref{fig:obj_vs_m} and \figref{fig:rec_vs_m}, we show the performance plot for a fixed teaching horizon of $T=60$ when we vary the number of concepts $n\in [10, 30]$. Here we observe a similar behavior as before---\textsf{GR} is consistently better; as $n$ increases, the gap between the performances of \textsf{GR} and the baselines becomes more significant. 
Our results suggest that the advantage of \textsf{GR} is most pronounced for more challenging settings, i.e., when we have a tight time budget (small $T$) or a large number of concepts (large $n$).

\begin{figure*}[t!]
  \centering
  \scalebox{1}{
  \begin{subfigure}[b]{.245\textwidth}
    \centering
    {
      \includegraphics[trim={0pt 10pt 0pt 0pt}, width=\textwidth]{./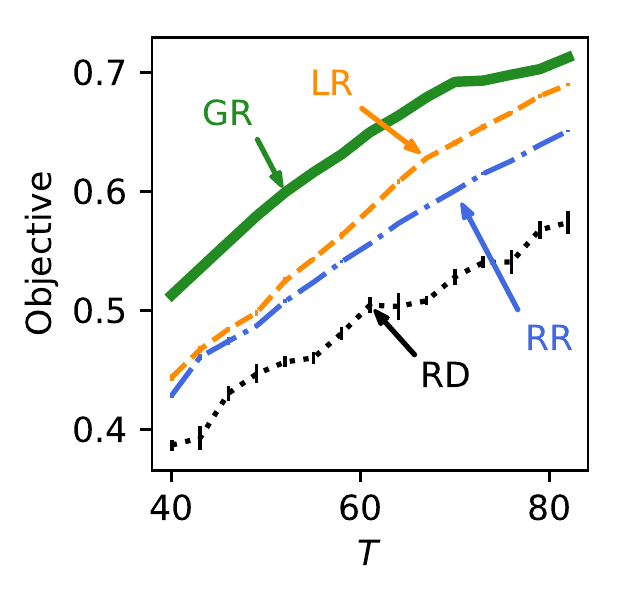}
      \caption{$F$ vs. $T$ ($n=20$)}
      \label{fig:obj_vs_T}
    }
  \end{subfigure}
      
    \begin{subfigure}[b]{.245\textwidth}
    \centering
    {
      \includegraphics[trim={0pt 10pt 0pt 0pt}, width=\textwidth]{./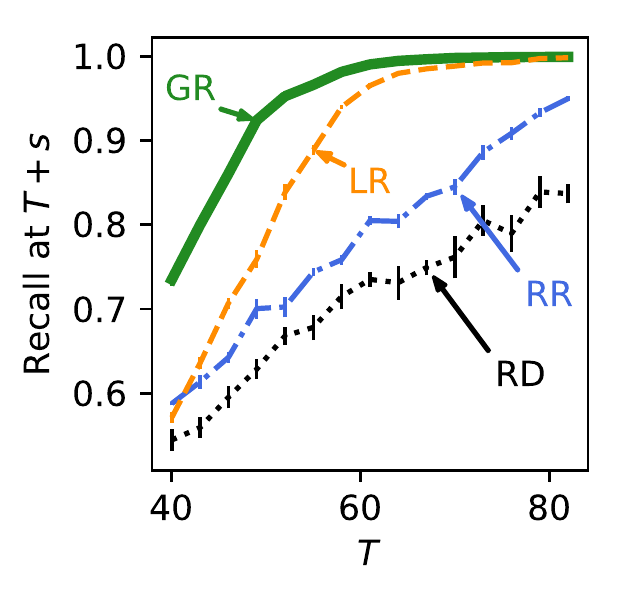}
      \caption{Recall vs. $T$ ($n=20$)}
      \label{fig:rec_vs_T}
    }
  \end{subfigure}

  \begin{subfigure}[b]{.245\textwidth}
    \centering
    {
      \includegraphics[trim={0pt 10pt 0pt 0pt}, width=\textwidth]{./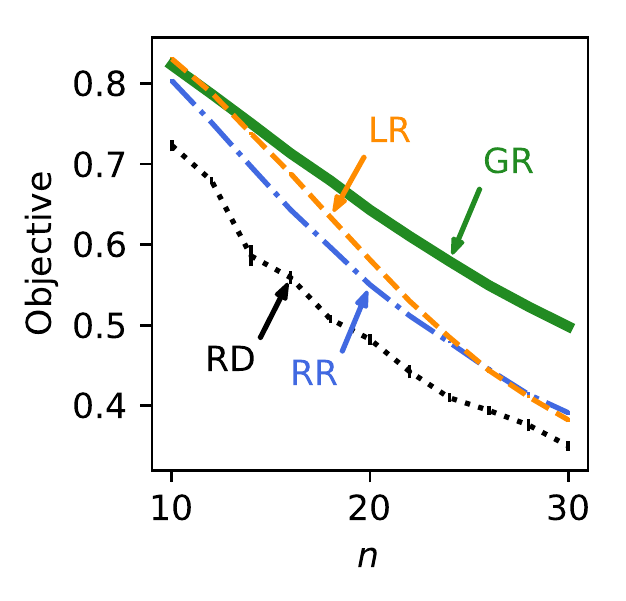}
      \caption{$F$ vs. $n$ ($T=60$)}
      \label{fig:obj_vs_m}
    }
  \end{subfigure}

  \begin{subfigure}[b]{.245\textwidth}
    \centering
    {
      \includegraphics[trim={0pt 10pt 0pt 0pt}, width=\textwidth]{./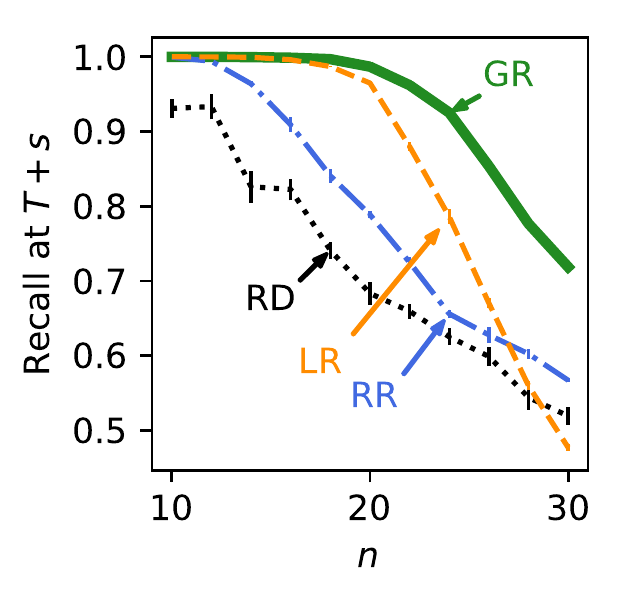}
      \caption{Recall vs. $n$ ($T=60$)}
      \label{fig:rec_vs_m}
    }
  \end{subfigure}
  }
    \caption{Simulation results comparing our algorithm (\textsf{GR}) and three baseline algorithms (\textsf{RD}, \textsf{RR}, and \textsf{LR}). Two performance metrics are considered: (i) the objective value in \eqnref{eq:objective} and (ii) recall at the end of the teaching session denoted at `Recall at $T+s$'' with $s = 10$.}
  \label{fig:exp:simulation}
\end{figure*}


\section{User Study}\label{sec.userstudy}


We have developed online apps for two concrete real-world applications: (i) German vocabulary teaching \cite{teachinggerman}, and (ii) teaching novices to recognize animal species from images, motivated by citizen science projects for biodiversity monitoring \cite{teachingbird}. 
Next, we briefly introduce the datasets used for these two apps and 
then present the user study results of teaching human learners.

\subsection{Experimental setup}\label{sec:userstudy:datasets}
\paragraph{Dataset}
For the German vocabulary teaching app, we collected 100 English-German word pairs in the form of flashcards, each associated with a descriptive image. These word pairs were provided by a language expert (see \cite{germanwordoftheday}) and consist of popular vocabulary words taught in an entry-level German language course. For the biodiversity teaching app, we collected images of 50 animal species. To extract a fine-grained signal for our user study, we further categorize the Biodiversity dataset into two difficulty levels, namely ``common'' and ``rare'', based on the prevalence of these species. Examples from both datasets are provided in the supplemental materials. 

For real-world experiments, we do not know the learner's memory model. While it is possible to fit the HLR model through an extensive pre-study as in \cite{settles2016trainable}, we instead simply choose a fixed set of parameters.
For the Biodiversity dataset, we set the parameters of each concept based on their difficulty level. Namely, we set $\theta_1=(10, 5, 0)$ for ``common'' (i.e., easy) species and $\theta_2=(3, 1.5, 0)$ for ``rare'' (i.e., difficult) species, as also used in our simulation. For the German dataset, since the parameters associated with a concept (i.e., vocabulary word) depend heavily on learner's prior knowledge, we chose a more robust set of parameters for each of the concepts given by $\theta=(6, 2, 0)$. We defer the details of our sensitivity study of the HLR parameters to the supplemental materials.


\paragraph{Online teaching interface}
 Our apps provide an online teaching interface where a user (i.e., human learner) can participate in a ``teaching session''. As in the simulations, here each session corresponds to teaching $n$ concepts (sampled randomly from our dataset) via flashcards over $T$ time steps. We demonstrate the teaching interface and present the detailed design ideas in the supplemental materials.

\begin{table*}[!t]
  \centering
    \scalebox{0.8}{
        \begin{tabularx}{.927\linewidth}{c|cccc|cccc}
          \toprule
          & \multicolumn{4}{|c|}{German}
          & \multicolumn{4}{|c}{Biodiversity}
          \\\cmidrule{2-9}
            & \textsf{GR} & \textsf{LR} & \textsf{RR} & \textsf{RD}
          & \textsf{GR} & \textsf{LR} & \textsf{RR}  & \textsf{RD}
          \\\midrule
          avg gain
          & {0.572} & 0.487 & 0.462 & 0.467
                                        & {0.475} & 0.411 & 0.390 & 0.251
          \\ \midrule
          $p$-value 
            & -- & 0.0652 & 0.0197  & 0.0151
          & -- & 0.0017 & $<$0.0001 & $<$0.0001
          \\ \bottomrule
        \end{tabularx}
        }
        \medskip
        \scalebox{0.8}{
        \begin{tabularx}{.927\linewidth}{c|cccc|cccc}
          \toprule
          & \multicolumn{4}{|c|}{Biodiversity (common)}
          & \multicolumn{4}{|c}{Biodiversity (rare)}
          \\\cmidrule{2-9}
            & \textsf{GR} & \textsf{LR} & \textsf{RR} & \textsf{RD}
          & \textsf{GR} & \textsf{LR} & \textsf{RR}  & \textsf{RD}
          \\\midrule
          avg gain
          & {0.143} & 0.118 & {0.150} & 0.086
          & {0.766} & 0.668 & 0.601 & 0.396
          \\ \midrule
          $p$-value 
            & -- & 0.3111 &  0.8478 & 0.0047
            & -- & 0.0001 & $<$0.0001 & $<$0.0001
          \\ \bottomrule
        \end{tabularx}
      }
      \caption{Summary of the user study results. Here, the performance is measured as the gain in learner's performance from prequiz phase to postquiz phase (see main text for details).  We have $n=15, T=40$, and ran algorithms with a total of $80$ participants for German app and $320$ participants for Biodiversity app.}\label{fig:userstudy:statistics}
\end{table*}

\subsection{User study results}
\paragraph{Results for German}
We now present the user study results for our German vocabulary teaching app \cite{teachinggerman}. We run our candidate algorithms with $n=15, T=40$ on a total of $80$ participants (i.e., $20$ per algorithm) recruited from Amazon Mechanical Turk. 
Results are shown in \tableref{fig:userstudy:statistics}. where we computed the average gain of each algorithm, and performed statistical analysis on the collected results. The first row (\emph{avg gain}) is obtained by treating the performance for each (participant, word) pair as a separate sample (e.g., we get $20\times 15$ samples per algorithm for the German app). The second row (\emph{$p$-value}) indicates the statistical significance of the results measured by 
the $\chi^2$ tests \cite{cochran1952chi2} (with contingency tables where rows are algorithms and columns are observed outcomes),
when comparing GR with the baselines. Overall, \textsf{GR} achieved higher gains compared to the baselines.

\paragraph{Results for Biodiversity}
\looseness -1 Next, we present the user study results on our Biodiversity teaching app \cite{teachingbird}.
We recruited a total of 320 participants (i.e., 80 per algorithm). Here, we used different parameters for the learner's memory as described in \secref{sec:userstudy:datasets}; all other conditions (i.e., $n=15$, $T=40$, and interface) were kept the same as for the German app.
In \tableref{fig:userstudy:statistics}, in addition to the overall performance of the algorithms across all concepts, we also provide separate statistics on teaching the ``common'' and ``rare'' concepts.
Note that, while the performance of \textsf{GR} is close to the baselines when teaching the ``common" species (given the high prequiz score due to learner's prior knowledge about these species), \textsf{GR} is significantly more effective in teaching the ``rare'' species. 

\paragraph{Remarks}
\looseness -1 This user study provides a proof-of-concept that the performance of our algorithm \textsf{GR} demonstrated on simulated learners is consistent with the performance observed on human learners. 
While teaching sessions in our current user study were limited to a span of 25 mins with participants recruited from Mechanical Turk, we expect that the teaching applications we have developed could be adapted to real-life educational scenarios for conducting long-term studies.

\section{Related Work}\label{sec.relatedwork}

\paragraph{Spaced repetition and memory models}
Numerous studies in neurobiology and psychology have emphasized the importance of the \emph{spacing} effects in human learning. The spacing effect is the observation that spaced repetition (i.e., introducing appropriate time gaps when learning a concept) produces greater improvements in learning compared to massed repetition (i.e., ``cramming'') \cite{tzeng1973stimulus}.
These findings have inspired many computational models of human memory, including the Adaptive Character of Thought–Rational model (ACT-R)~\cite{pavlik2005practice}, the Multiscale Context model (MCM)~\cite{pashler2009predicting}, and the Half-life Regression model (HLR)~\cite{settles2016trainable}. 
In particular, HLR is a trainable spaced repetition model, which can be viewed as a generalization of the popular Leitner \cite{leitner1972so} and Pimsleur \cite{pimsleur1967memory} systems. 
In this paper, we adopt a variant of HLR to model the learner. One of the key characteristics of these memory models is the function used to model the forgetting curve. Power-law and exponential functions are two popular ways of modeling the forgetting curve (for detailed discussion, see \cite{rubin1996one,wickens1999measuring,pavlik2005practice,walsh2018mechanisms}).


\paragraph{Optimal scheduling with spaced repetition models}
\cite{khajah2014maximizing} and \cite{lindsey2014improving} studied the ACT-R model and the MCM model respectively for the optimal review scheduling problem where the goal is to maximize a learner’s retention through an intelligent review scheduler.
One of the key differences between their setting and ours is that, they consider a fixed curriculum of new concepts to teach, and the scheduler additionally chooses which previous concept(s) to review at each step; whereas our goal is to design a complete teaching curriculum. 
Even though the problem settings are somewhat different, we would like to note that our theoretical framework can be adapted to their setting. 
Recently, \cite{reddy2016unbounded} presented a queuing model for flashcard learning based on the Leitner system and consider a ``mean-recall approximation" heuristic to tractably optimize the review schedule. 
One limitation is that their approach does not adapt to the learner’s performance over time. Furthermore, the authors leave the problem of obtaining guarantees for the original review scheduling problem as a question for future work.
\cite{tabibian2019enhancing} considered optimizing learning schedules in continuous time for a single concept, and use control theory to derive optimal scheduling to minimize a penalized recall probability area-under-the-curve loss function. In addition to being discrete time, the key difference of our setting is that we aim to teach multiple concepts.

\paragraph{Sequence optimization} 
Our theoretical framework is inspired by recent results on sequence submodular function maximization \cite{zhang2016string,tschiatschek2017selecting} and adaptive submodular optimization~\cite{golovin2011adaptive}.  
In particular, \cite{zhang2016string} introduced
the notion of string submodular functions, which, analogous to the classical notion of submodular set functions \cite{krause14survey}, enjoy similar performance guarantees for maximization of deterministic sequence functions. Our setting has two key differences in that we focus on the stochastic setting with potentially non-submodular objective functions. 
In fact, our theoretical framework (in particular \corref{cor:bound_general}) generalizes string submodular function maximization to the adaptive setting.



\paragraph{Forgetful learners in machine learning}
Here, we highlight the differences with some recent work in the machine learning literature involving forgetful learners. 
In particular, \cite{zhou2018unlearn} aimed to teach the learner a binary classifier by sequentially providing training examples, where the learner has an exponential decaying memory of the training examples. In contrast, we study a different problem, where we focus on teaching multiple concepts, and assume that the learner's memory of each concept decays over time.  
\cite{kirkpatrick2017overcoming} explored the problem of how to construct a neural network for learning multiple concepts. Instead of designing the optimal training schedule, their goal is to design a good learner that suffers less from the forgetting behavior.


\paragraph{Machine teaching}
Our work is also closely related to machine/algorithmic teaching literature (e.g., \cite{DBLP:journals/corr/ZhuSingla18,zhu2015machine,singla2014near,goldman1995complexity}).   Most of these works in machine teaching consider a non-adaptive setting where the teacher provides a batch of teaching examples at once without any adaptation. In this paper, we focus primarily on designing interactive teaching algorithms that adaptively select teaching examples for a learner based on their responses. The problem of adaptive teaching has been studied recently (e.g., \cite{DBLP:conf/ijcai/KamalarubanDCS19,DBLP:conf/aaai/YeoKSMAFDC19,haug2018teaching,chen2018understanding,liu2017iterative,singla2013actively}). However, these works in machine teaching have not considered the phenomena of forgetting.
\cite{patil2014optimal,nosofsky2019model}  have studied the problem of concept learning and machine teaching when learner has ``limited-capacity" in terms of retrieving exemplars in memory during the decision-making process. They model the learner  via the Generalized Context Model \cite{nosofsky1986attention} and investigated the problem of choosing the optimal exemplars for teaching a \emph{classification} task. In our setting, the exemplars for each class are already given (in other words, we have only one exemplar per class), and we aim at optimally teaching the learner to \emph{memorize} the (label of) exemplars.

\section{Conclusions}\label{sec.conclusions}

We presented an algorithmic framework for teaching multiple concepts to a forgetful learner. 
We proposed a novel discrete formulation of teaching based on stochastic sequence function optimization, and provided a general theoretical framework for deriving performance bounds.
We have implemented teaching apps for two real-world applications. 
We believe our results have made an important step towards bringing the theoretical understanding of algorithmic teaching closer to real-world applications where the forgetting phenomenon is an intrinsic factor.


\subsubsection*{Acknowledgements}
This work was done when Yuxin Chen and Oisin Mac Aodha were at Caltech. This work was supported in part by NSF Award \#1645832, Northrop Grumman, Bloomberg, AWS Research Credits, Google as part of the Visipedia project, and a Swiss NSF Early Mobility Postdoctoral Fellowship.



{
\bibliography{references}
}

\iftoggle{longversion}{
\onecolumn
\appendix
{\allowdisplaybreaks
\section{List of Appendices}\label{appendix:table-of-contents}
In this section we provide a brief description of the content provided in the appendices of the paper.   
\begin{itemize}
\item Appendix~\ref{app:robustness} provides details of the sensitivity analysis of our model.
\item Appendix~\ref{app:userstudy} provides details of the user study.
\item Appendix~\ref{app:performanceanalysis} provides further performance analysis for the greedy algorithm when teaching an HLR learner.
\item Appendix~\ref{sec.appendix} provides proofs of the theoretical results.
\end{itemize}

\section{Robustness and Sensitivity Analysis}\label{app:robustness}
We conducted a sensitivity study on simulated learners before choosing the HLR parameters for our user study. 
These detailed results are demonstrated in \figref{fig:robustness}.

\begin{figure}[h]
\centering
\includegraphics[clip,width=0.36\textwidth]
  {./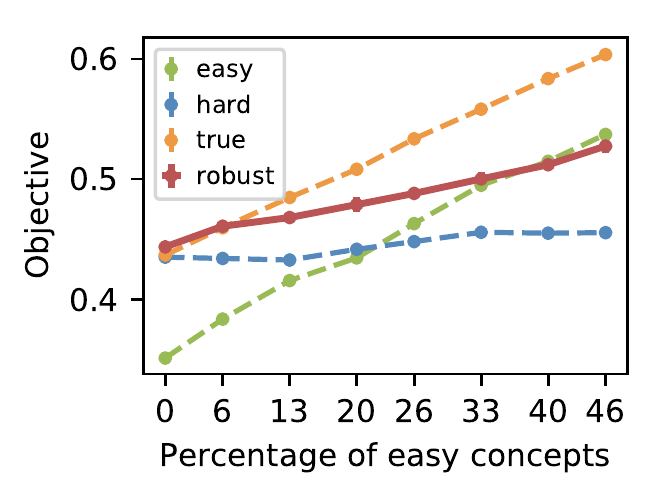}
  \qquad
    \includegraphics[clip,width=0.36\textwidth]{./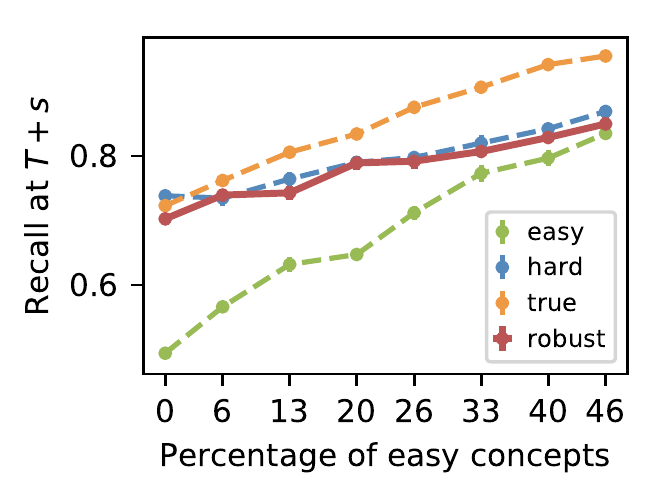}
    \caption{Sensitivity analysis of our teaching algorithm}\label{fig:robustness}
\end{figure}
    
In this experiment, we consider two groups of concepts: ``easy/common'' concepts with $\theta=(10, 5, 0)$, and ``hard/rare''concepts with $\theta=(3, 1.5, 0)$.
Other configurations are kept the same as our user study, with $T=40$, $n=15$ and $s=10$. 

We vary the number of ``easy'' concepts from $\{0, 1, \dots, 8\}$ (i.e., up to 50\% of the concepts being easy), and consider four types of teachers: (i) ``easy'': $\theta=(10, 5, 0)$ for all concepts; (ii) ``hard'': $\theta=(3, 1.5, 0)$ for all concepts; (iii) ``true'': using the true parameters for each concept; (iv) ``robust'': $\theta=(6, 2, 0)$ for all concepts. We plot the performances of these different teachers measured by the two metrics considered in simulations (i.e., the objective value and future recall). As shown in the figures, the ``robust'' teacher performs well on both metrics, and hence is used for our user study on the German dataset.
\section{User Study}\label{app:userstudy}


\subsection{Online teaching interface}
\begin{figure}[t!]
  \centering
      \includegraphics[trim={20pt 15pt 0pt 0pt}, width=.8\textwidth]
      {./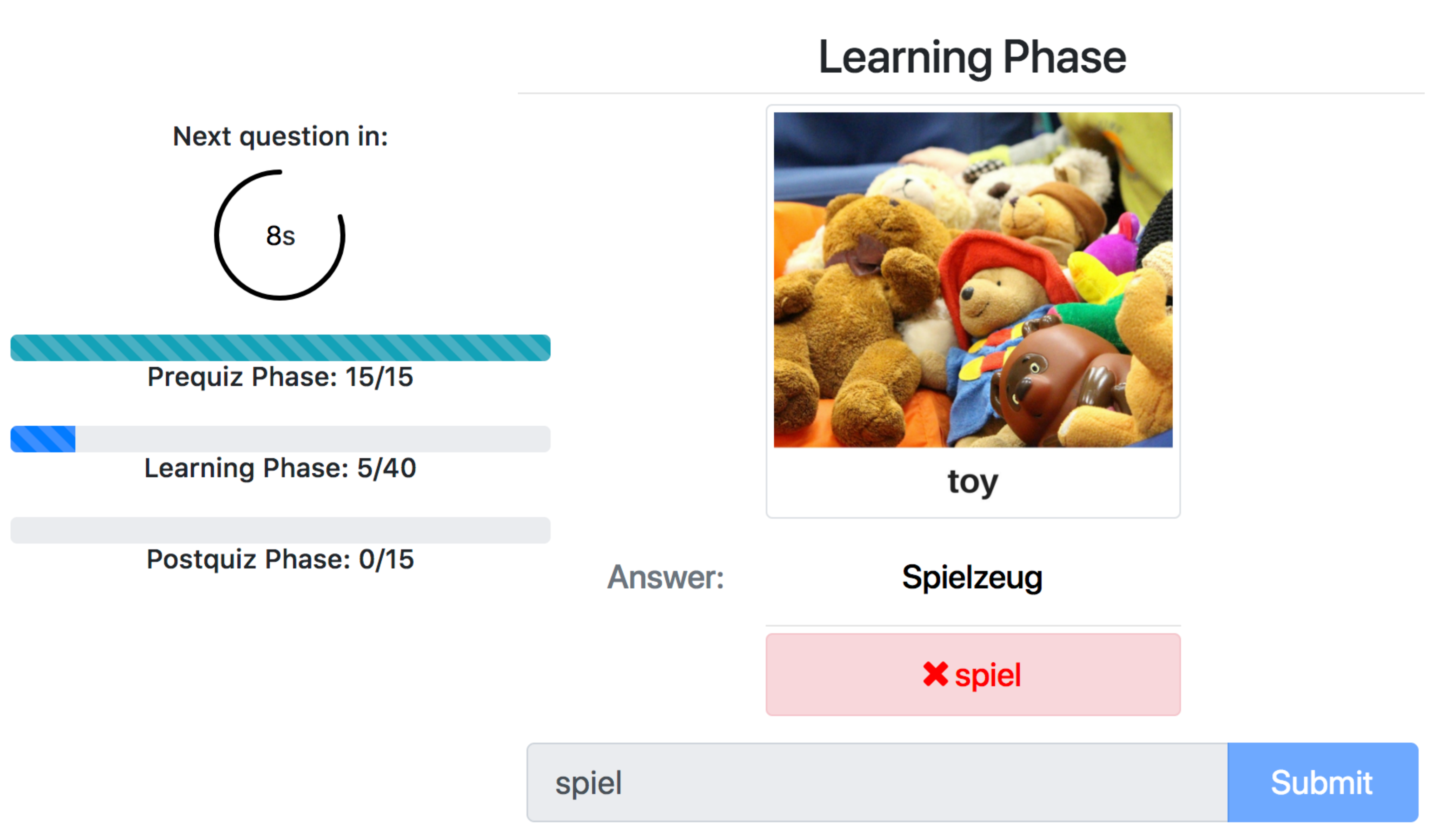}
  \vspace{3mm}
  \caption{The teaching interface of our online apps (German and Biodiversity).}\label{fig:german_interface}
\end{figure}
We set up a simple and adaptive interface to keep the learners engaged in our user study (see \figref{fig:german_interface}). 
To establish a setup that accurately reflects our modeling assumptions, we integrate the following design ideas.

An important component of the user evaluation is to understand the learner's bias (or prior knowledge), which we cannot easily assess purely based on the learner's inputs during the learning phase. To resolve this issue, we introduce a prequiz phase (before the learning phase starts) where we test the learner's knowledge of concepts in the session by asking them to provide answers for all $n$ concepts. After the learning phase, the learner will enter a postquiz (i.e., testing) phase. By recording the change in the learner's performance from prequiz to postquiz phase, we can estimate the gain of the teaching session.

In order to align the online teaching session with our discrete-time problem formulation, we impose a minimum and maximum time window for each flashcard presentation during the learning phase. In particular, a participant has a maximum time of $20$ seconds to provide input, and has $10$ seconds to review the correct answer provided by the teacher.


\looseness -1 Another important aspect is the short-term memory effect. In general, it is non-trivial to carry out large scale user studies that span over weeks/months (even though it better fits the HLR model of the learner). Given the physical constraints of our real-world experiments, we consider shorter teaching sessions of $25$ mins in duration, involving the teaching of $n=15$ concepts for a total number of $T=40$ time steps. To mitigate the short-term memory effect present in our experiments, we impose an additional constraint for the user study\footnote{An alternative way to mitigate the short-term memory effect is to introduce a small break between two teaching iterations.}, such that the algorithms do not pick the same concept for two consecutive time steps (otherwise, a learner will simply ``copy'' the answer she sees on the previous screen).

\subsection{User study results}\label{app:userstudyresults}
\figref{fig:german_postquiz} and \figref{fig:biodiversity_postquiz} illustrates the distribution of learners' performances. Even though some learners failed to achieve good performance, \textsf{GR} managed to teach a larger fraction of learners to achieve better performance compared to the baselines---this suggests that our algorithm is an effective strategy for teaching vocabulary.
\begin{figure*}[!ht]
  \centering
  \begin{subfigure}[b]{0.23\textwidth}
    {
      \includegraphics[trim={0pt 15pt 0pt 0pt}, width=\textwidth]{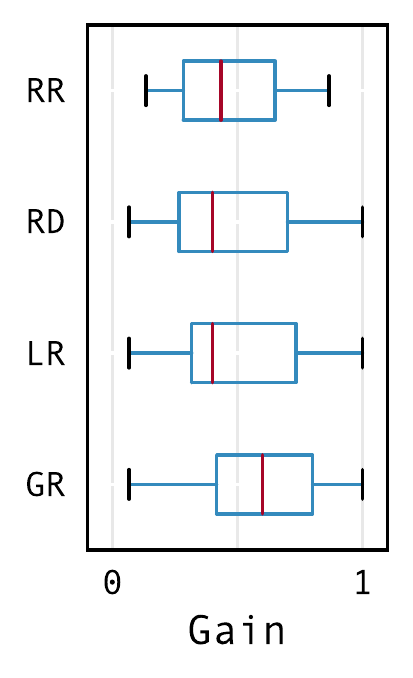}
      \vspace{0.5mm}
      \caption{German}
      \label{fig:german_postquiz}
    }
  \end{subfigure}
  \qquad
  \begin{subfigure}[b]{0.23\textwidth}
    {
      \includegraphics[trim={0pt 15pt 0pt 0pt}, width=\textwidth]{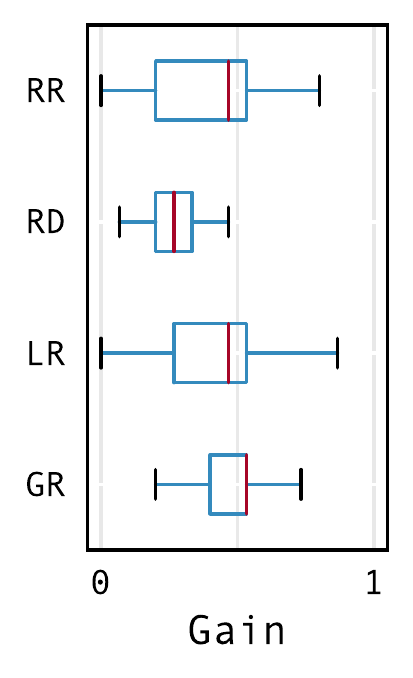}
      \vspace{0.5mm}
      \caption{Biodiversity}
      \label{fig:biodiversity_postquiz}
    }
  \end{subfigure}\\~\\
  \caption{User study results}
  \label{fig:exp:userstudy}
\end{figure*}

\clearpage
\subsection{Datasets}\label{app:dataset}
In this subsection, we show a few samples from both the German dataset (for the German vocabulary teaching app) in Fig.~\ref{fig:dataset:german}, and the Biodiversity dataset (for the biodiversity teaching app) in Fig.~\ref{fig:dataset:biodiversity}.

\subsubsection*{German dataset}
\begin{figure*}[h]
  \centering
  \begin{subfigure}[b]{\textwidth}
    \centering
    {
      \includegraphics[trim={0pt 15pt 0pt 0pt}, width=.136\textwidth]{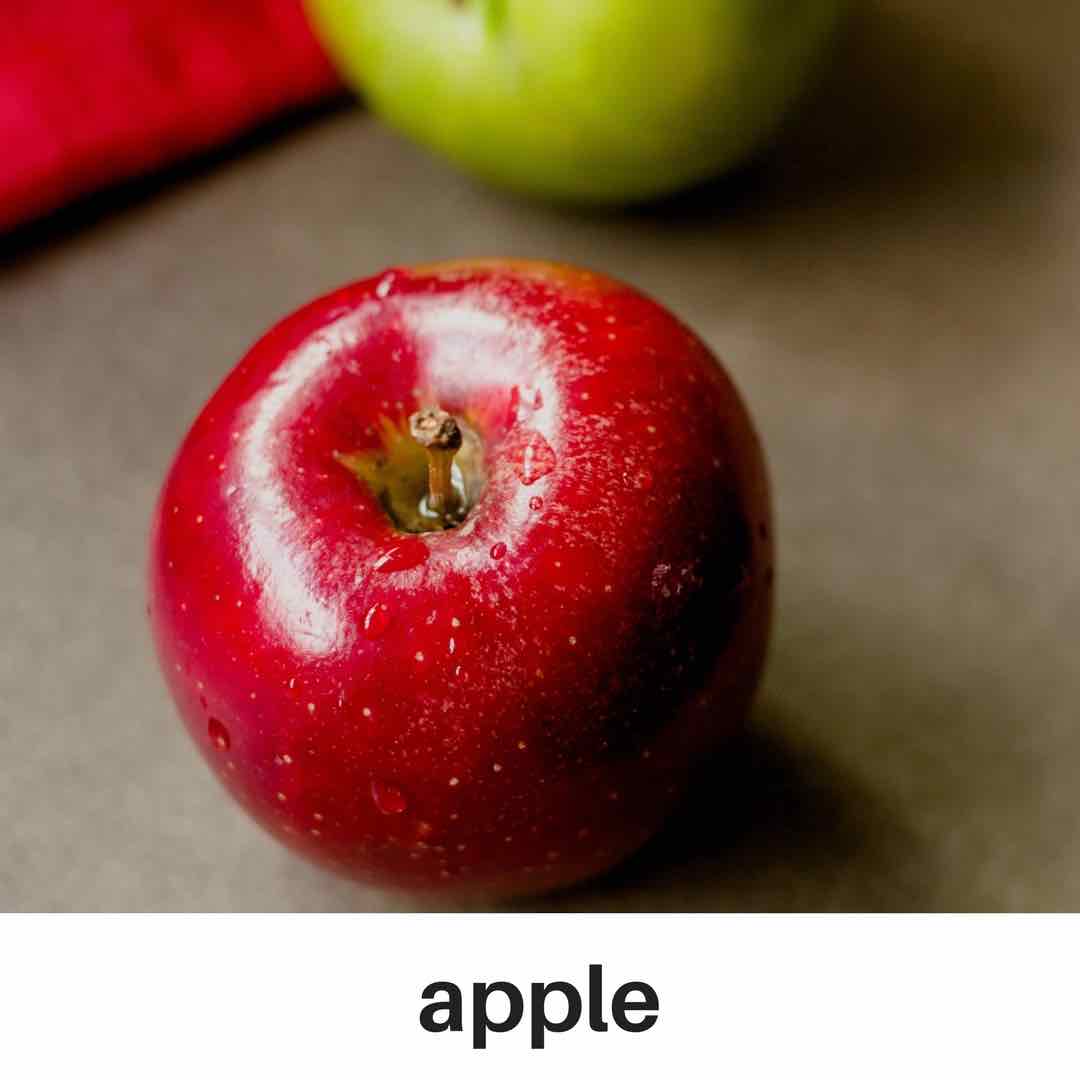}
      \includegraphics[trim={0pt 15pt 0pt 0pt}, width=.136\textwidth]{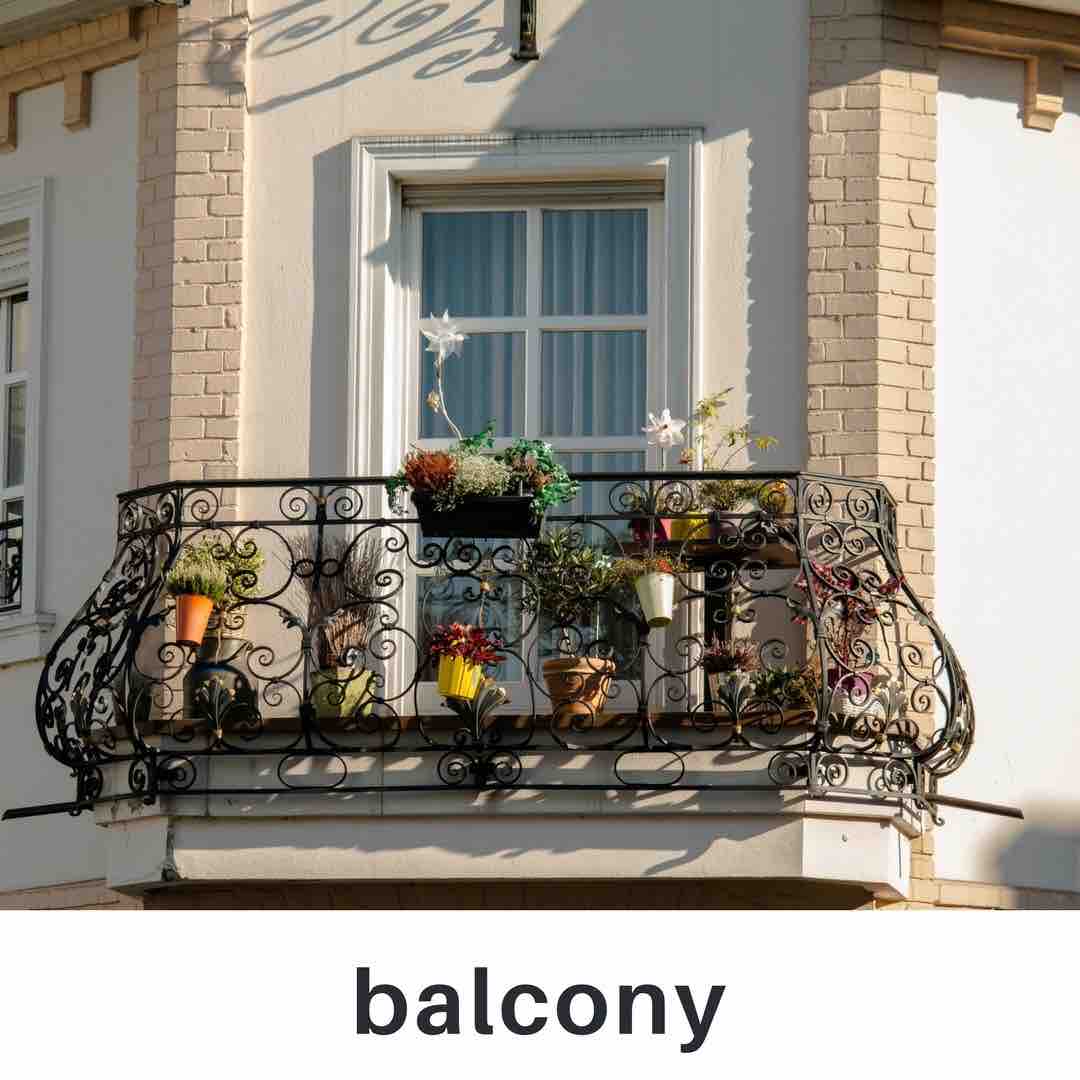}
      \includegraphics[trim={0pt 15pt 0pt 0pt}, width=.136\textwidth]{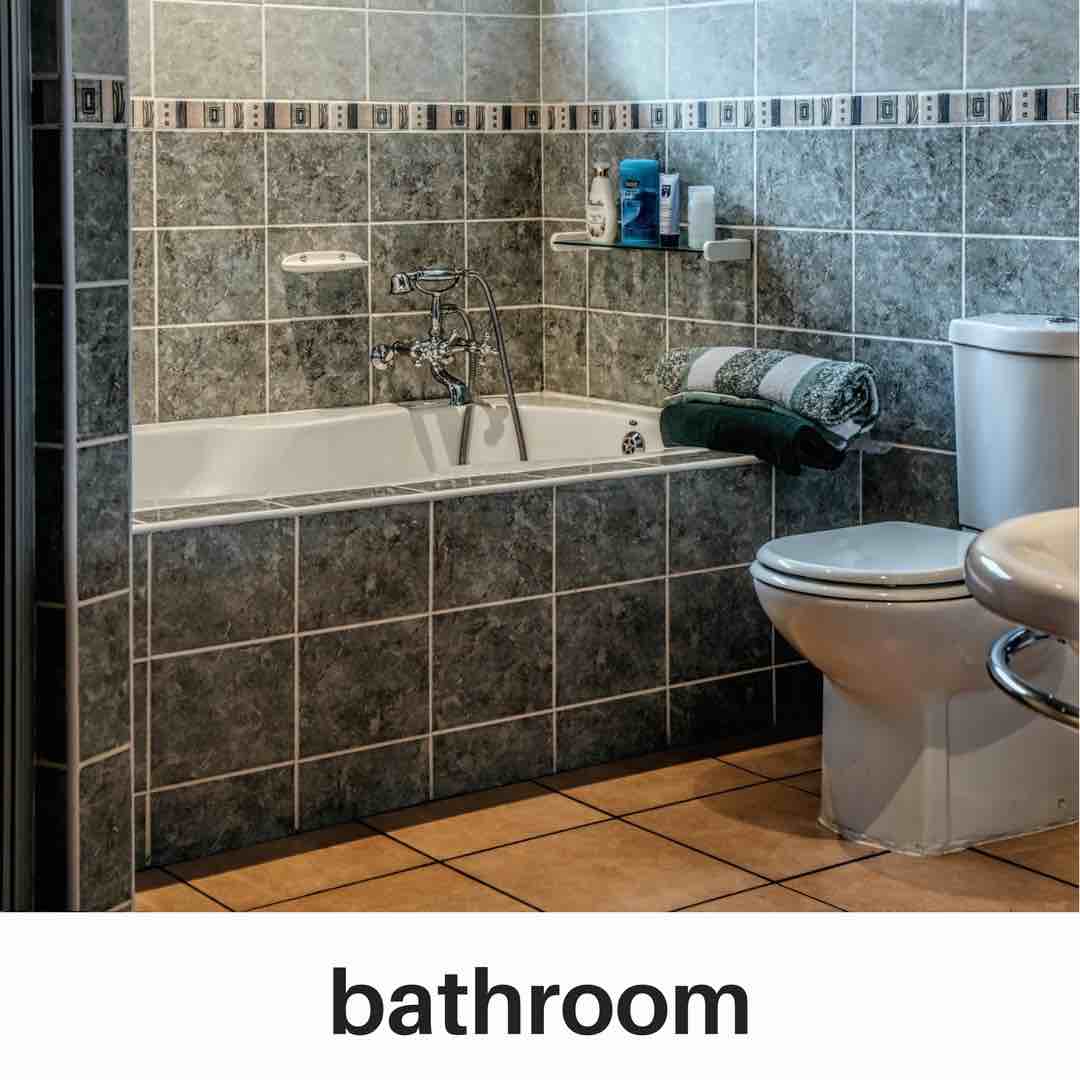}
      \includegraphics[trim={0pt 15pt 0pt 0pt}, width=.136\textwidth]{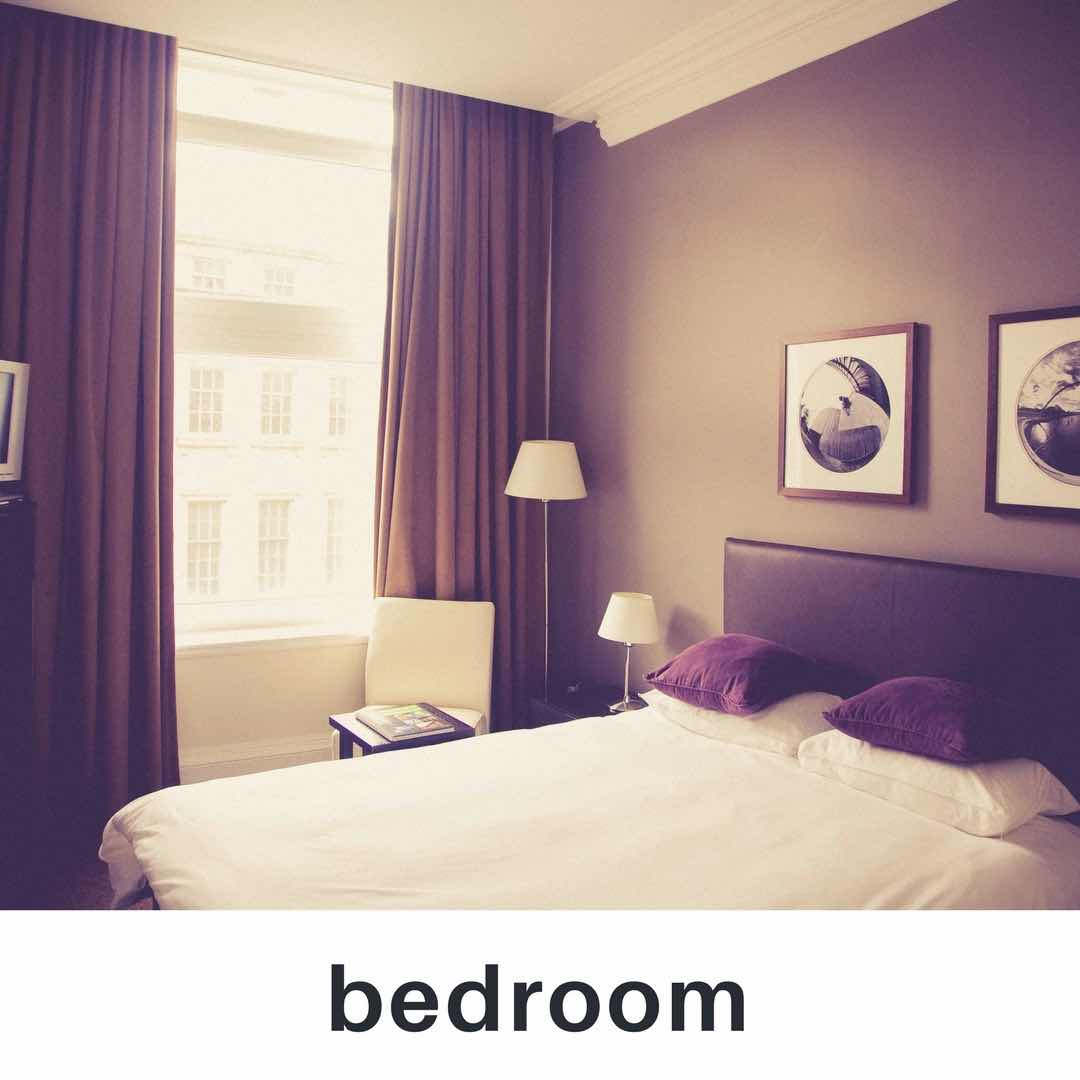}
      \includegraphics[trim={0pt 15pt 0pt 0pt}, width=.136\textwidth]{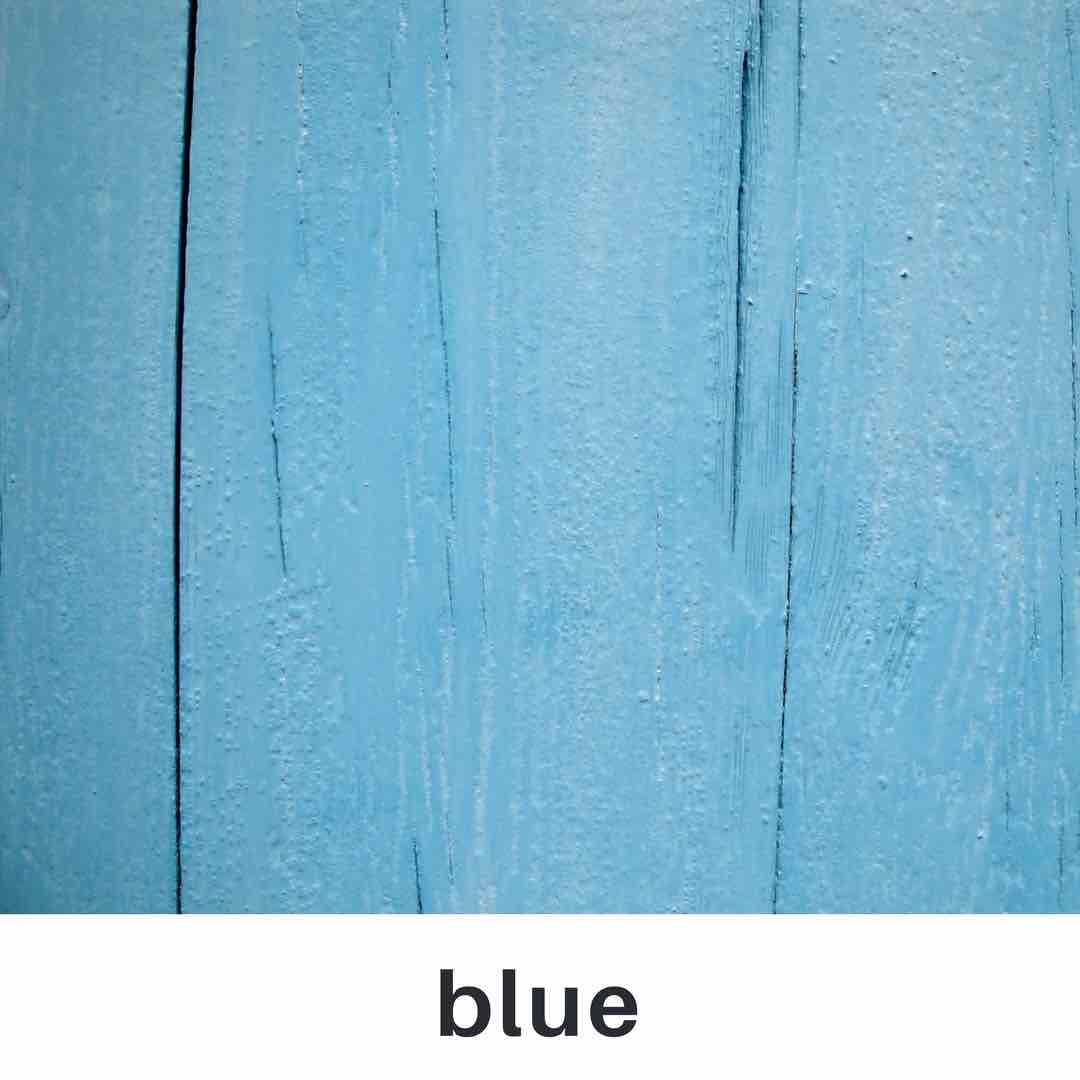}
      \includegraphics[trim={0pt 15pt 0pt 0pt}, width=.136\textwidth]{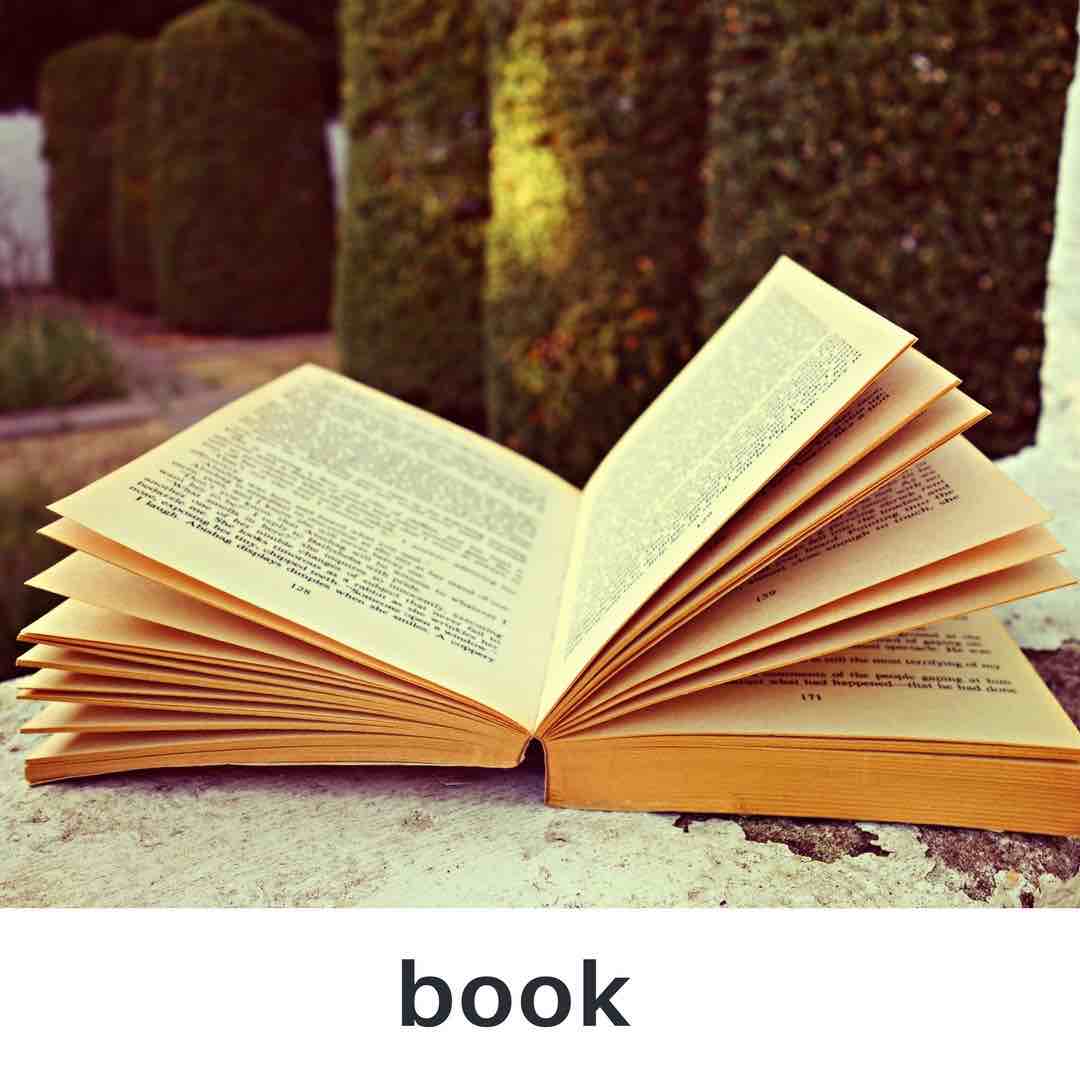}
      \includegraphics[trim={0pt 15pt 0pt 0pt}, width=.136\textwidth]{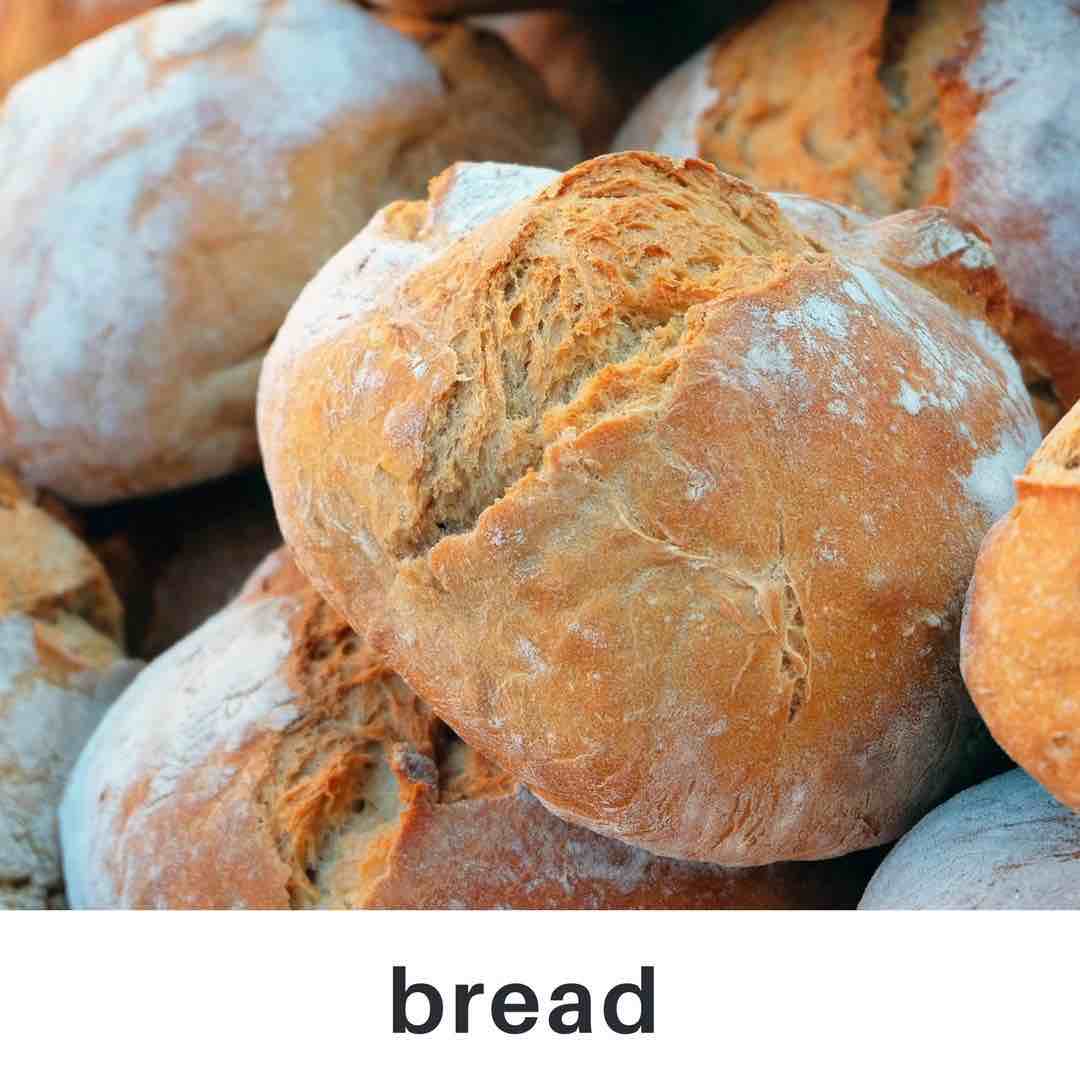}
      \includegraphics[trim={0pt 15pt 0pt 0pt}, width=.136\textwidth]{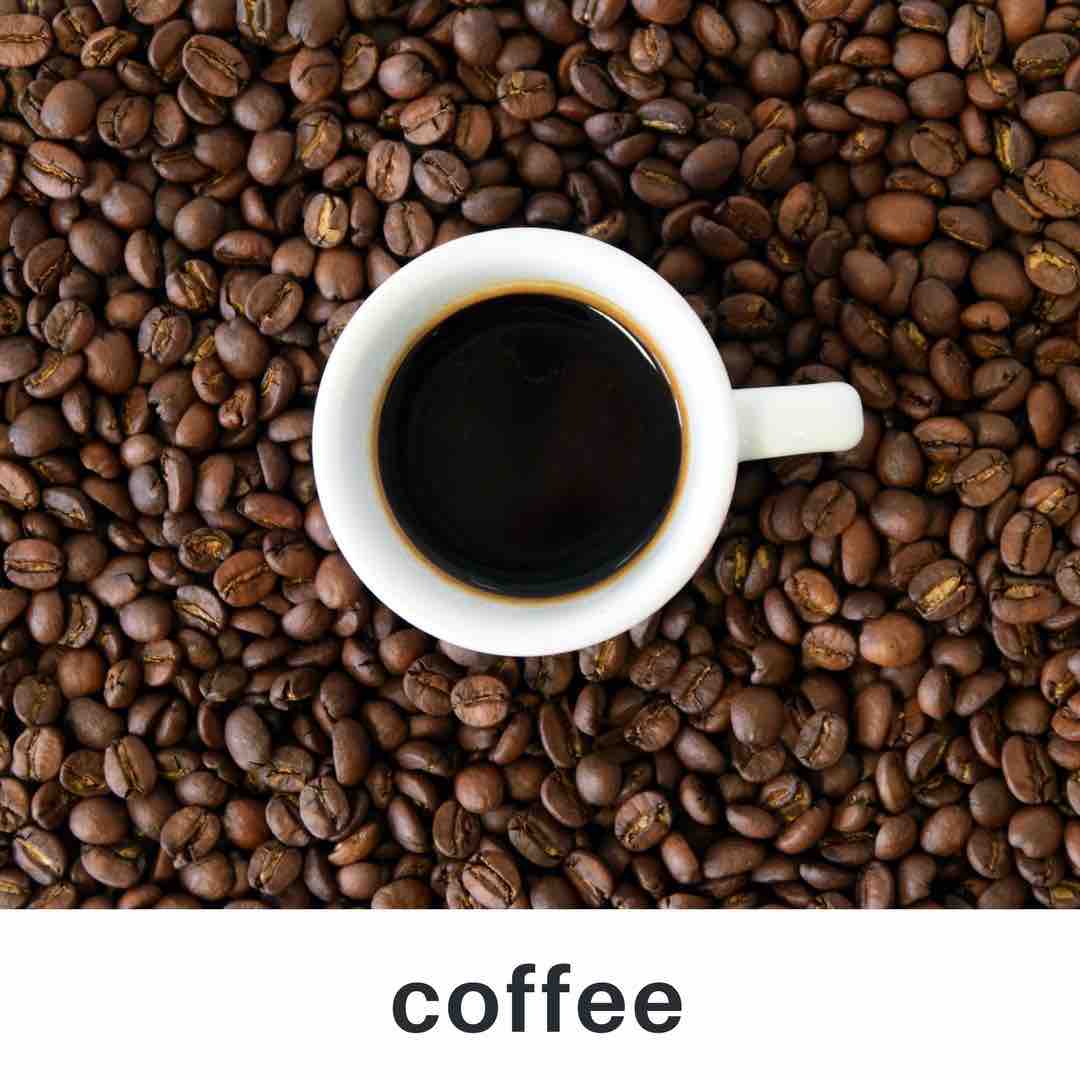}
      \includegraphics[trim={0pt 15pt 0pt 0pt}, width=.136\textwidth]{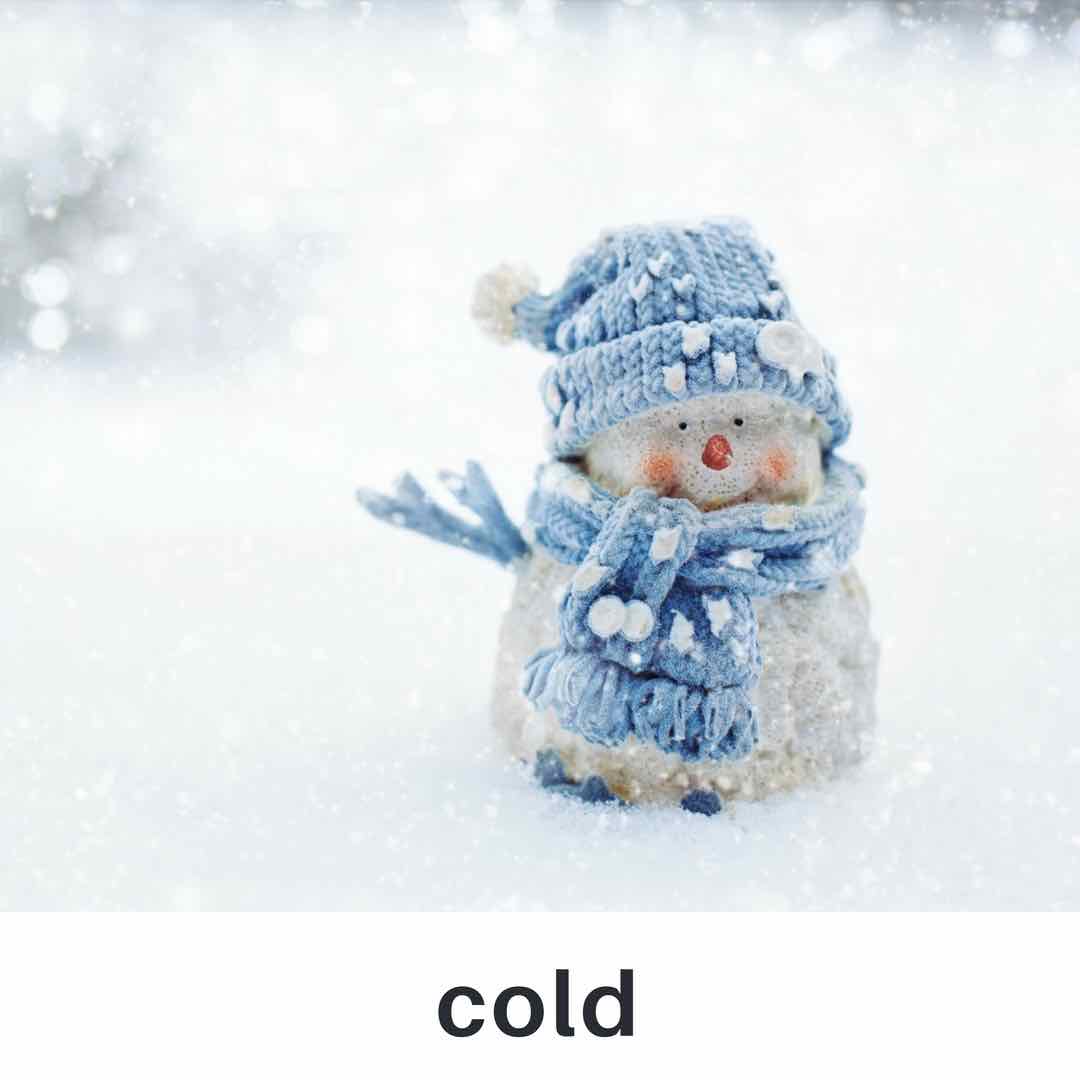}
      \includegraphics[trim={0pt 15pt 0pt 0pt}, width=.136\textwidth]{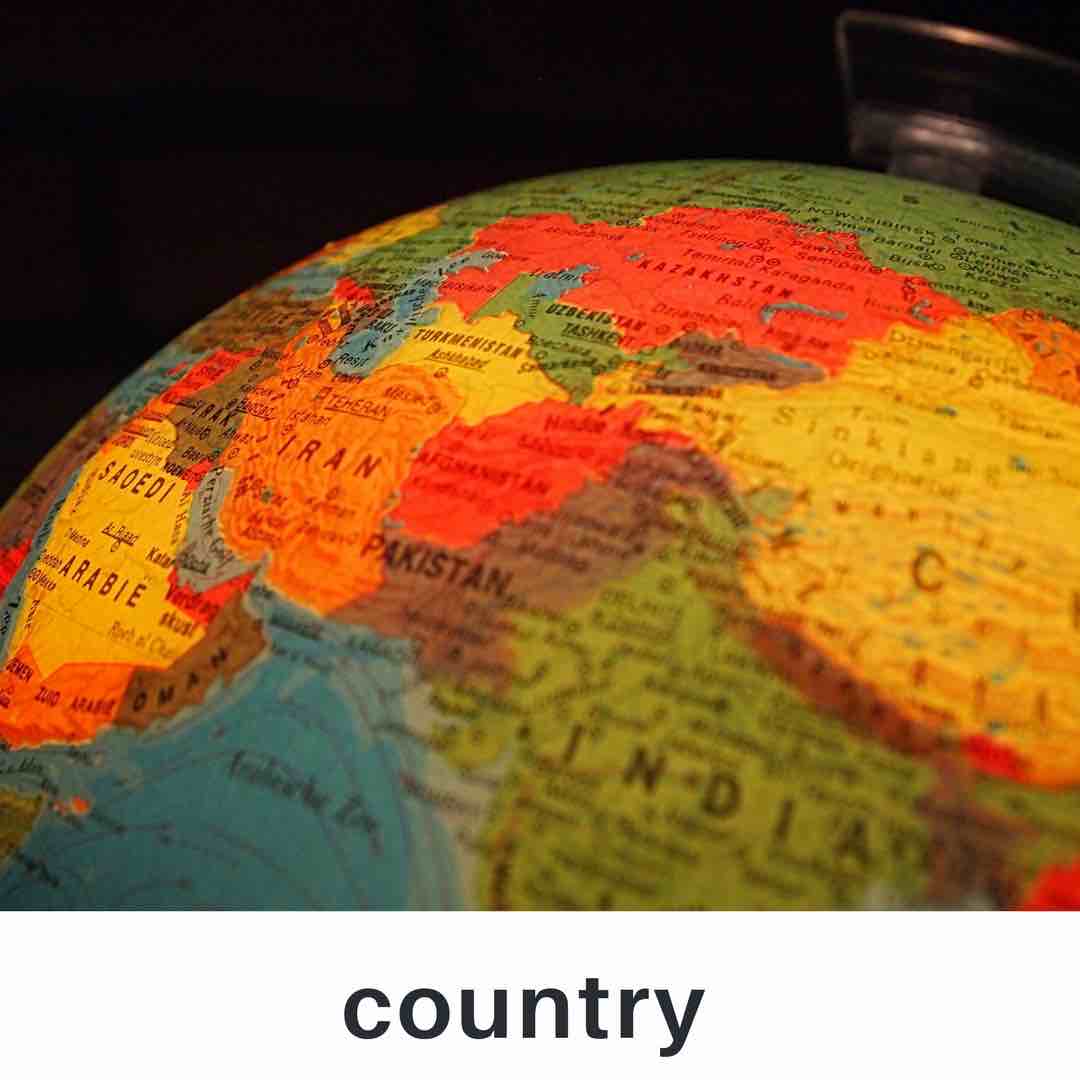}
      \includegraphics[trim={0pt 15pt 0pt 0pt}, width=.136\textwidth]{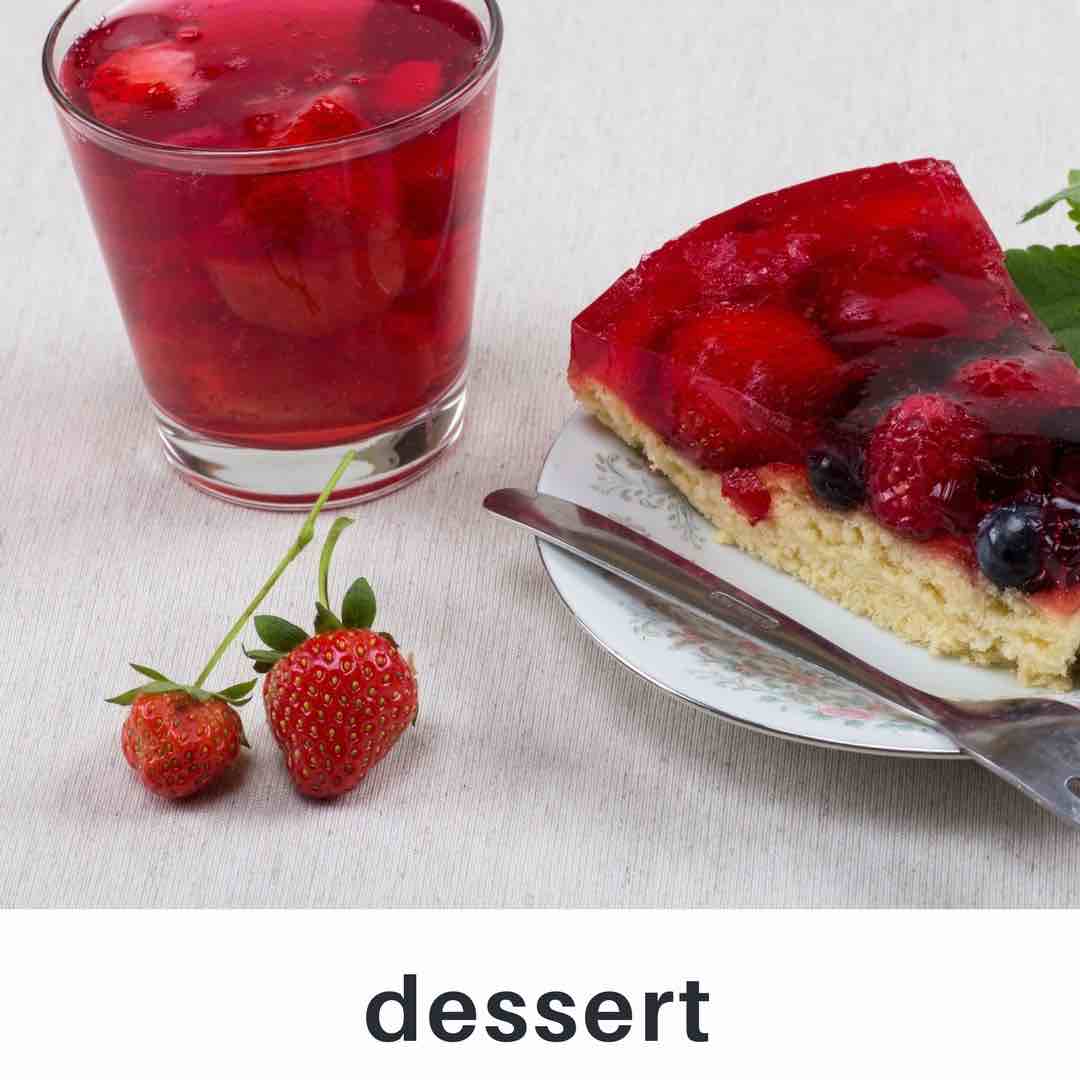}
      \includegraphics[trim={0pt 15pt 0pt 0pt}, width=.136\textwidth]{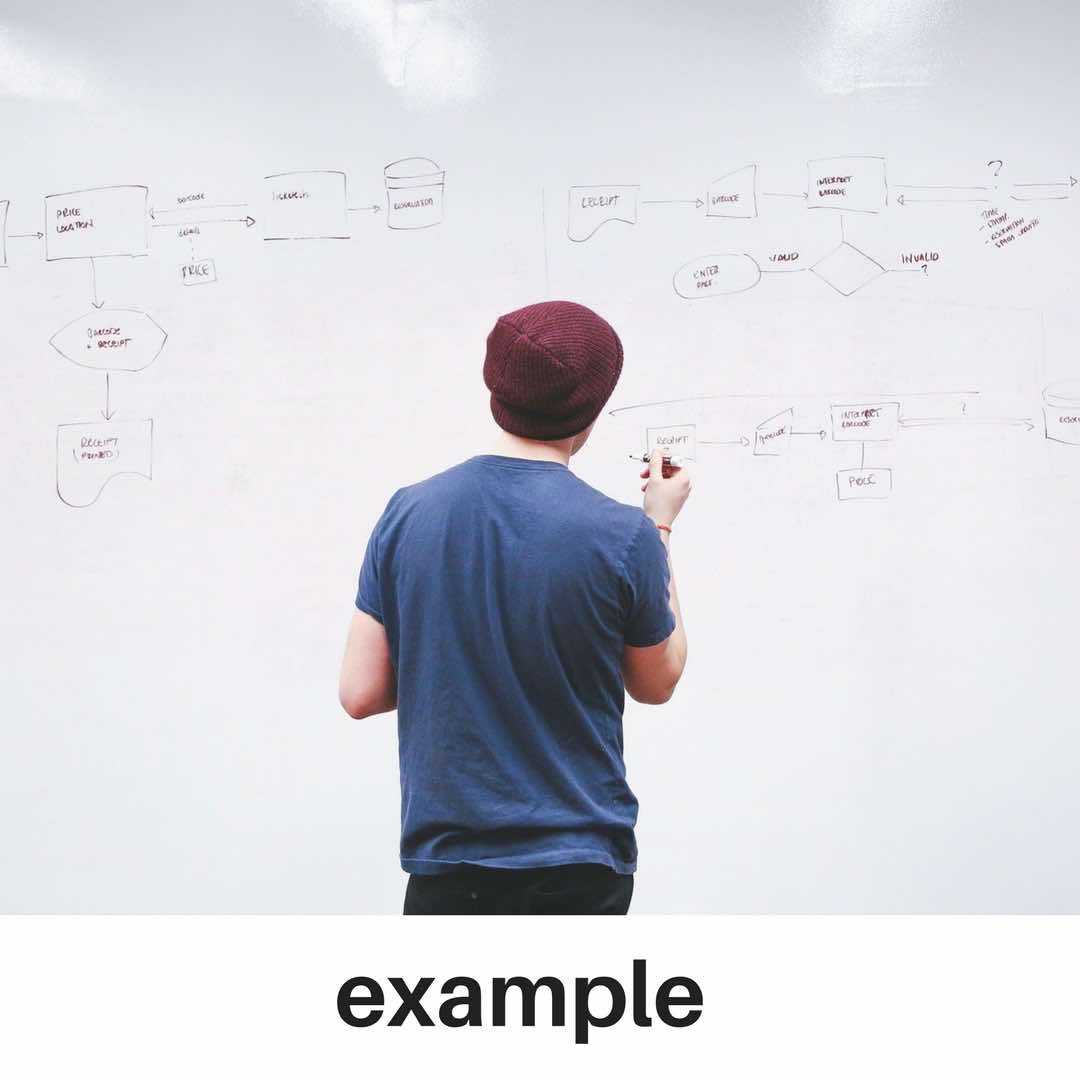}
      \includegraphics[trim={0pt 15pt 0pt 0pt}, width=.136\textwidth]{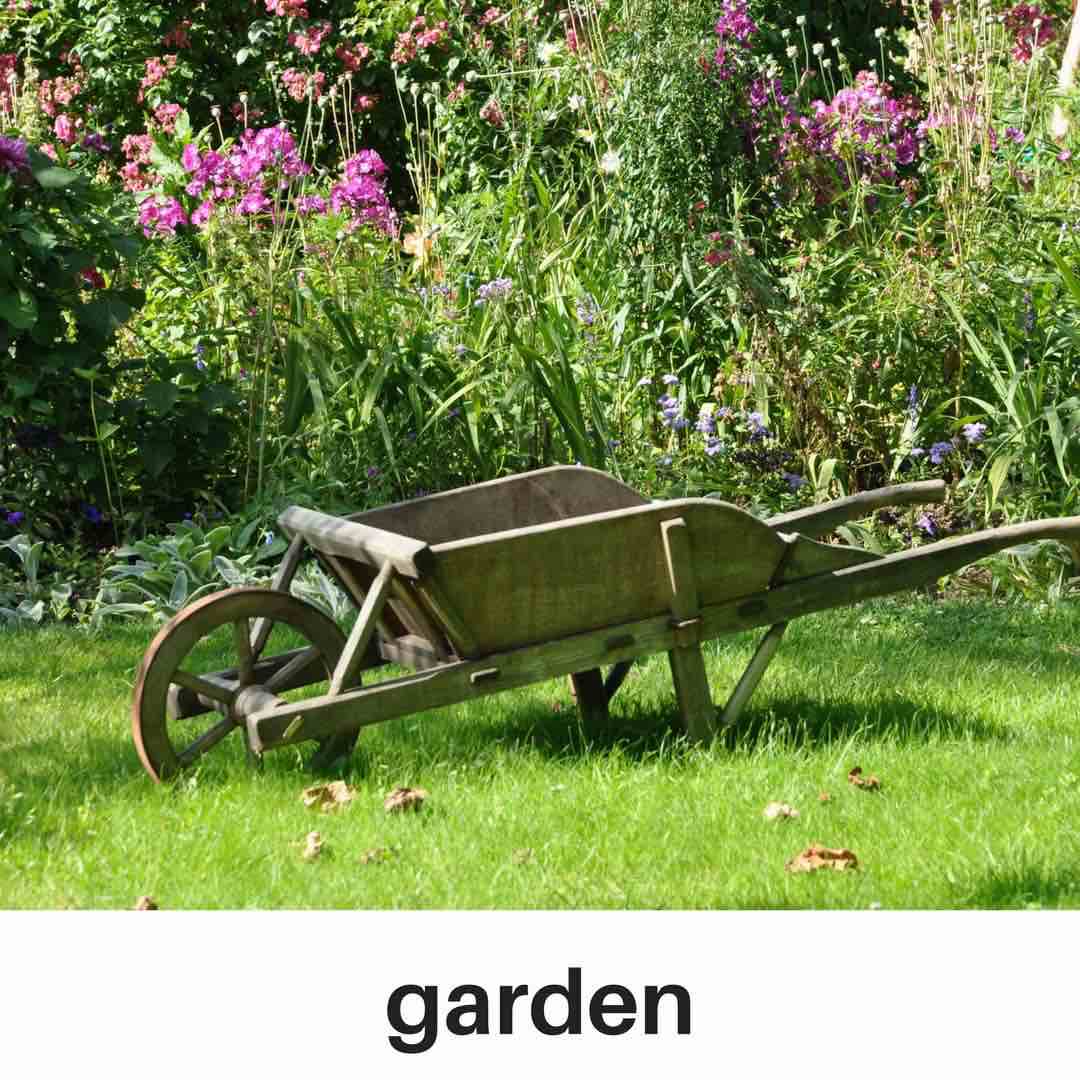}
      \includegraphics[trim={0pt 15pt 0pt 0pt}, width=.136\textwidth]{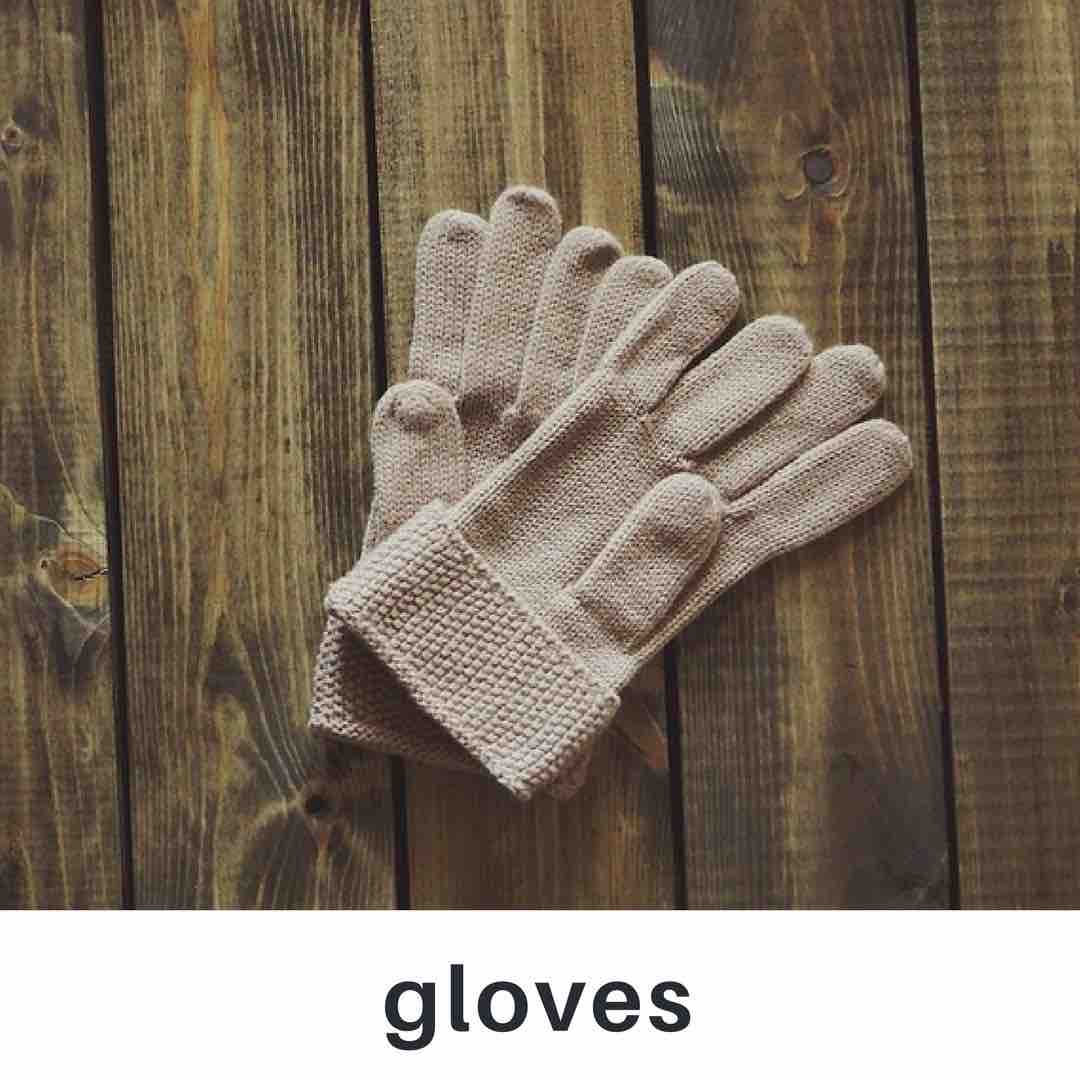}
      \includegraphics[trim={0pt 15pt 0pt 0pt}, width=.136\textwidth]{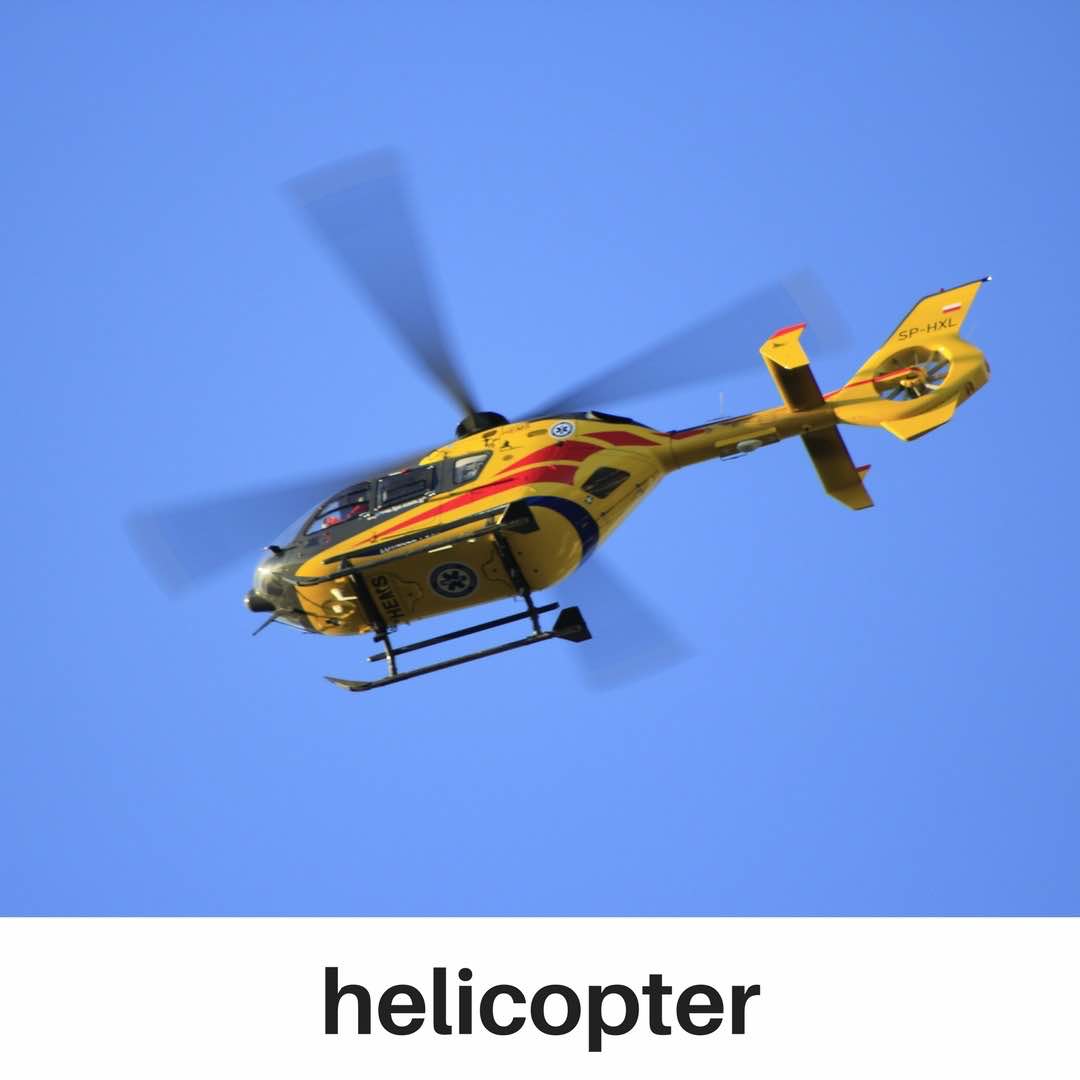}
      \includegraphics[trim={0pt 15pt 0pt 0pt}, width=.136\textwidth]{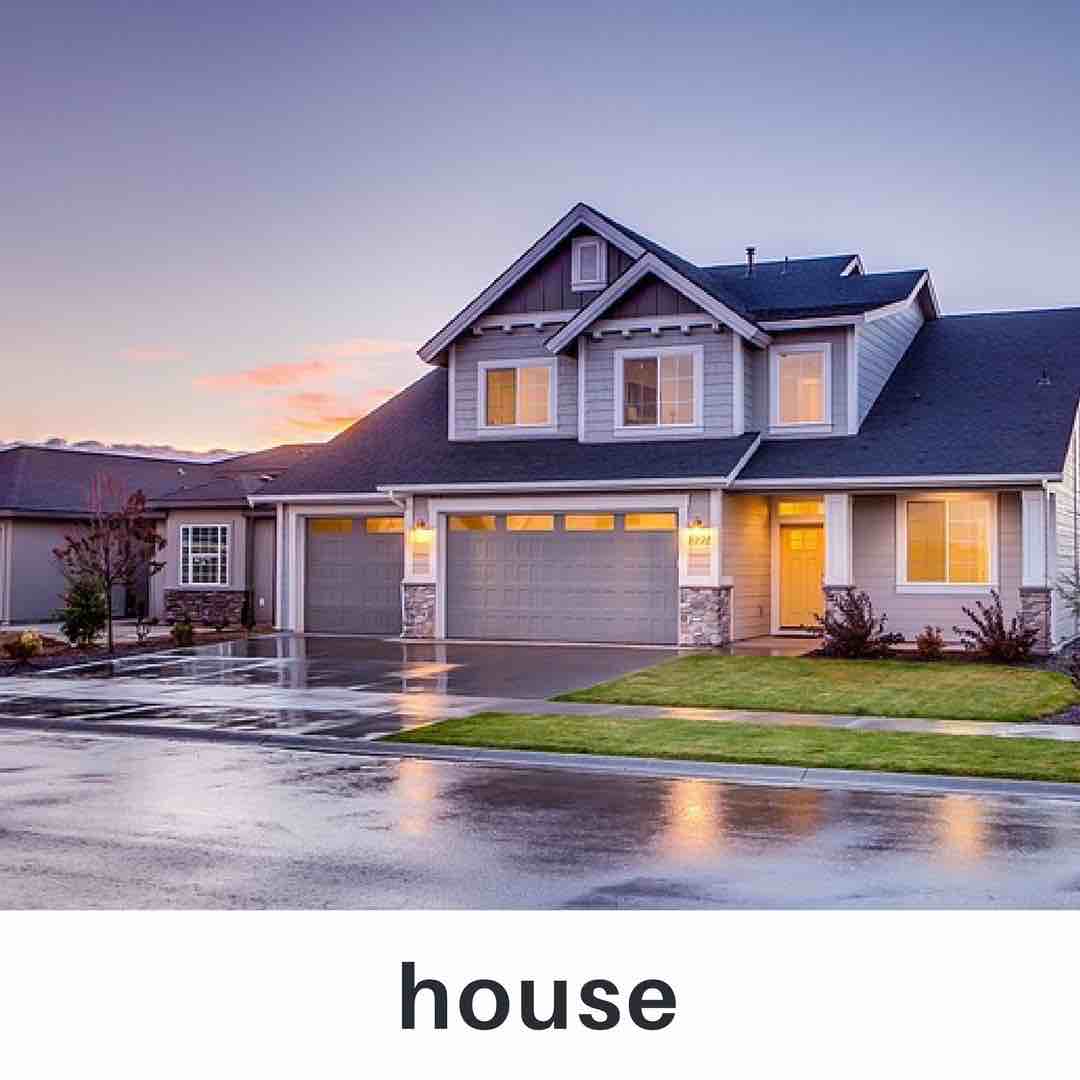}
      \includegraphics[trim={0pt 15pt 0pt 0pt}, width=.136\textwidth]{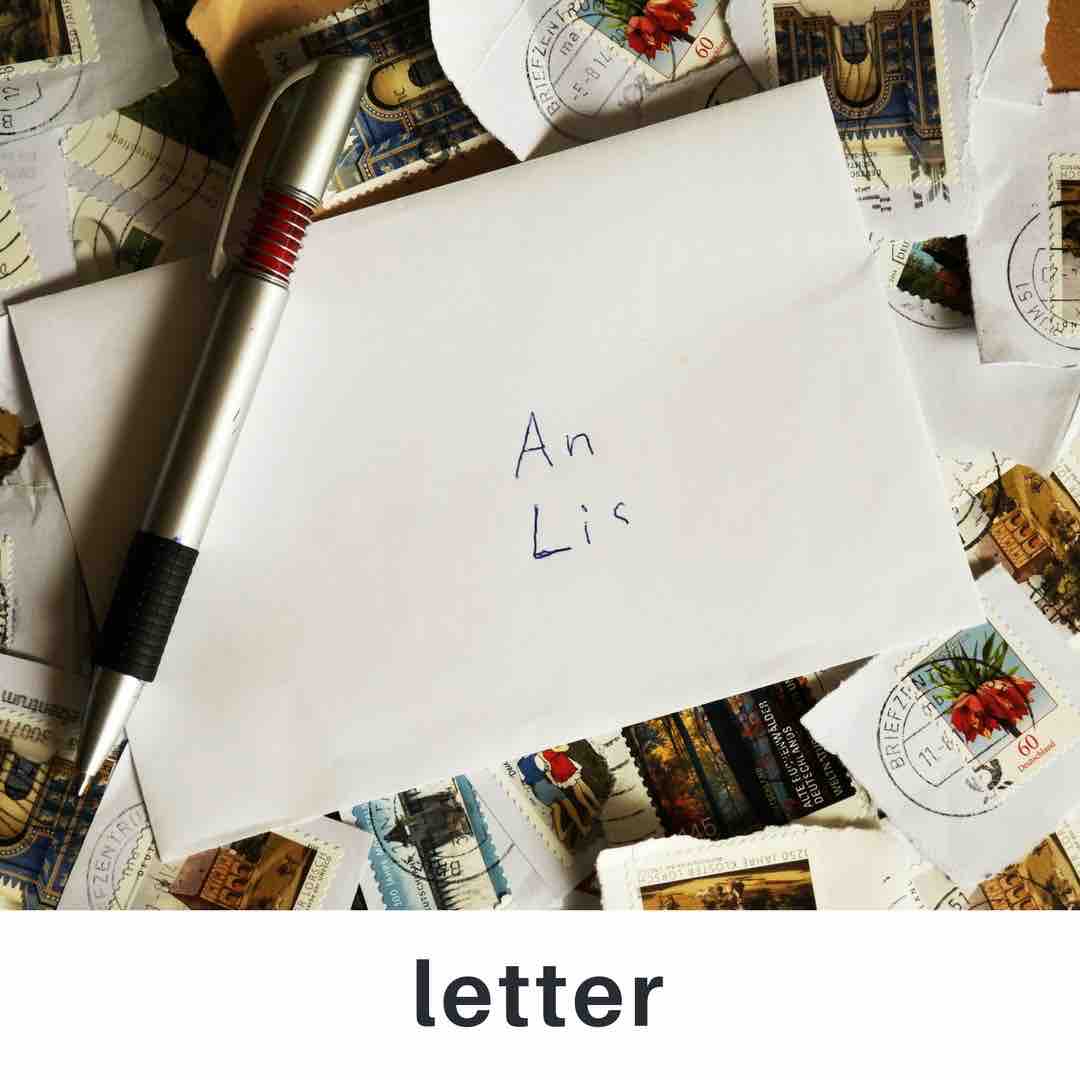}
      \includegraphics[trim={0pt 15pt 0pt 0pt}, width=.136\textwidth]{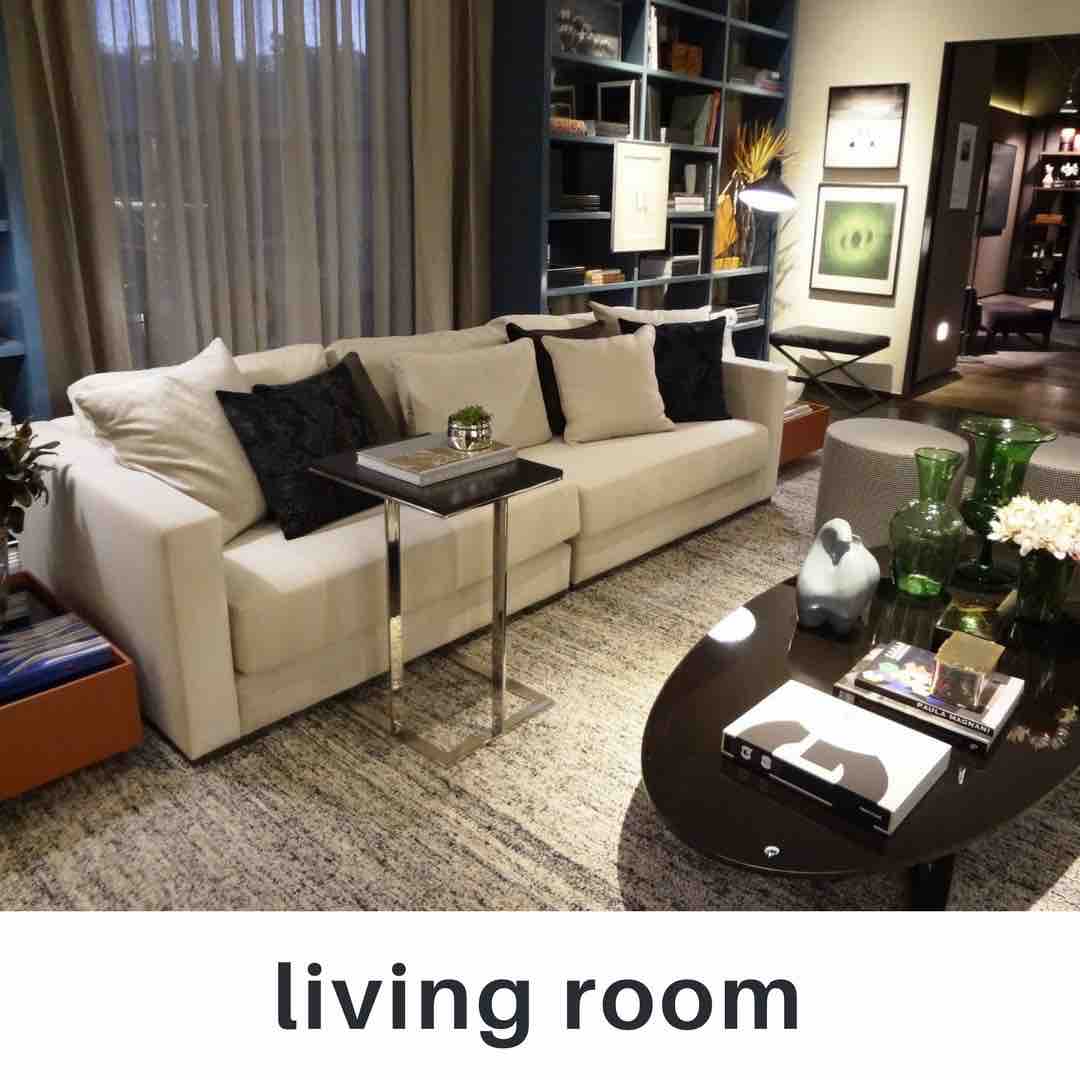}
      \includegraphics[trim={0pt 15pt 0pt 0pt}, width=.136\textwidth]{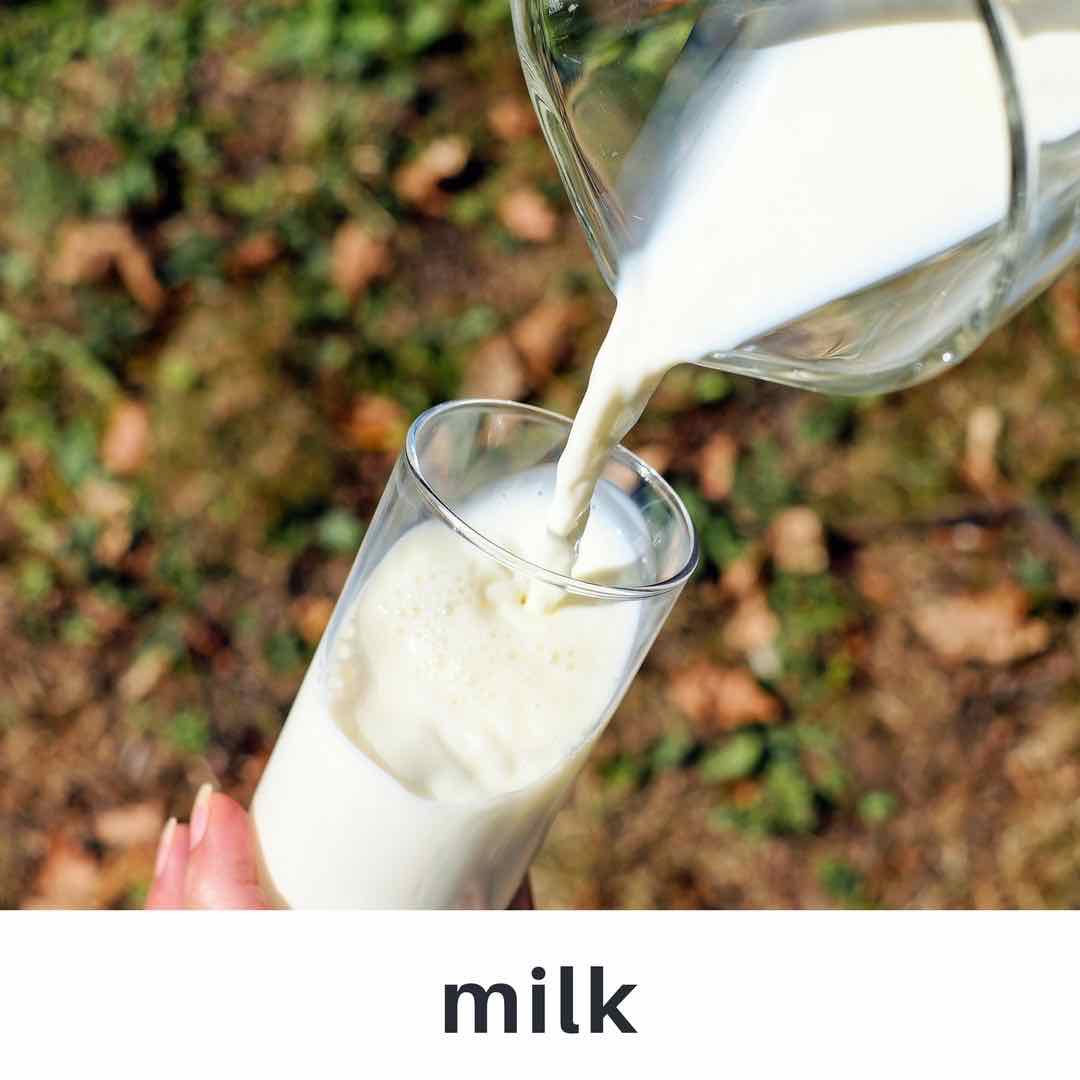}
      \includegraphics[trim={0pt 15pt 0pt 0pt}, width=.136\textwidth]{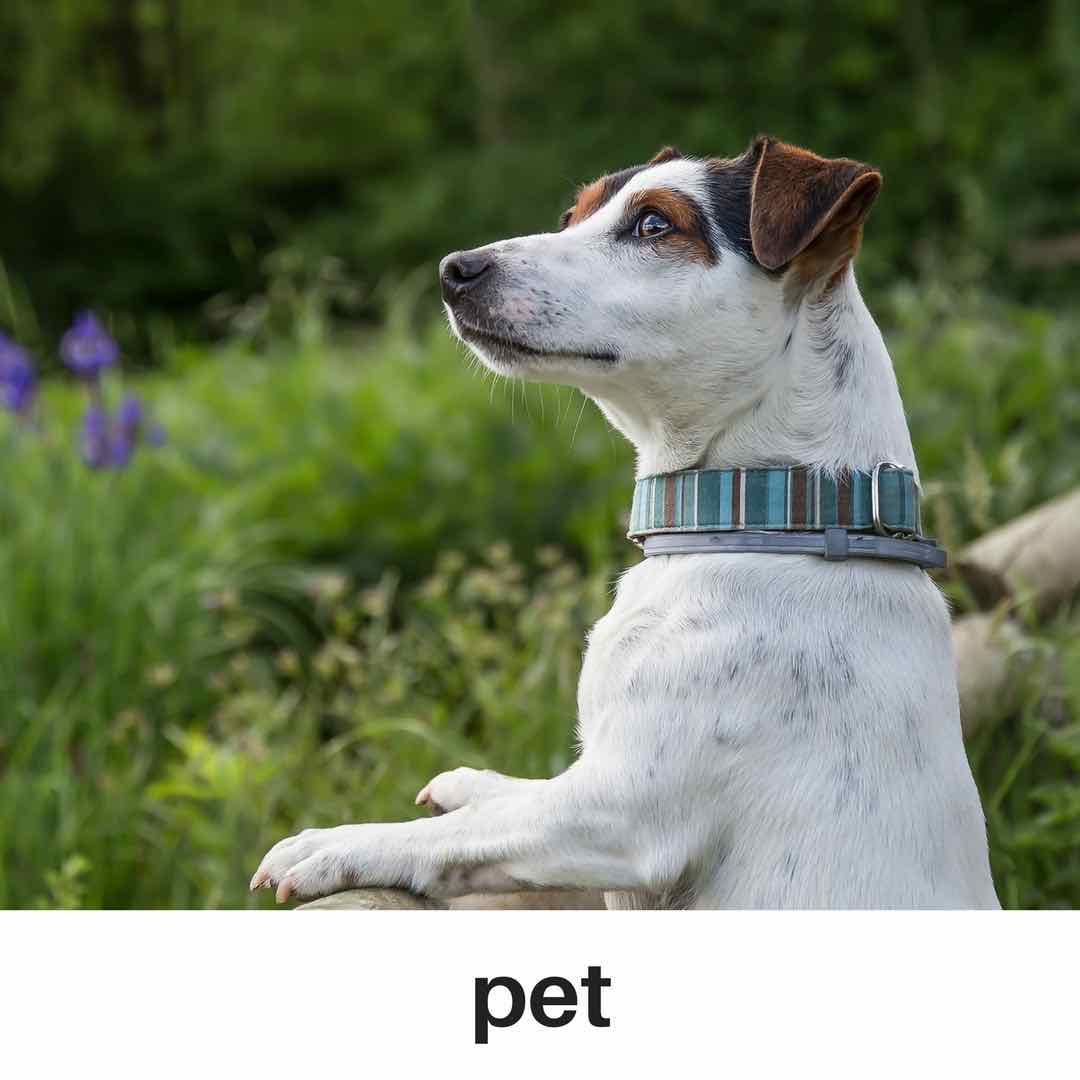}
      \includegraphics[trim={0pt 15pt 0pt 0pt}, width=.136\textwidth]{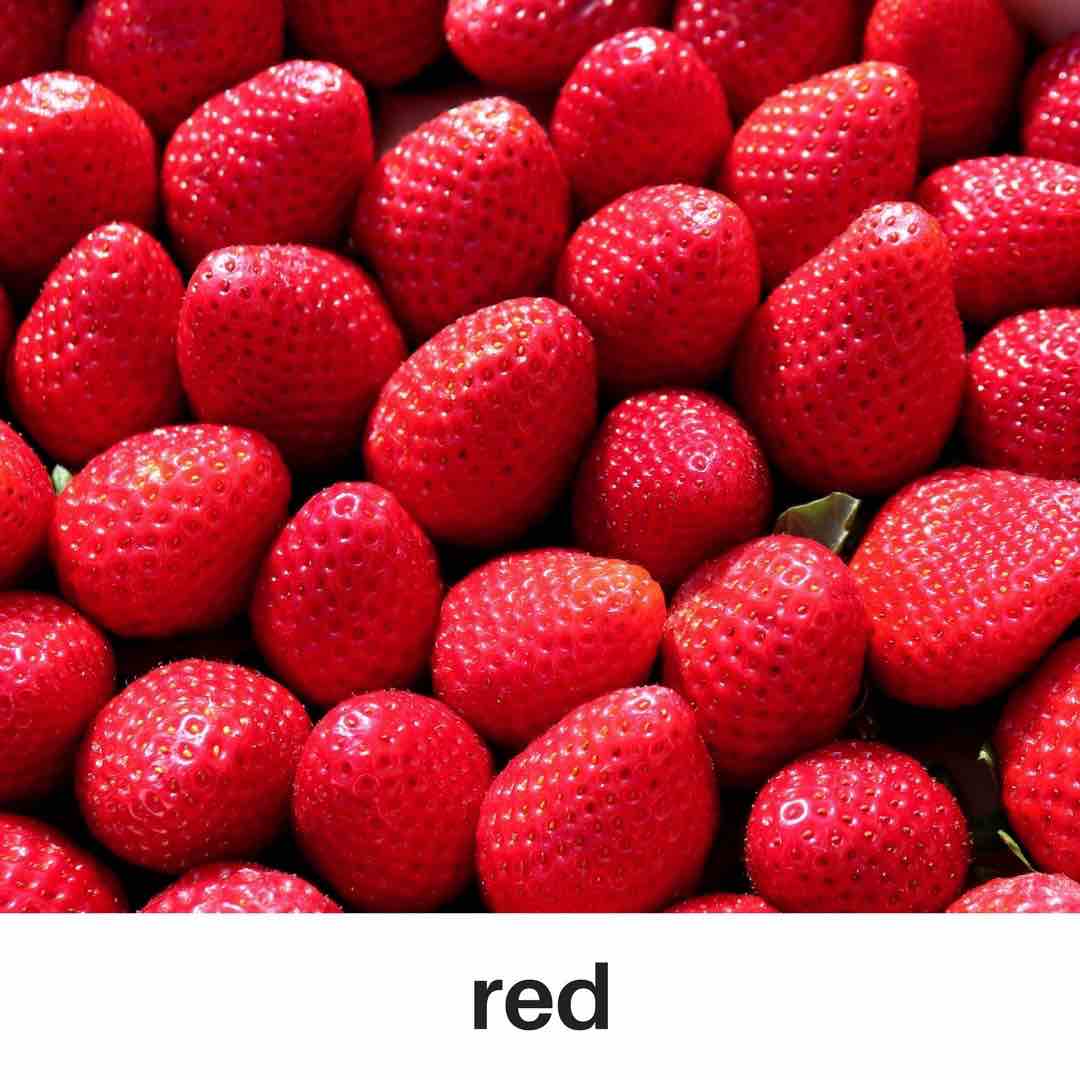}
      \includegraphics[trim={0pt 15pt 0pt 0pt}, width=.136\textwidth]{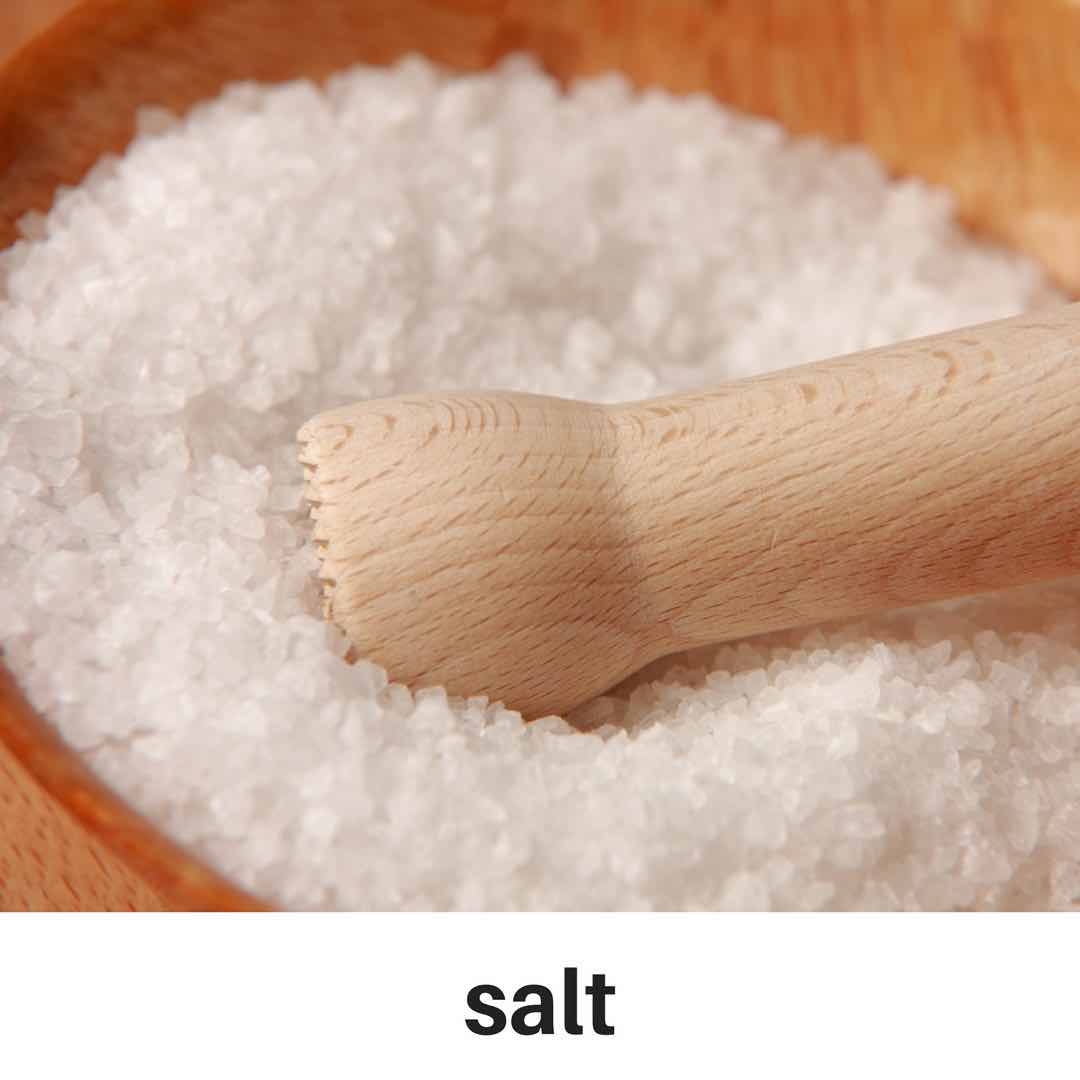}
      \includegraphics[trim={0pt 15pt 0pt 0pt}, width=.136\textwidth]{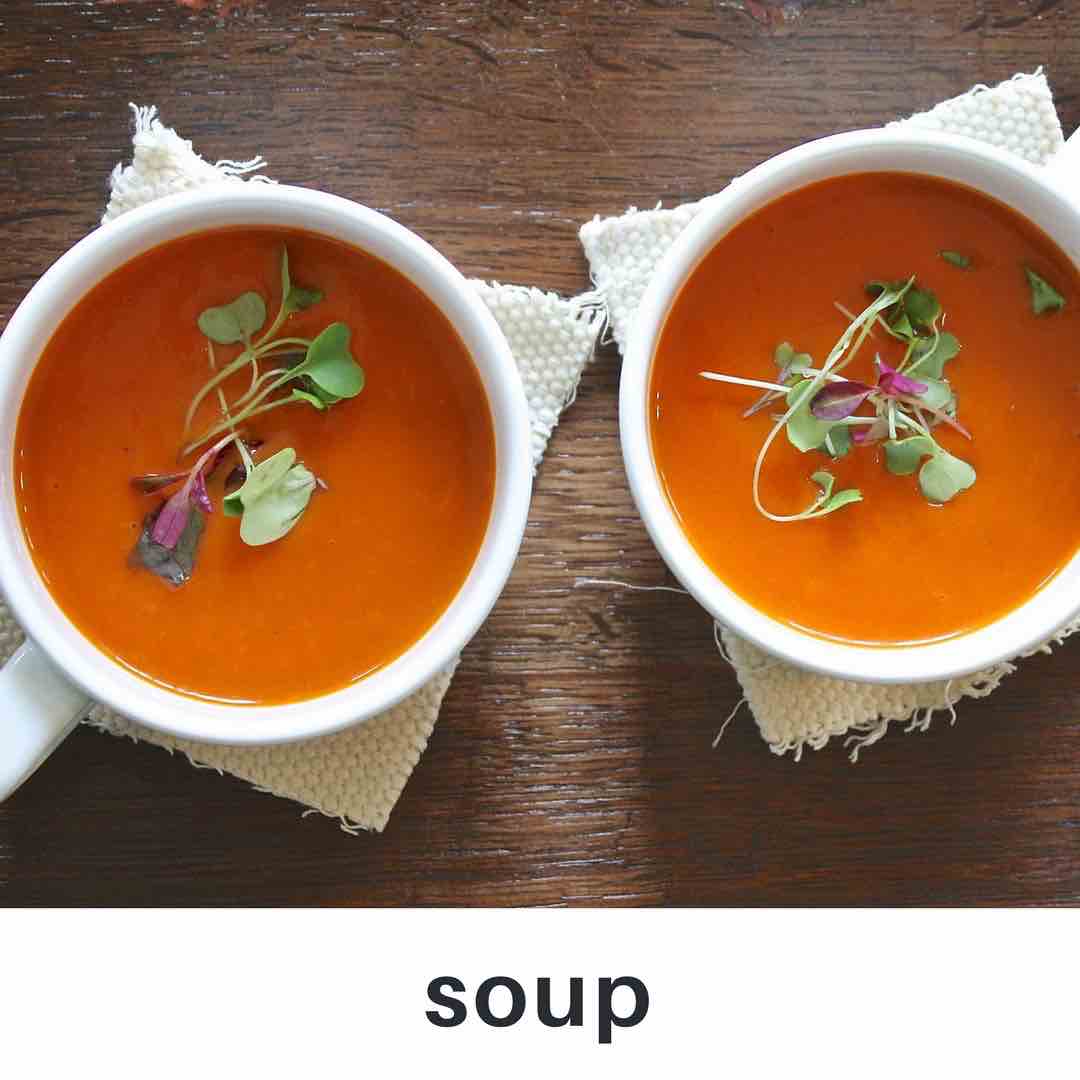}
      \includegraphics[trim={0pt 15pt 0pt 0pt}, width=.136\textwidth]{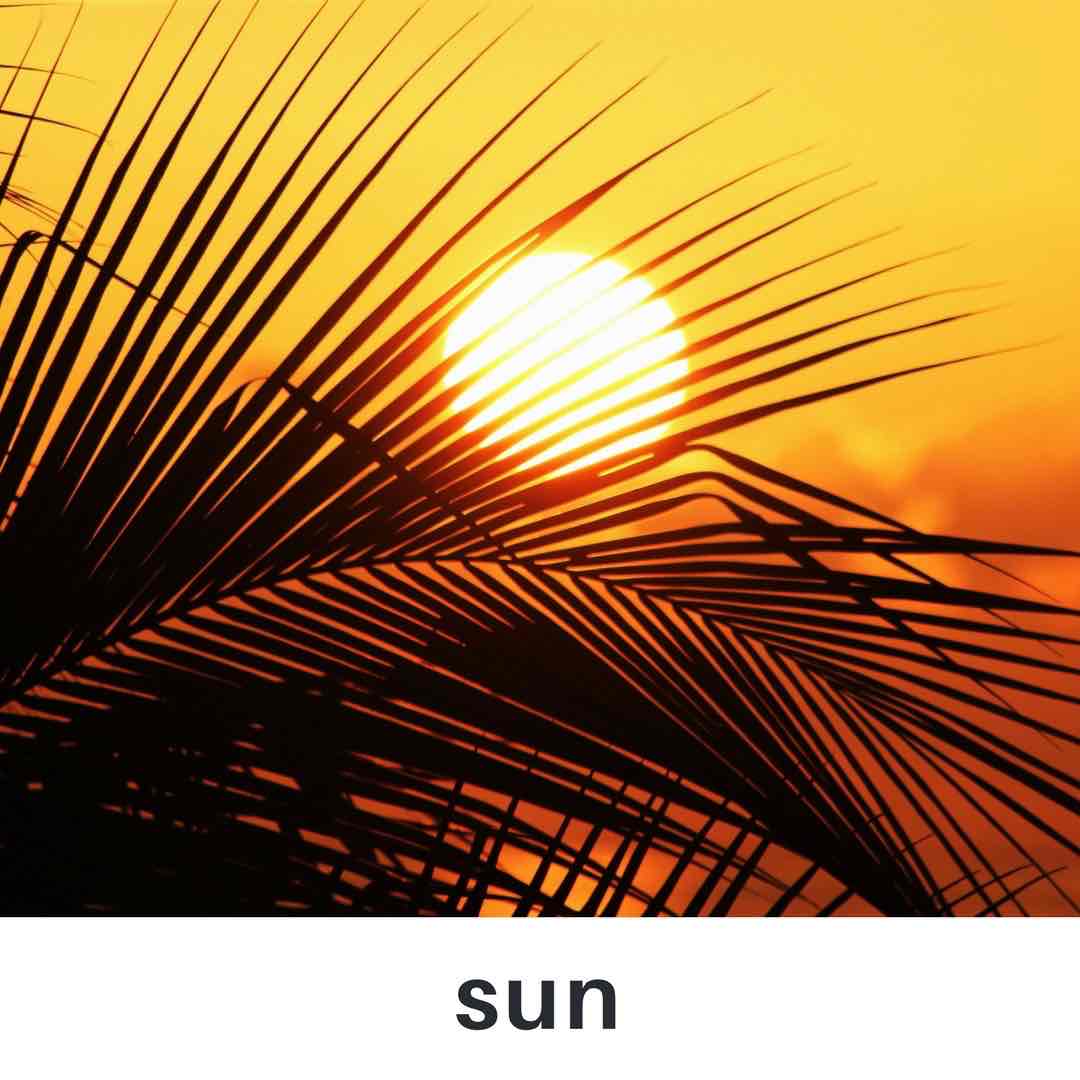}
      \includegraphics[trim={0pt 15pt 0pt 0pt}, width=.136\textwidth]{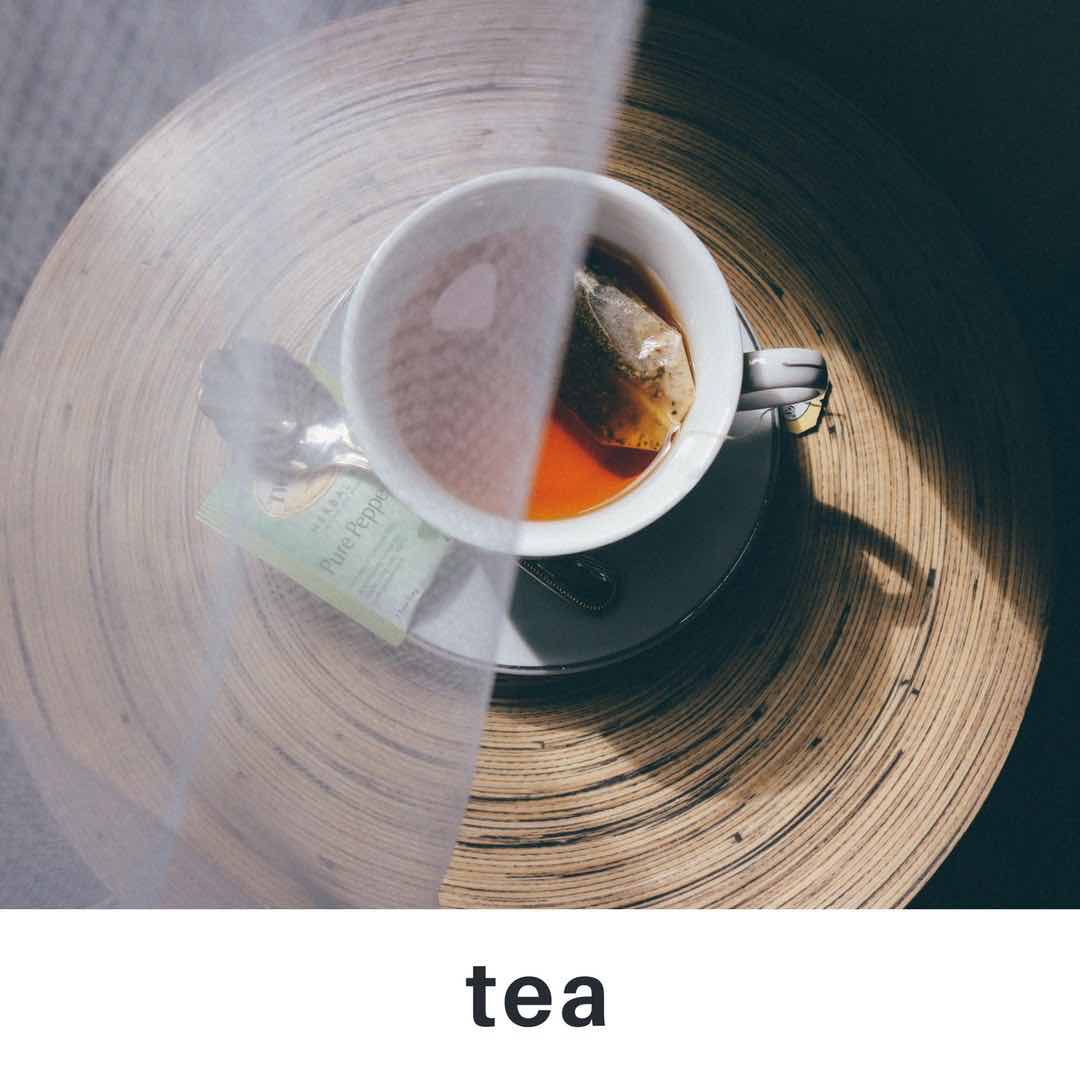}
      \includegraphics[trim={0pt 15pt 0pt 0pt}, width=.136\textwidth]{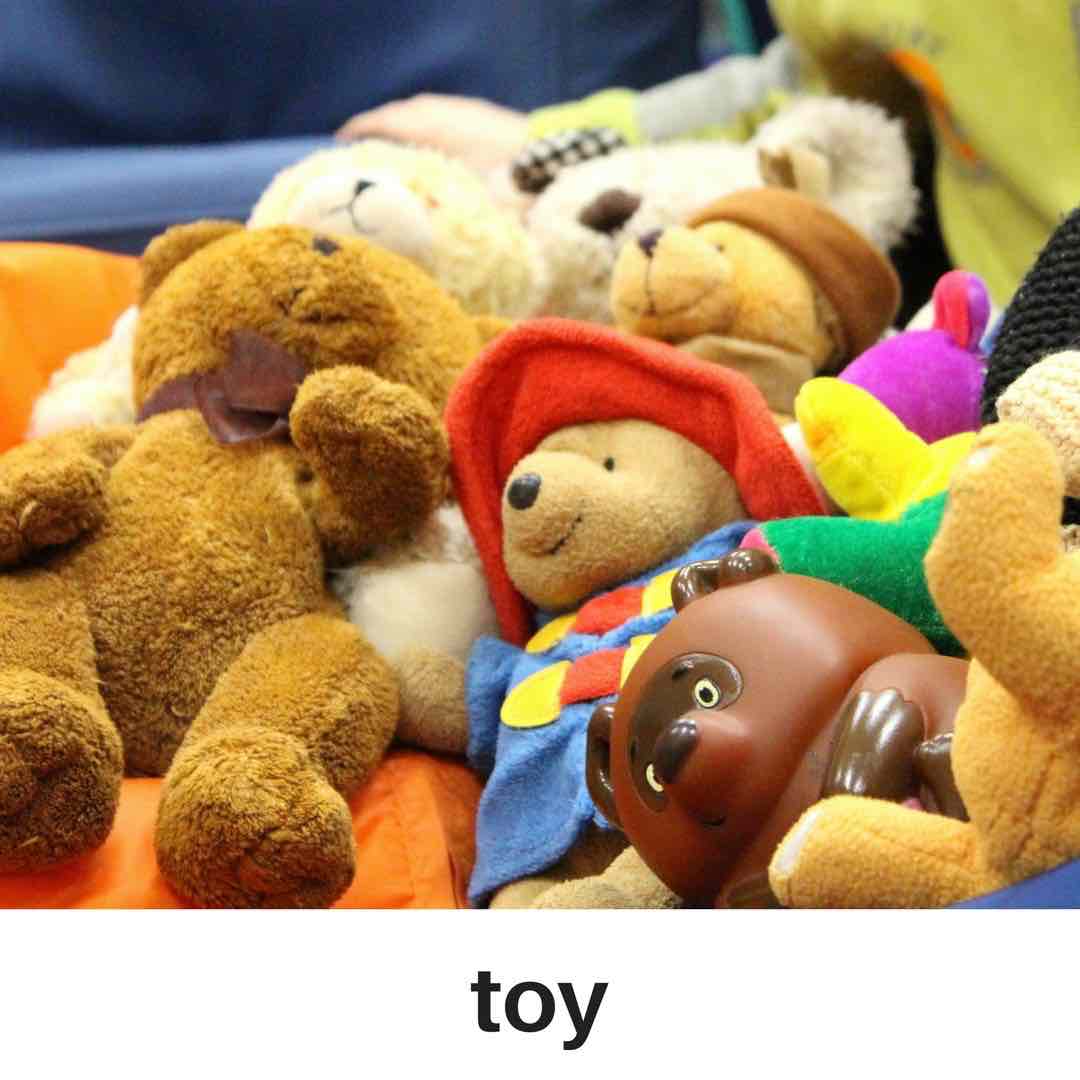}
      \includegraphics[trim={0pt 15pt 0pt 0pt}, width=.136\textwidth]{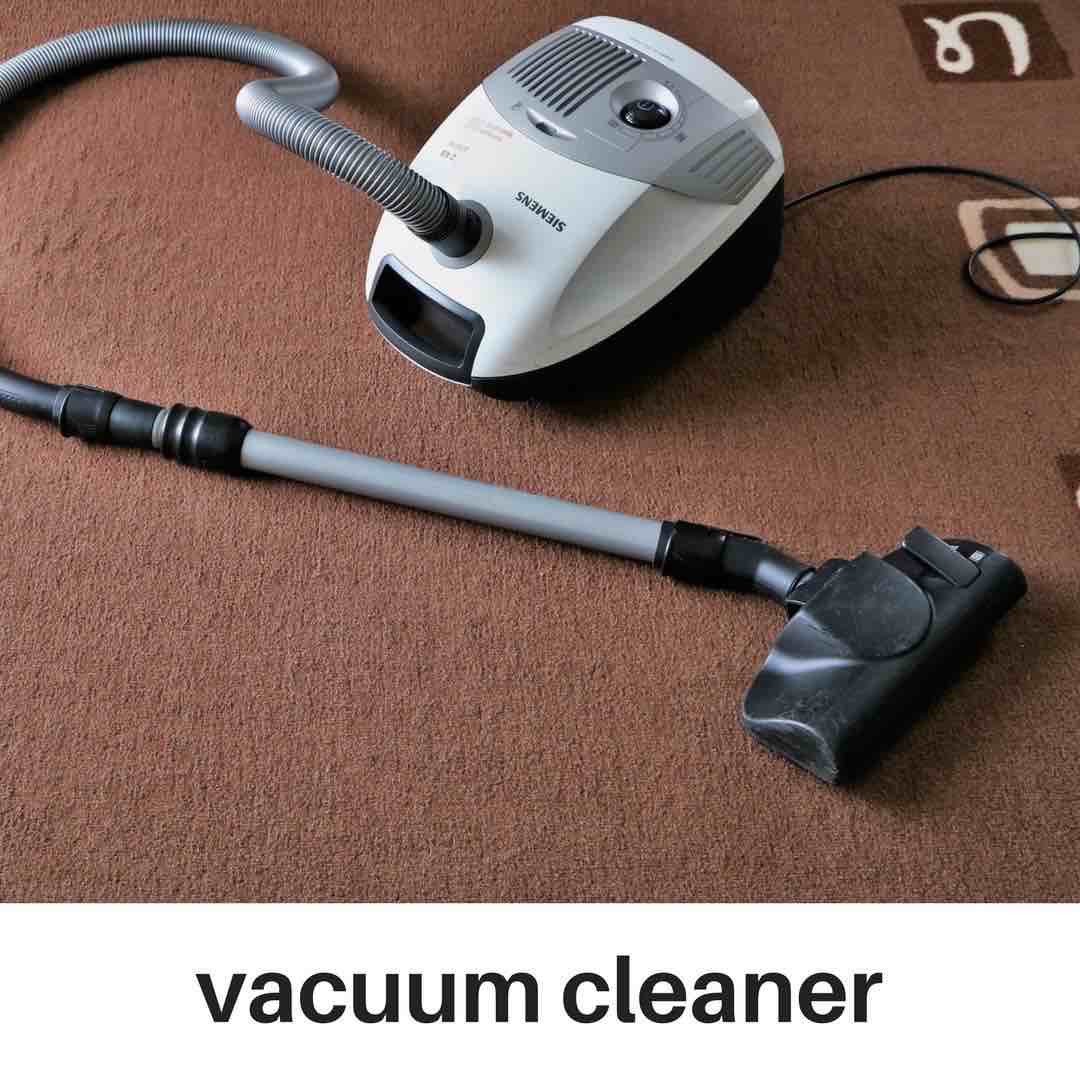}
      \includegraphics[trim={0pt 15pt 0pt 0pt}, width=.136\textwidth]{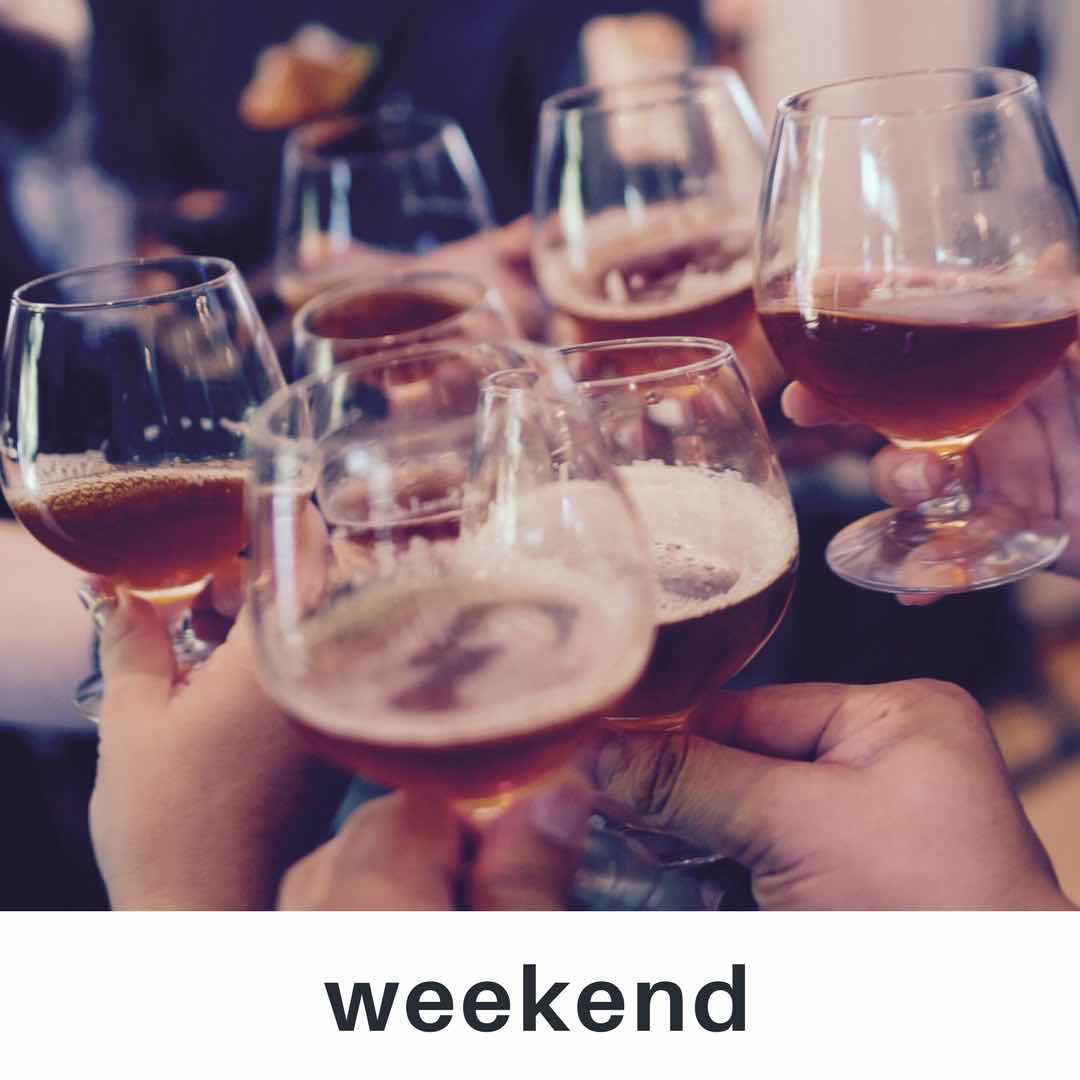}
      \\
    }
  \end{subfigure}
  \caption{Samples from the German dataset}
  \label{fig:dataset:german}
\end{figure*}

\subsubsection*{Biodiversity dataset}
\begin{figure*}[h]
  \centering
  \begin{subfigure}[b]{\textwidth}
    \centering
    {
      \includegraphics[trim={0pt 15pt 0pt 0pt}, width=.136\textwidth]{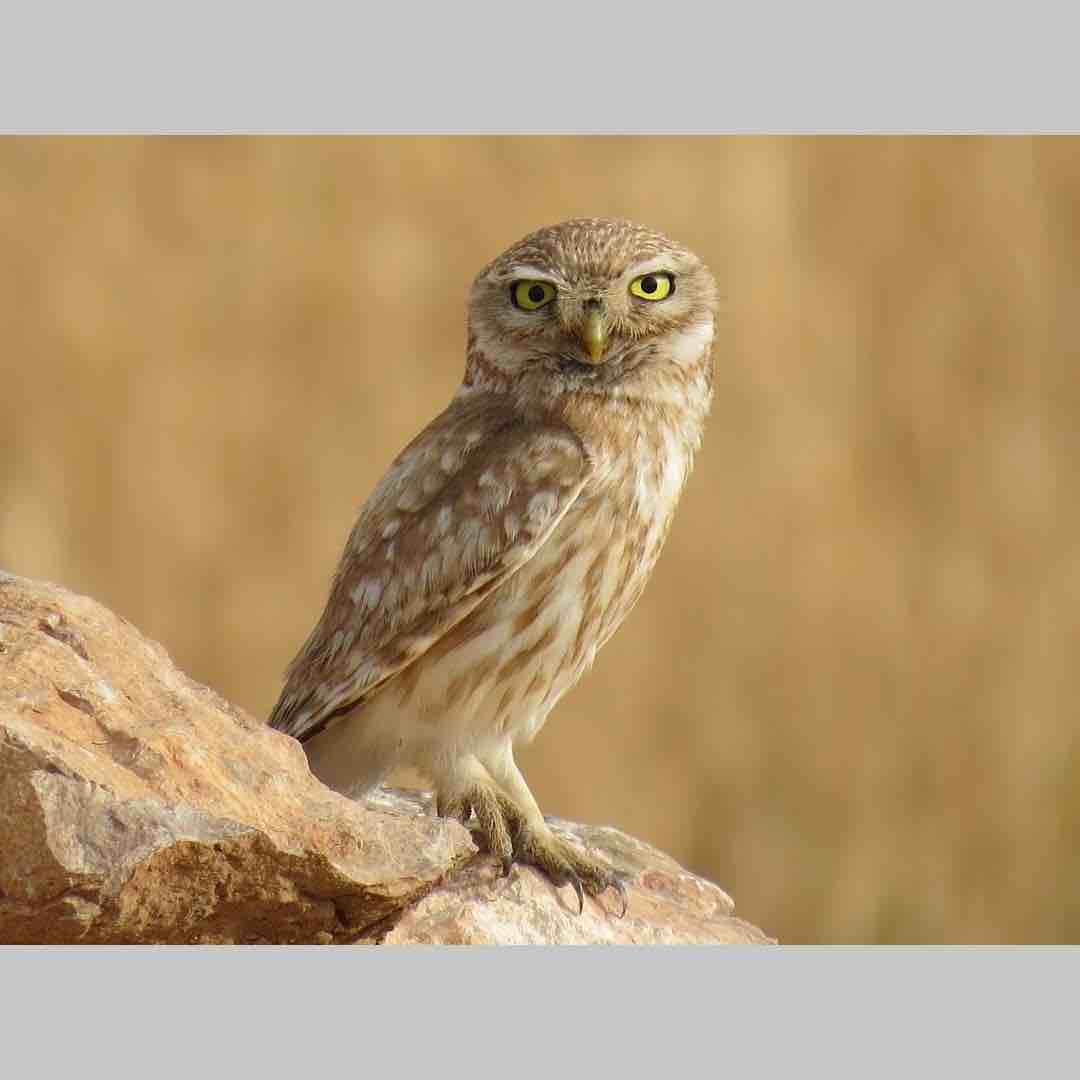}
      \includegraphics[trim={0pt 15pt 0pt 0pt}, width=.136\textwidth]{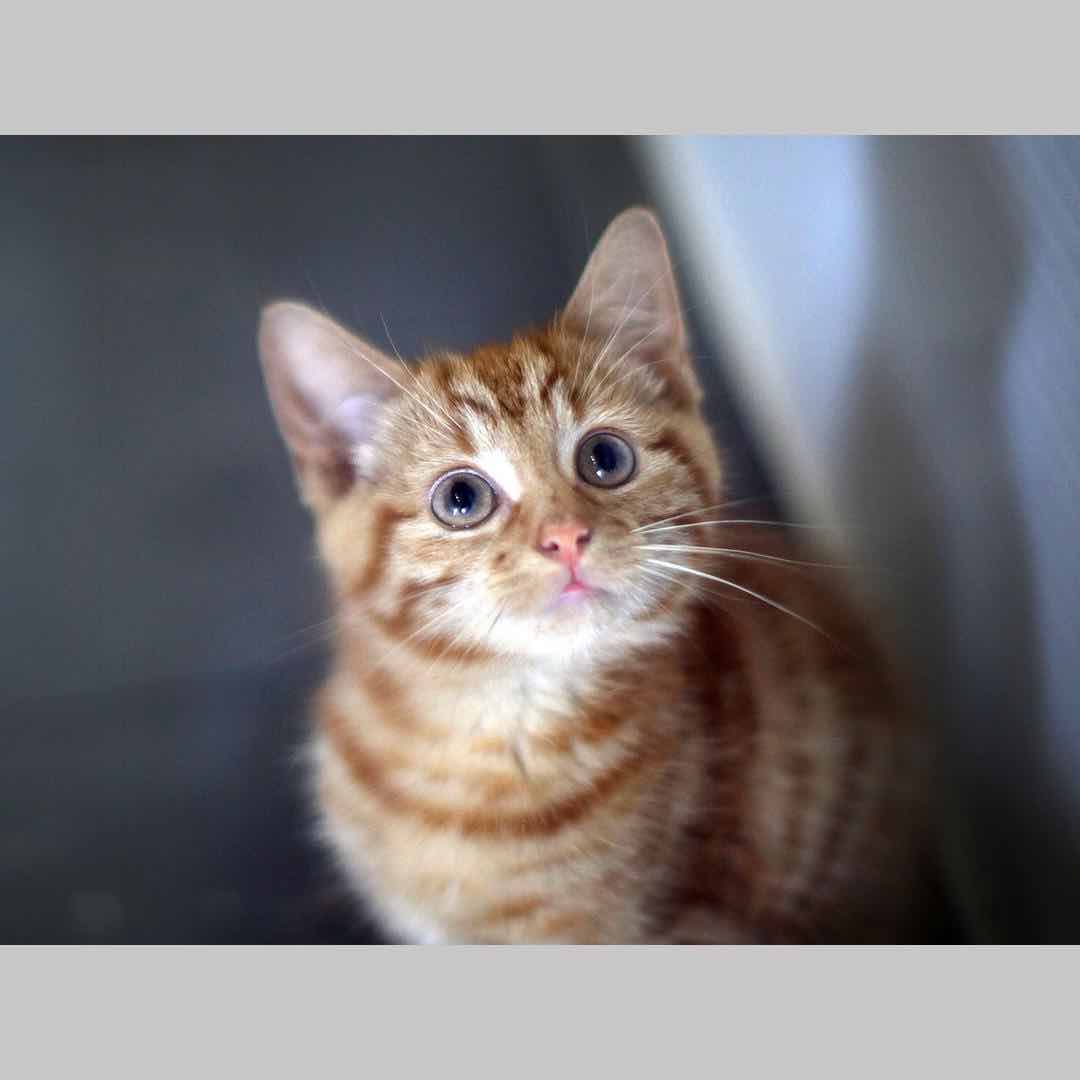}
      \includegraphics[trim={0pt 15pt 0pt 0pt}, width=.136\textwidth]{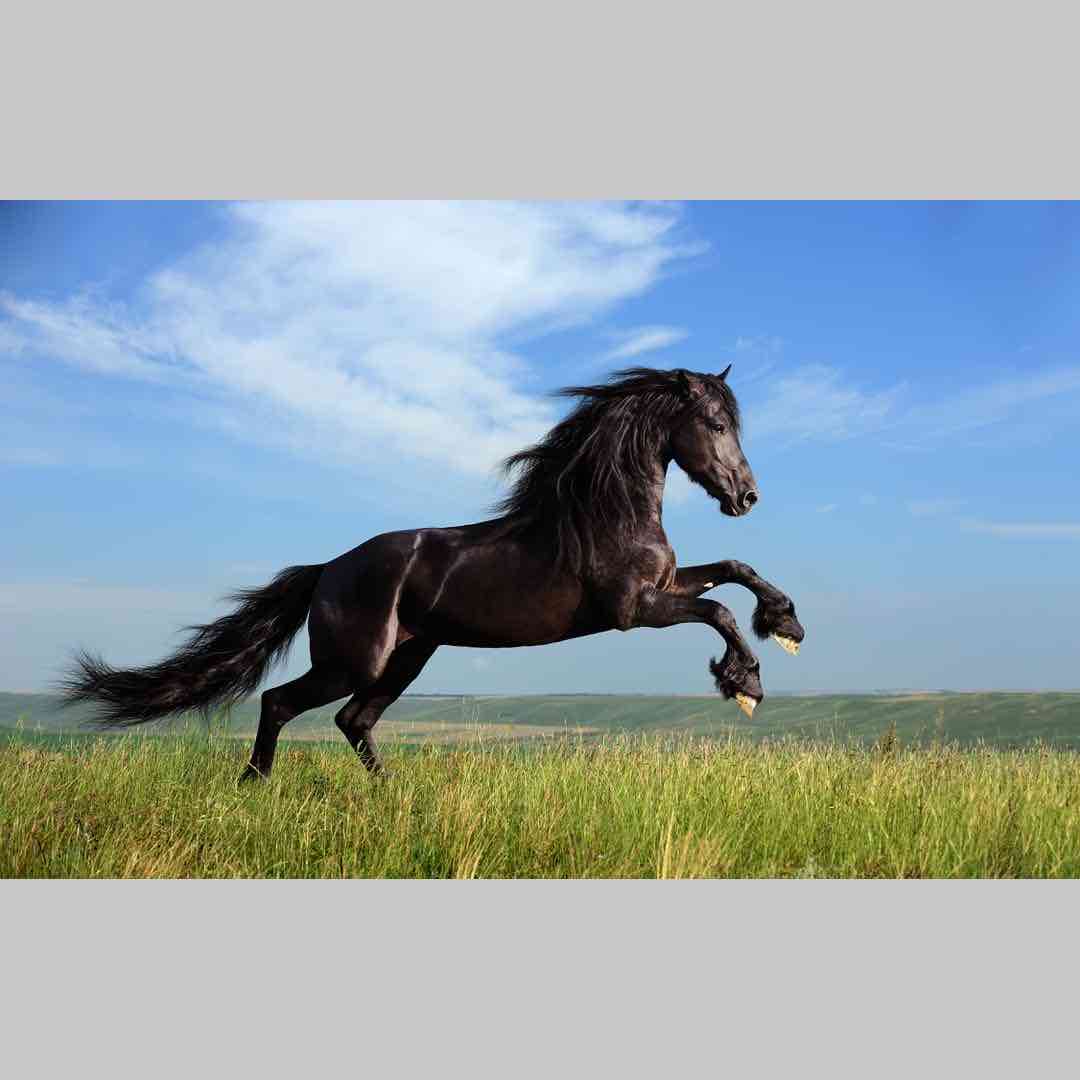}
      \includegraphics[trim={0pt 15pt 0pt 0pt}, width=.136\textwidth]{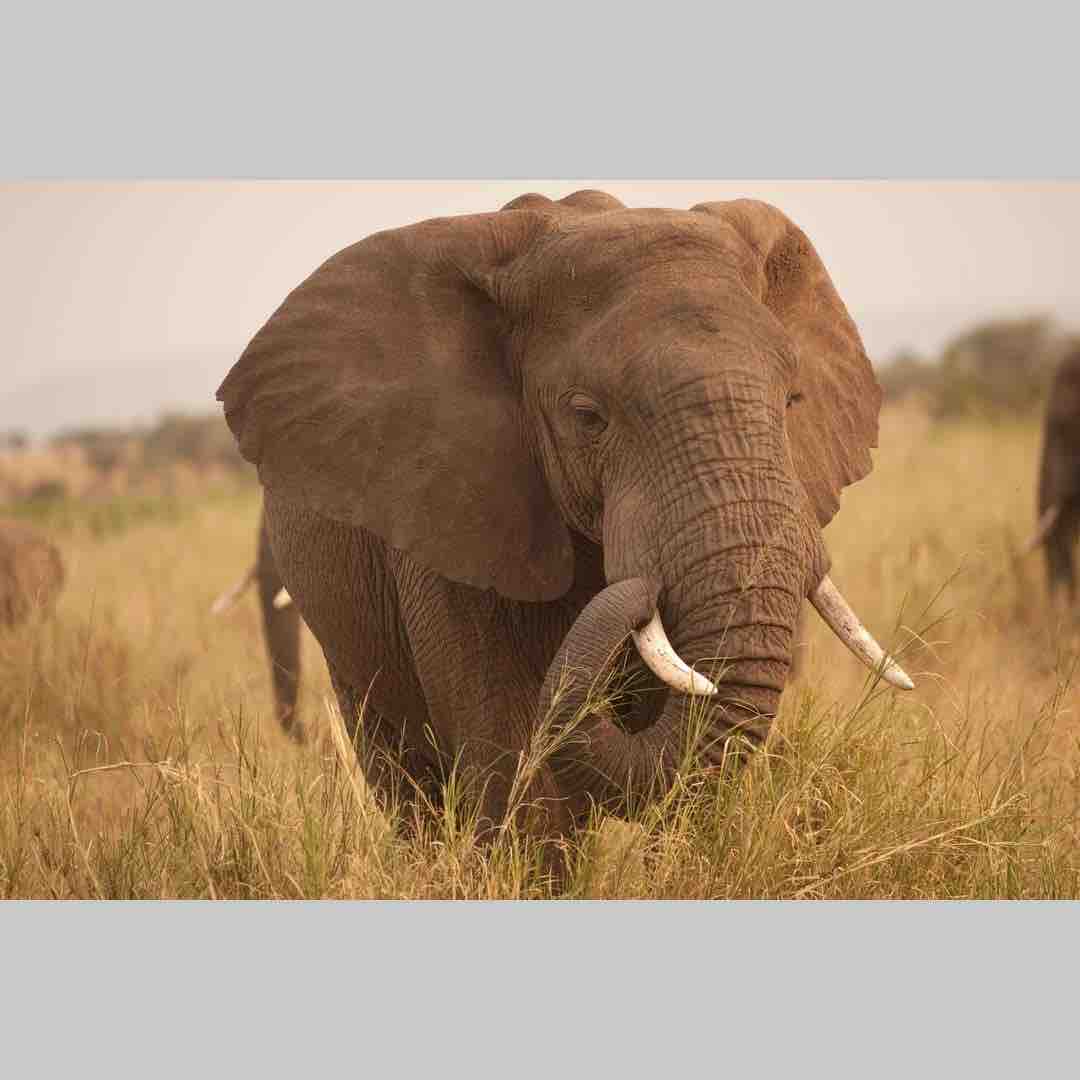}
      \includegraphics[trim={0pt 15pt 0pt 0pt}, width=.136\textwidth]{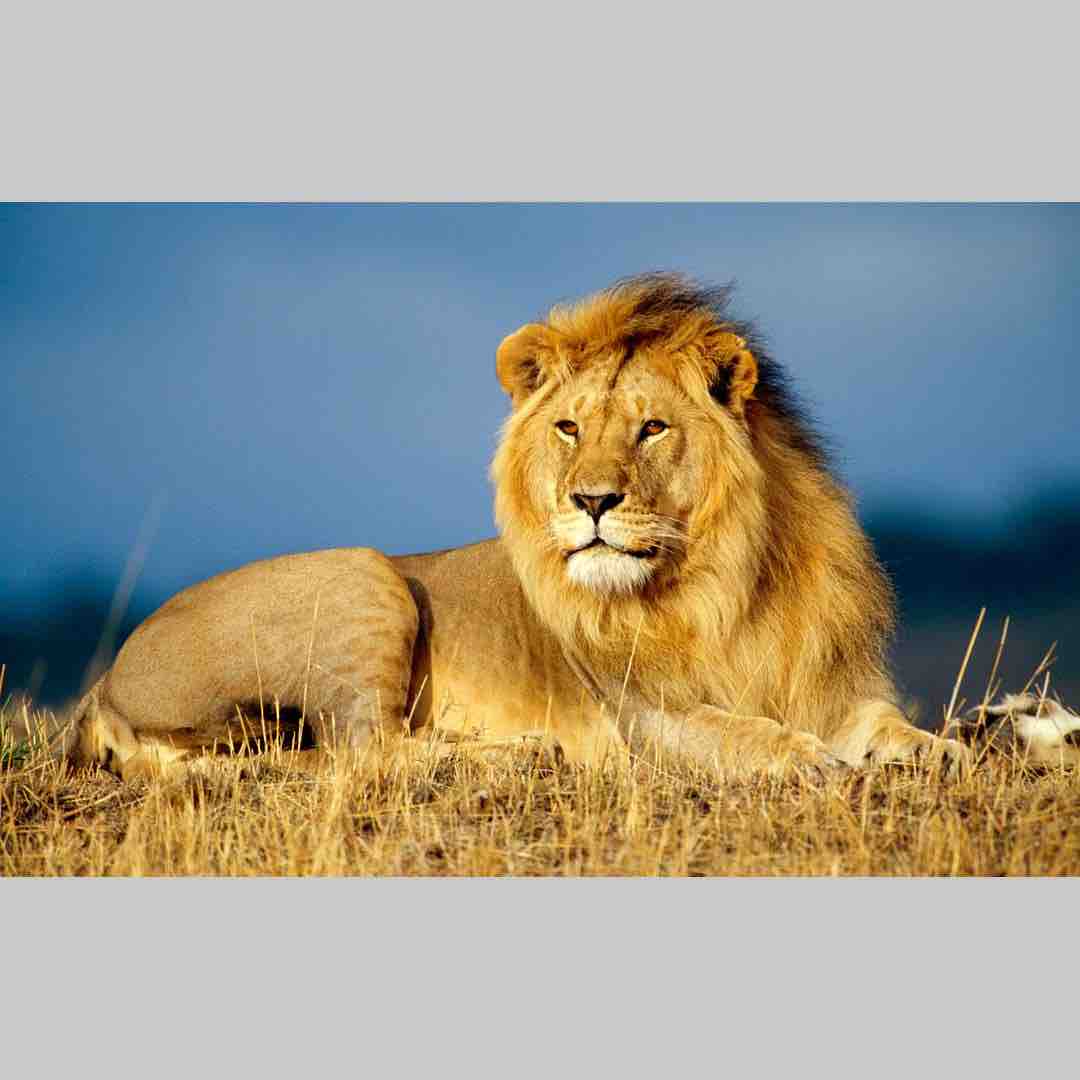}
      \includegraphics[trim={0pt 15pt 0pt 0pt}, width=.136\textwidth]{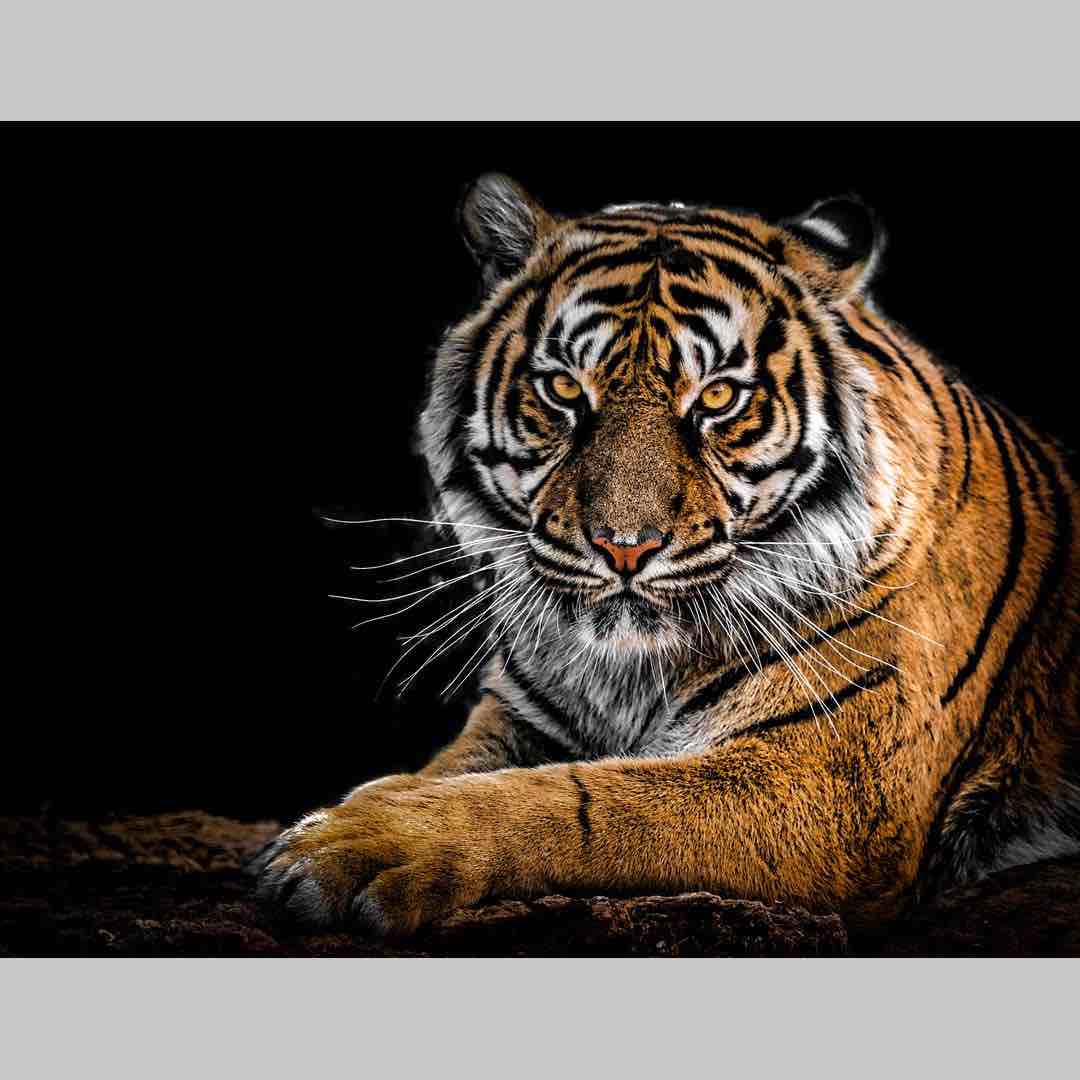}
      \includegraphics[trim={0pt 15pt 0pt 0pt}, width=.136\textwidth]{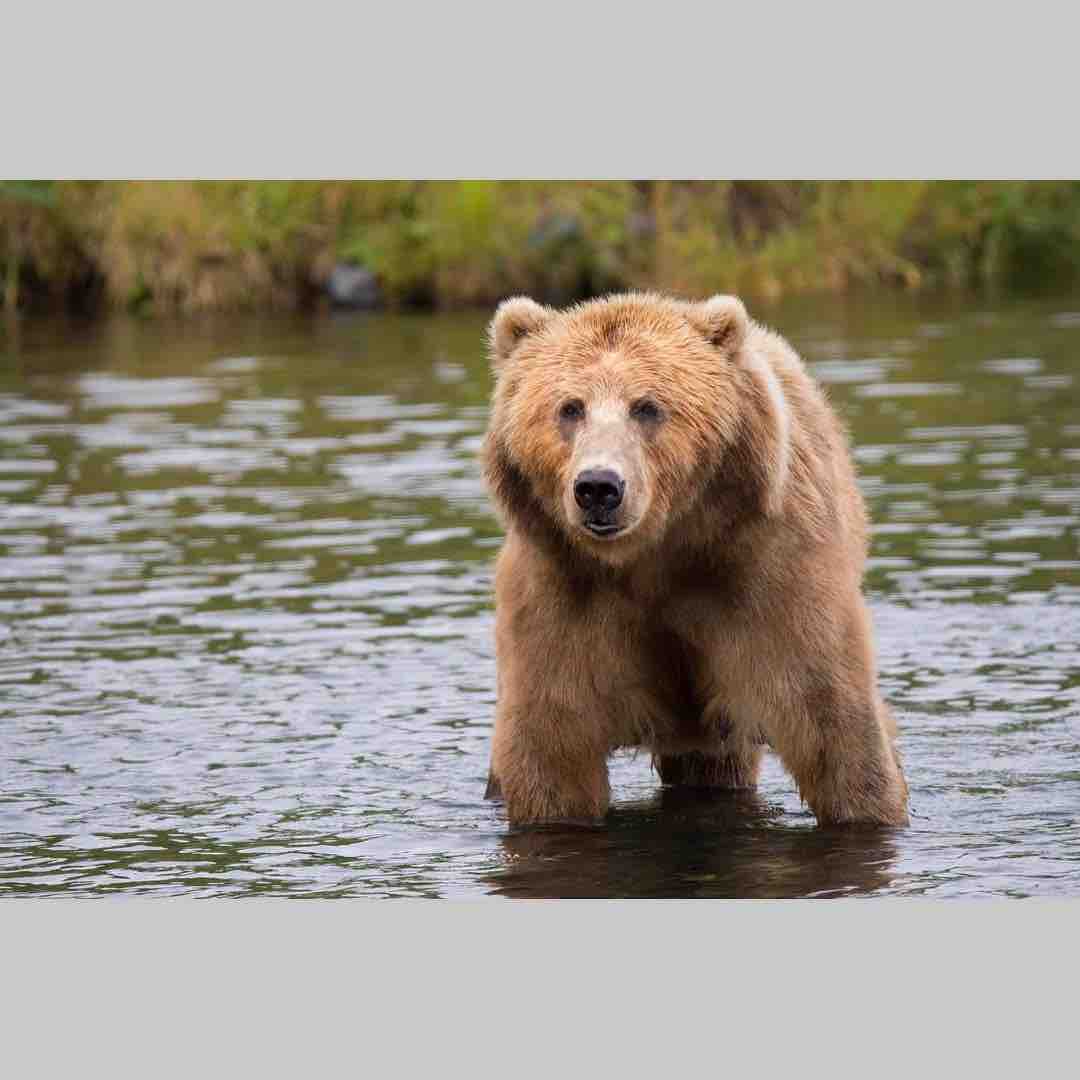}\\
    }
    \caption{Common: Owl, Cat, Horse, Elephant, Lion, Tiger, Bear}
  \end{subfigure}
  \begin{subfigure}[b]{\textwidth}
    \centering
    {
      \includegraphics[trim={0pt 15pt 0pt 0pt}, width=.136\textwidth]{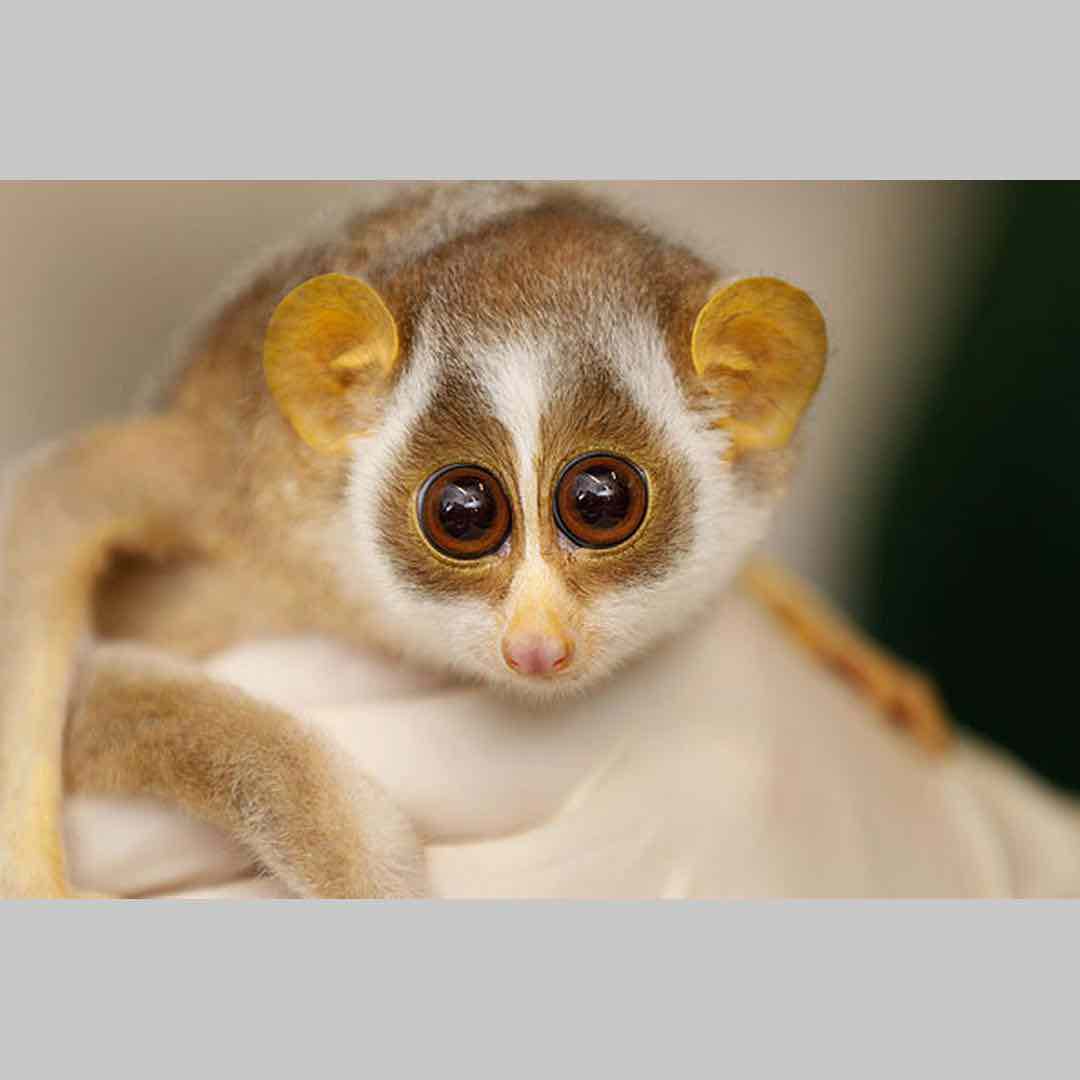}
      \includegraphics[trim={0pt 15pt 0pt 0pt}, width=.136\textwidth]{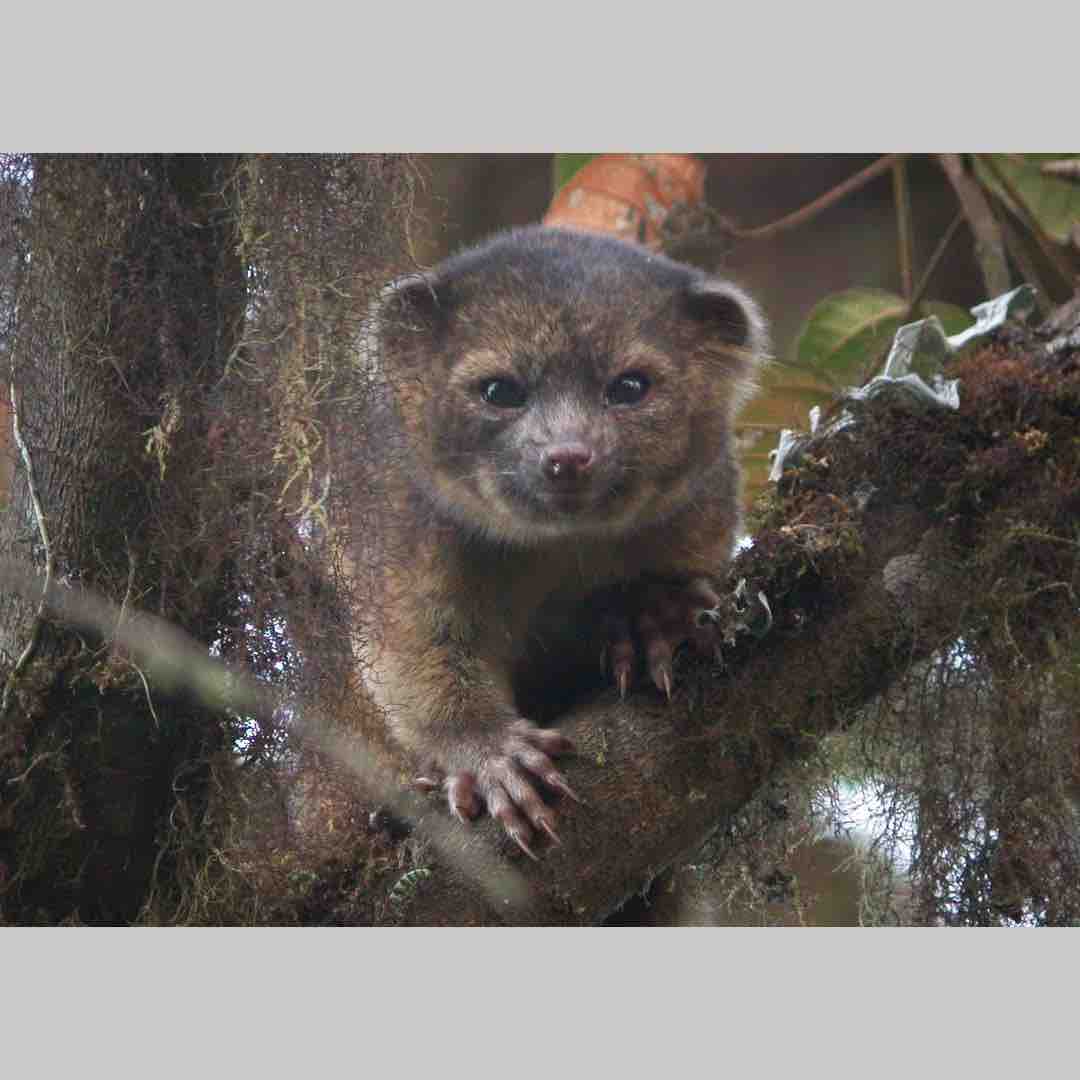}
      \includegraphics[trim={0pt 15pt 0pt 0pt}, width=.136\textwidth]{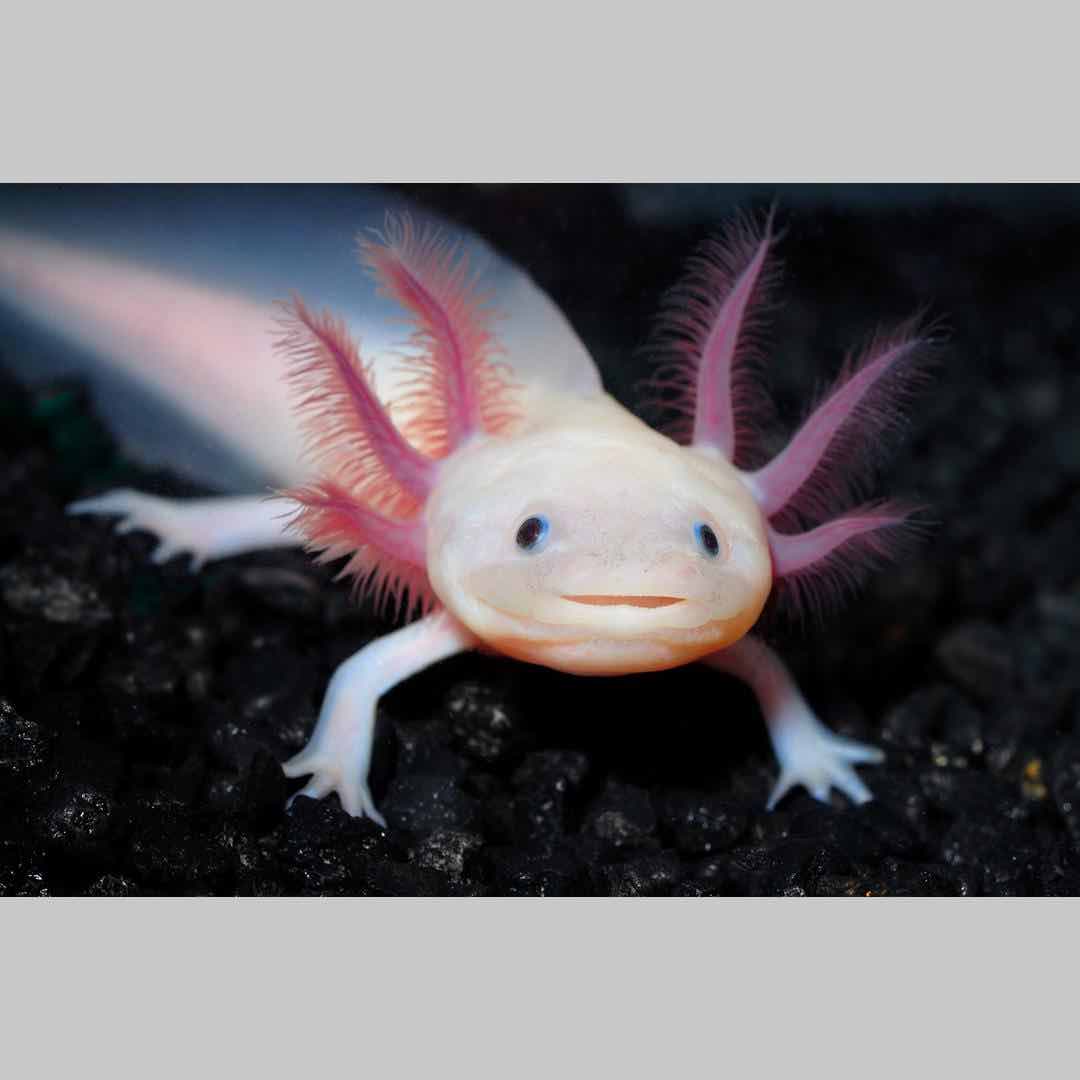}
      \includegraphics[trim={0pt 15pt 0pt 0pt}, width=.136\textwidth]{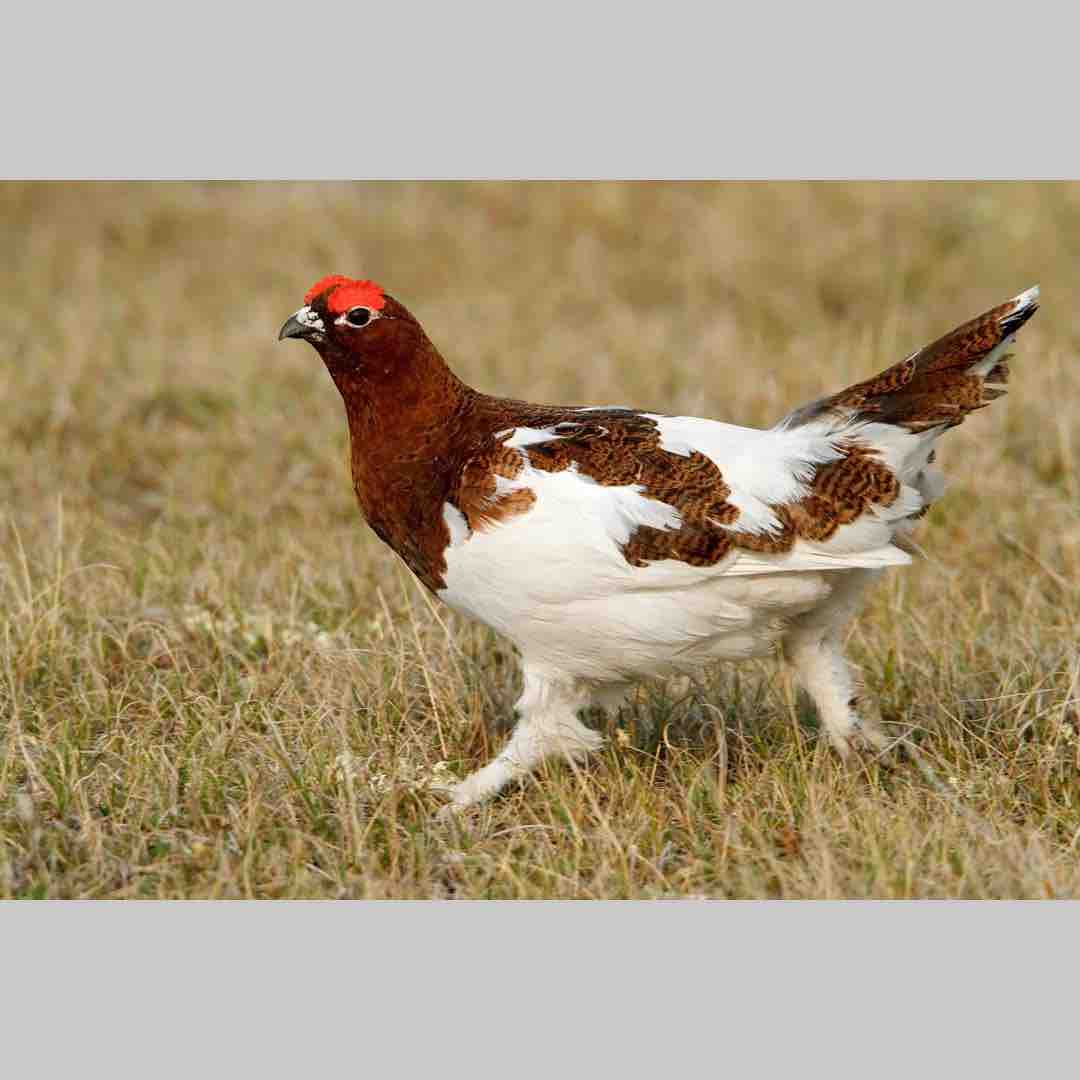}
      \includegraphics[trim={0pt 15pt 0pt 0pt}, width=.136\textwidth]{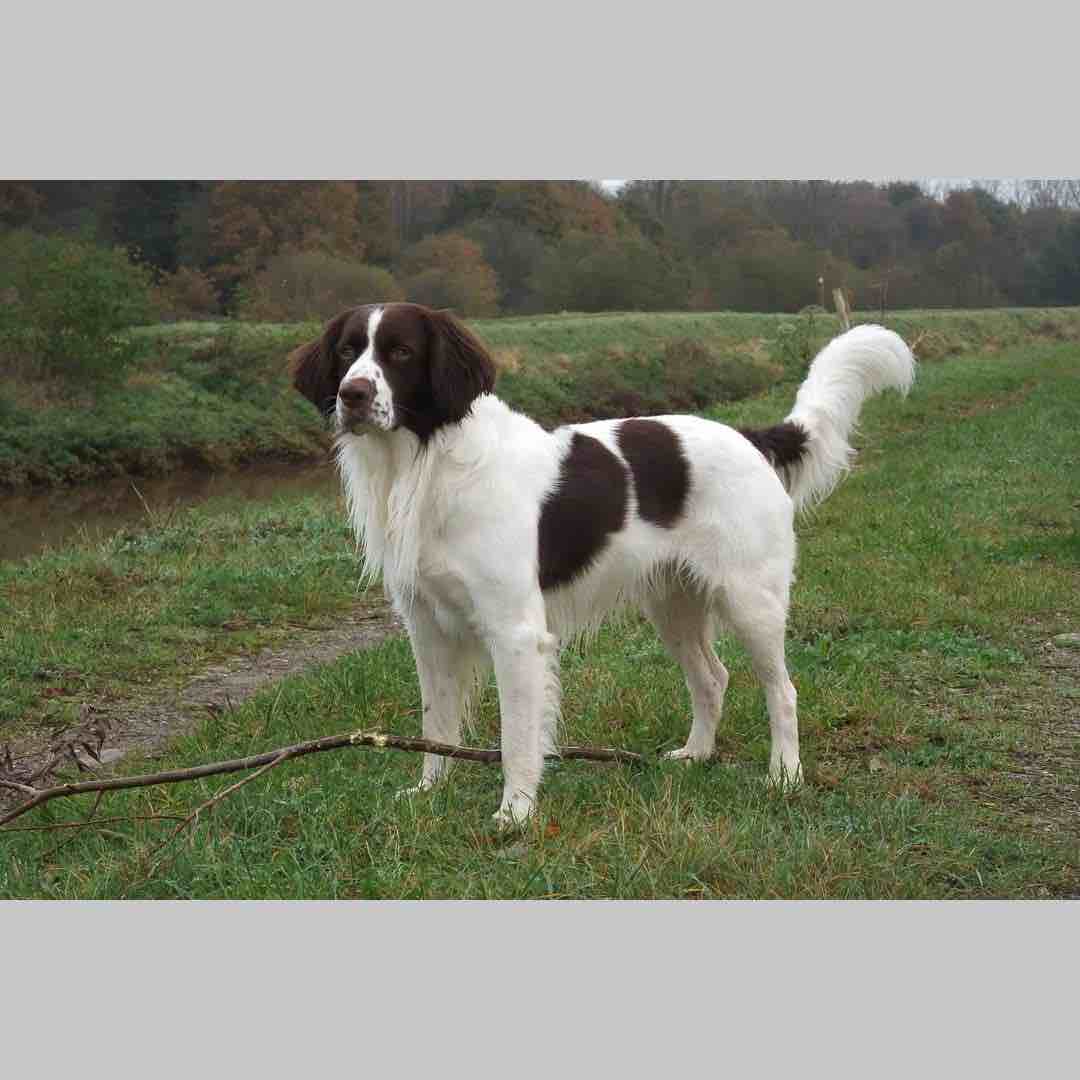}
      \includegraphics[trim={0pt 15pt 0pt 0pt}, width=.136\textwidth]{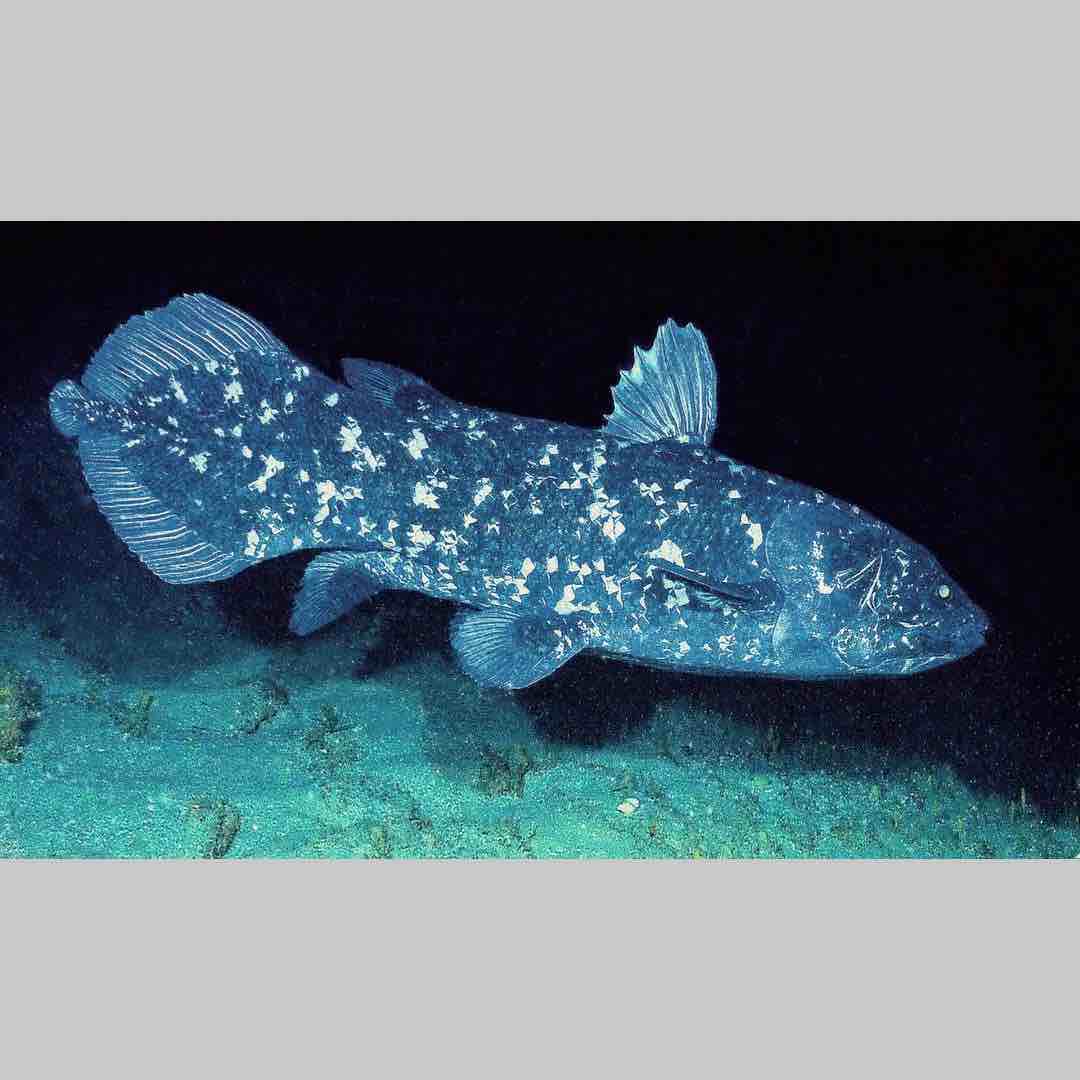}
      \includegraphics[trim={0pt 15pt 0pt 0pt}, width=.136\textwidth]{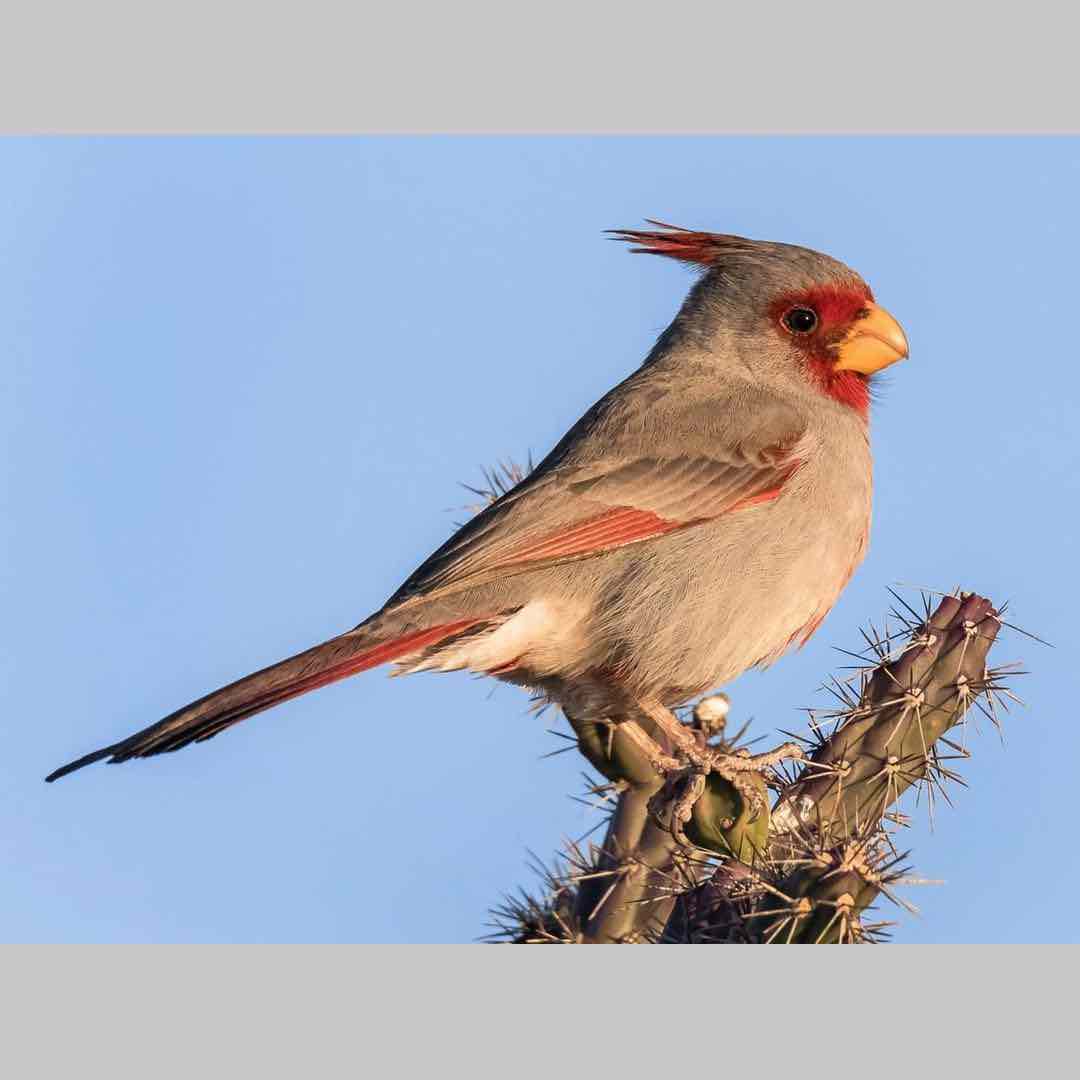}\\
    }
    \caption{Rare: Angwantibo, Olinguito, Axolotl, Ptarmigan, Patrijshond, Coelacanth, Pyrrhuloxia}
  \end{subfigure}
  \caption{Samples from the Biodiversity dataset}
  \label{fig:dataset:biodiversity}
\end{figure*}

\clearpage
\section{Teaching Algorithm and Analysis}\label{app:performanceanalysis}
\subsection{Analysis for HLR memory model}

\begin{figure*}[h!]
  \centering
  \begin{subfigure}[b]{0.24\textwidth}
    {
      \includegraphics[trim={0pt 15pt 0pt 0pt}, width=\textwidth]{./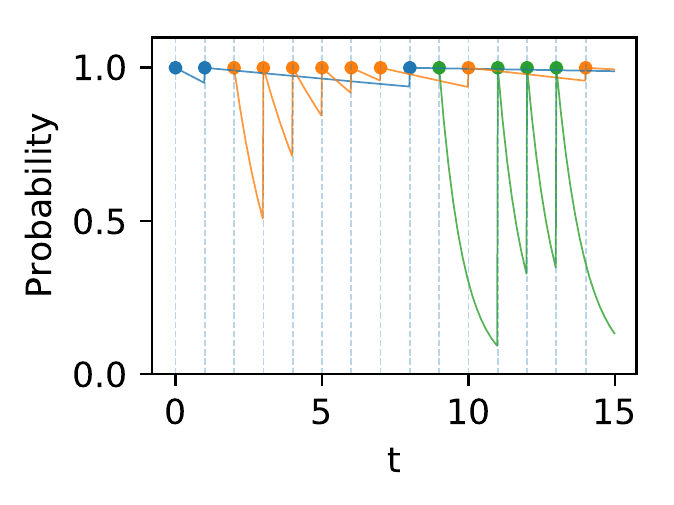}
      \caption{Greedy}
      \label{fig:empirical-bounds:greedy}
    }
  \end{subfigure}
  \begin{subfigure}[b]{0.24\textwidth}
    {
      \includegraphics[trim={0pt 15pt 0pt 0pt}, width=\textwidth]{./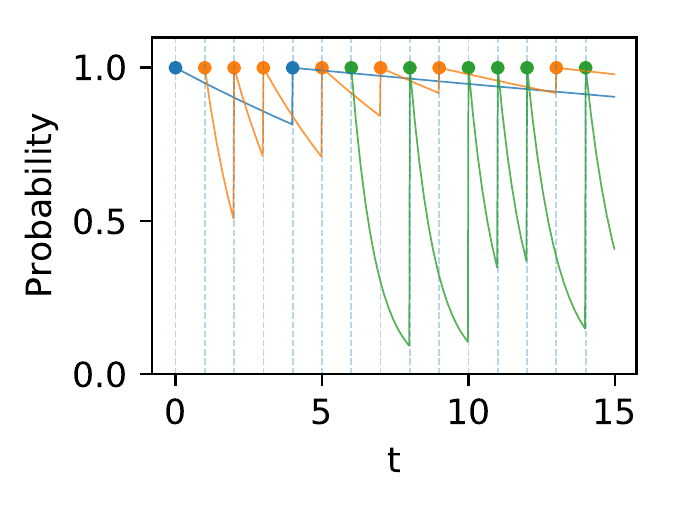}
      \caption{Optimal}
      \label{fig:empirical-bounds:opt}
    }
  \end{subfigure}
  \begin{subfigure}[b]{0.24\textwidth}
    {
      \includegraphics[trim={0pt 15pt 0pt 0pt}, width=\textwidth]{./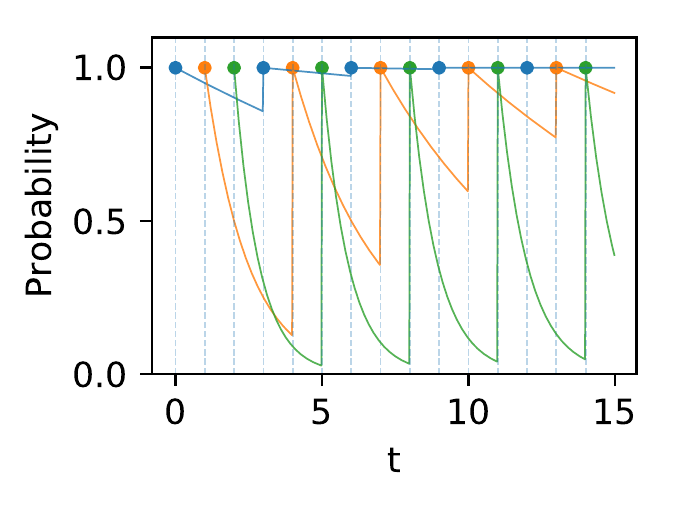}
      \caption{Round Robin}
      \label{fig:empirical-bounds:roundrobin}
    }
  \end{subfigure}
  \begin{subfigure}[b]{0.24\textwidth}
    {
      \includegraphics[trim={0pt 15pt 0pt 0pt}, width=\textwidth]{./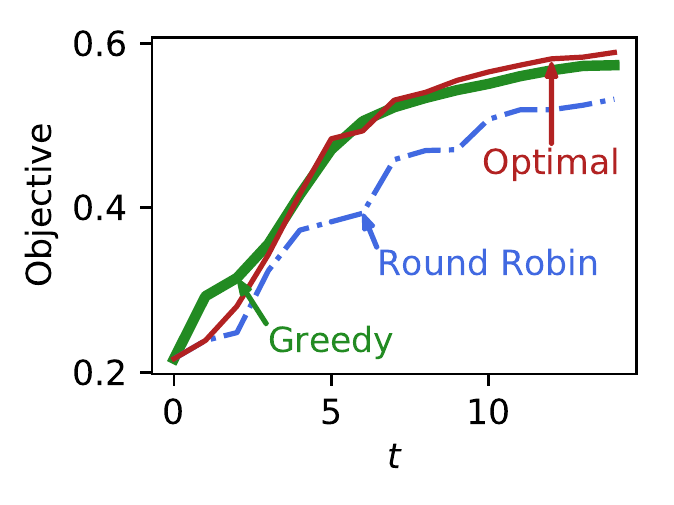}
      \caption{Objective}
      \label{fig:empirical-bounds:objective}
    }
  \end{subfigure}
  \begin{subfigure}[b]{0.24\textwidth}
    {
      \includegraphics[trim={0pt 15pt 0pt 0pt}, width=\textwidth]{./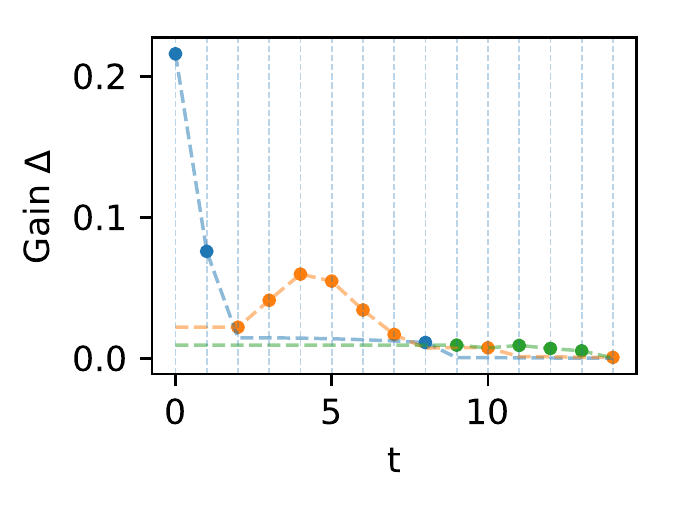}
      \caption{Marginal gain}
      \label{fig:empirical-bounds:gain}
    }
  \end{subfigure}
  \begin{subfigure}[b]{0.24\textwidth}
    {
      \includegraphics[trim={0pt 15pt 0pt 0pt}, width=\textwidth]{./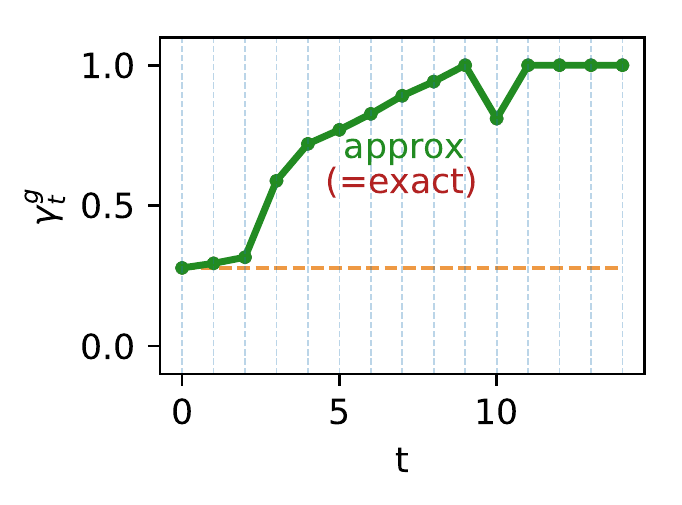}
      \caption{$\submratio_t^{\text{g}}$}
      \label{fig:empirical-bounds:subratio}
    }
  \end{subfigure}
  \begin{subfigure}[b]{0.24\textwidth}
    {
      \includegraphics[trim={0pt 15pt 0pt 0pt}, width=\textwidth]{./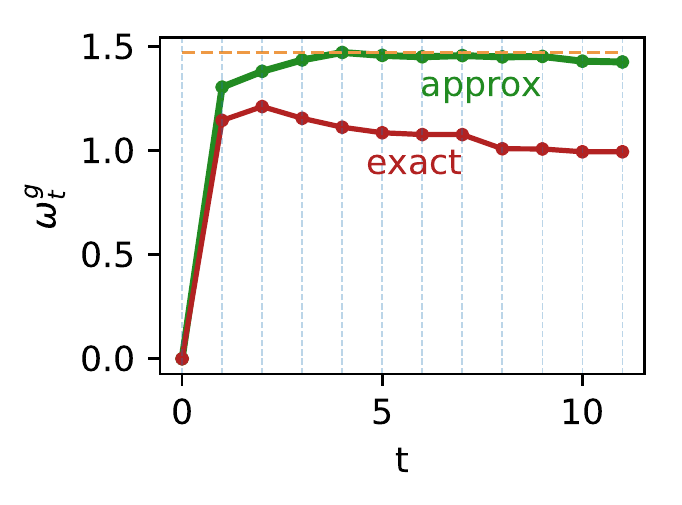}
      \caption{$\curvature_t^{\text{g}}$}
      \label{fig:empirical-bounds:curvature}
    }
  \end{subfigure}
  \begin{subfigure}[b]{0.24\textwidth}
    {
      \includegraphics[trim={0pt 15pt 0pt 0pt}, width=\textwidth]{./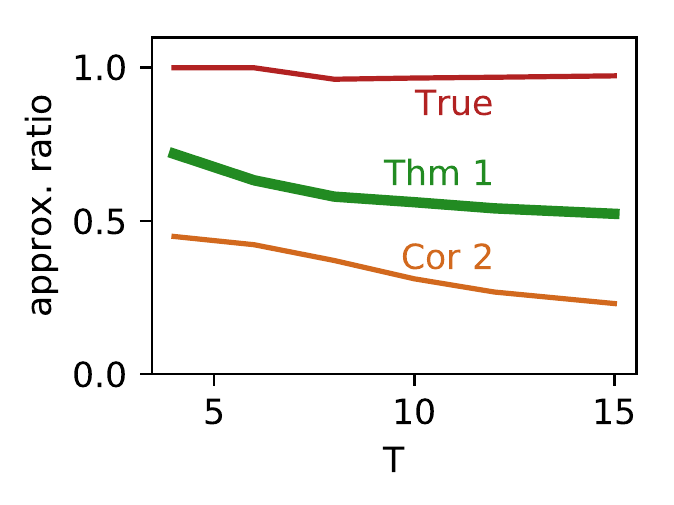}
      \caption{Empirical bounds}
      \label{fig:empirical-bounds:bounds}
    }
  \end{subfigure}
  \vspace{2mm}
  \caption{Performance analysis for the greedy algorithm when teaching an HLR learner with $T=15$ and $n=3$. Each colored marker from \figref{fig:empirical-bounds:greedy}--\ref{fig:empirical-bounds:roundrobin} represents a different concept, with $\theta_1 = (2.50, 2.50, 1.26)$ for blue, $(\theta_2 = 1.00, 1.00, -1.00)$ for orange, and $\theta_3 = (0.08, 0.08, -0.88)$ for the green concept. Intuitively, concepts with higher $\theta_i$ values are easier to teach.
  }
  \label{fig:empirical-bounds}
\end{figure*}

In \figref{fig:empirical-bounds}, we demonstrate the behavior of three teaching algorithms on a toy problem with $T=15, \numinstance=3$.
\figref{fig:empirical-bounds:greedy}-\ref{fig:empirical-bounds:roundrobin} show the learner's forgetting curve (i.e., recall probabilities) and the sequences selected by three algorithms: Greedy (\algref{alg:greedy}), Optimal (the optimal solution for Problem~\eqref{eq:optproblem}), and Round Robin (a fixed round robin teaching schedule for all concepts). Observe that Greedy starts with easy concepts
(i.e., concepts with higher memory retention rates), moves on to teaching new concepts when the learner has ``enough'' retention for the current concept, and repeats previously shown concepts towards the end of the teaching session. This behavior is similar to the optimal teaching sequence, and achieves higher utility in comparison to the fixed round robin scheduling (\figref{fig:empirical-bounds:objective}).

 In \figref{fig:empirical-bounds:gain}, we see that the marginal gain of the orange item is increasing in the early stages (as opposed to many classical discrete optimization problems that exhibit the diminishing returns property), which makes the analysis of the greedy algorithm non-trivial.
In \figref{fig:empirical-bounds:subratio} and \figref{fig:empirical-bounds:curvature}, we show the empirical bounds on $\submratio^{\text{g}}_t, \curvature^{\text{g}}_t$, as well as the exact values of $\submratio^{\text{g}}_t, \curvature^{\text{g}}_t$ when running the greedy algorithm. Note that our procedure for computing $\submratio^{\text{g}}_t$ actually outputs the \emph{exact} value of $\submratio^{\text{g}}_t$ (a n\"aive approach to computing $\submratio^{\text{g}}_t$ is via extensive enumeration of all possible teaching sequences). 

In  \figref{fig:empirical-bounds:bounds}, we plug in the empirical bounds on $\submratio^{\text{g}}_t$ and $\curvature^{\text{g}}_t$ to \thmref{thm:bound_general} and \corref{cor:bound_general}, and plot the empirical approximation bounds on $\avgobj{\policygreedy}/\avgobj{\policyopt}$ as a function of the teaching horizon $T$. For problem instances with a large teaching horizon $T$, it is infeasible to compute the true approximation bound. However, one can still efficiently compute the empirical approximation bound as a useful indicator of the performance of our algorithm.


\section{Proofs}\label{sec.appendix}

\subsection{Proof of \thmref{thm:nphard}}

\noindent
In this section, we provide the proof of \thmref{thm:nphard}. We first show that any non-negative string submodular function can be represented as a constant factor of the objective function $\objective$ as defined in Eq.~\eqref{eq:objective}. We then prove the NP-hardness of the optimization problem (Problem \eqref{eq:optproblem}) by the NP-hardness result of string submodular optimization \cite{zhang2016string}.

\begin{proof}
  Recall from Eq.~\eqref{eq:objective} that
  \[
    \objective(\seqoflen{t}, \obsoflen{t}) = \nonumber
    \frac{1}{\numinstance\horizon}
    \sum_{\instance=1}^{\numinstance}
    \sum_{\tau=1}^\horizon \RecallPrOf{i}{\tau+1, \paren{\seqoflen{\min(\tau, t)}, \obsoflen{\min(\tau, t)}}}.
  \]

\noindent
  In the following, we show how one can represent an arbitraty non-negative string submodular function in the form of the RHS of the above equation (i.e., Eq.~\eqref{eq:objective}). Let $\mu: \Sigma\rightarrow \NonNegativeReals$  
  be a (non-negative) string submodular function, where $\Sigma$ denote the set of possible sequences defined over $n$ items (i.e., concepts). For a fixed budget $T$, the string submodular optimization problem can be stated as follows \cite{zhang2016string}:
 \begin{align}
  \sigma^* = \argmax_{\sigma \in \Sigma, |\sigma| = T} \mu(\sigma).
 \end{align} 

\noindent
  For any sequence $\sigma\in \Sigma$, define $t_i(\sigma)$ to be the (time) index of item $i$ in the sequence\footnote{We consider that an item cannot appear twice in a sequence $\sigma \in \Sigma$}. That is,
  \[
    t_i(\sigma) =
    \begin{cases}
      \text{the index of item $i$ in $\sigma$} & \text{if~} i\in \sigma \\
      0 & \text{o.w.}
    \end{cases}
  \]
  
  \noindent
  For simplicity, we drop the dependency of $\sigma$ in $t_i(\sigma)$ when it is clear from the context. Define
  \begin{align}
    \RecallPrOf{i}{\tau+1, \paren{\seqoflen{\min(\tau, t)}, \cdot }} =
    \begin{cases}
    \frac{ \numinstance }{c} \cdot \mu(i) & \text{if~} t_i = 1
      \\
      \frac{ \numinstance \horizon }{c \cdot (\horizon -t_i+1)} \cdot \mu(i \mid \sigma_{1:t_i-1}) & \text{if~} 1 < t_i \leq \min(\tau, t)
      \\
      0 & \text{if~} t_i > \min(\tau, t) \text{~or~} t_i = 0\\
    \end{cases}\label{eq:gdef:reduction}
  \end{align}
  
  \noindent
  Here, $\mu(i\mid \sigma_{1:t_i-1}) = \mu(\sigma_{1:t_i-1}\oplus i) - \mu(\sigma_{1:t_i-1})$ denotes the marginal gain of item $i$. Since $\mu_i(\cdot)$ is (string) submodular, we set $c = \numinstance \horizon \cdot\max_i \mu(i) $ as a normalizing constant so that $\RecallPrOf{i}{\tau+1, \paren{\seqoflen{\min(\tau, t)}, \cdot }} \leq  1$.

  \noindent
  Substituting $g_i$ on the RHS of Eq.\eqref{eq:objective} by Eq.\eqref{eq:gdef:reduction}, we get
  \begin{align*}
    \objective(\seqoflen{t}, \cdot)
    &=
      \frac{1}{\numinstance\horizon}
      \sum_{\instance=1}^{\numinstance}
      \sum_{\tau=1}^\horizon \RecallPrOf{i}{\tau+1, \paren{\seqoflen{\min(\tau, t)}, \cdot }} \nonumber \\
    & \stackrel{\text{(a)}}{=}
      \frac{1}{\numinstance\horizon}
      \sum_{\instance\in \seqoflen{t}}
      \sum_{\tau=1}^\horizon \RecallPrOf{i}{\tau+1, \paren{\seqoflen{\min(\tau, t)}, \cdot }} \nonumber \\
    & \stackrel{(b)}{=}
      \frac{1}{\numinstance\horizon}
      \sum_{i \in \sigma_{1:t}}
      {\paren{\horizon -t_i +1}} \cdot \frac{ \numinstance \horizon }{c \cdot (\horizon -t_i+1)} \cdot \mu(i \mid \sigma_{1:t_i-1})  \nonumber \\
    & = \sum_{i \in \sigma_{1:t}}
      \frac{1}{c} \cdot \mu(i \mid \sigma_{1:t_i-1})  \nonumber \\
    & = \frac{1}{c} \cdot \mu(\sigma_{1:t})
  \end{align*}
  Here, step (a) and (b) are by the definition of $g_i$ in \text{Eq.~}\eqref{eq:gdef:reduction}.
  Therefore, for any sequence $\sigma \in \Sigma$, one can represent $\mu(\sigma)$ in terms of $c \objective(\sigma, \cdot)$. By the NP-hardness result of string submodular optimization \cite{zhang2016string}, we conclude that the general optimization problem (Problem \eqref{eq:optproblem}) is NP-hard.
\end{proof}

\subsection{Proof of \thmref{thm:bound_general} and \corref{cor:bound_general}}\label{sec:appendix:notations}
\subsubsection{Notations and definitions}
For simplicity, we first introduce the notation which will be used in the proof.

Let us use function $\realization(i, t)$ to represent a learner's recall of item $i$ at $t$, where $\realization(i, t) = 1$ indicates that the learner recalls item $i$ correctly at time $t$, and $\realization(i, t) = 0$ otherwise. We call the function $\realization$ a \emph{realization}, and use $\Realization$ to denote a random realization. A realization $\realization$ is consistent with the observation history $(\seqoflen{t}, \obsoflen{t})$, if $\realization(\seqelement{\tau}, \tau) = \obselement{\tau}$ for all $\tau\in \{1,\dots, t\}$. We denote such a case by $\realization \sim (\seqoflen{t}, \obsoflen{t})$.



We further use $(\seqpolicy(\realization), \obspolicy(\realization))$ to denote the sequence of items and observations obtained by running policy $\policy$ under realization $\realization$. Here, $\seqpolicy(\realization)$ denotes the sequence of items selected by $\policy$ if the learner is responding according to $\realization$.


Similarly with the conditional marginal gain of an item (\eqnref{eq:marginal_gain_item}), we define the conditional marginal gain of a sequence of items as follows.
\begin{definition}[Conditional marginal gain of a sequence]\label{def:marginal_gain_policy}
  Given observation history $(\seqoflen{t}, \obsoflen{t})$, the conditional marginal gain of a sequence of items $\sequence$ is defined as
  \begin{align}
    \gain{\sequence \given \seqoflen{t}, \obsoflen{t}} = \expctover{}{\objective(\seqoflen{t} \oplus \sequence, \obsoflen{t} \oplus \observation) - \objective(\seqoflen{t}, \obsoflen{t}) \given (\seqoflen{t}, \obsoflen{t})}.\label{eq:marginal_gain_sequence}
  \end{align}
\end{definition}
We also define the conditional marginal gain of a policy.
\begin{definition}[Conditional marginal gain of a policy]\label{def:marginal_gain_policy}
  Given observation history $(\seqoflen{t}, \obsoflen{t})$, the conditional marginal gain of a policy $\pi$ is defined as
  \begin{align}
    \gain{\policy \given \seqoflen{t}, \obsoflen{t}} = \expctover{}{\objective(\seqoflen{t} \oplus \seqpolicy(\Realization), \obsoflen{t} \oplus \obspolicy(\Realization)) - \objective(\seqoflen{t}, \obsoflen{t}) \given \Realization \sim(\seqoflen{t}, \obsoflen{t})}.\label{eq:marginal_gain_policy}
  \end{align}
\end{definition}
By $\seqoflen{t} \oplus \seqpolicy(\Realization)$, we mean concatenating the sequence chosen by $\pi$ under realization $\Phi$ (i.e., $\seqpolicy(\Realization)$) with some existing history $\seqoflen{t}$ (note that the first $t$ elements of $\seqpolicy(\Realization)$ could be completely different from $\seqoflen{t}$).

\subsubsection{Proof of \thmref{thm:bound_general}}

To prove \thmref{thm:bound_general}, we first establish a lower bound on the one-step gain of the greedy algorithm. The following lemma provides a lower bound of the one-step conditional marginal gain of the greedy policy $\policygreedy$ against the conditional marginal gain of any policy (of length $\horizon$).
\begin{lemma}
  \label{lem:data_dependent_bound}
  Suppose we have selected sequence $\seqoflen{t}$ and observed $\obsoflen{t}$. Then, for any policy $\policy$ of length $T$,
  \begin{align}
    \max_i \gain{i \given \seqoflen{t}, \obsoflen{t}} \geq \frac{\submratio^\policy_t}{T} \gain{\policy \given \seqoflen{t}, \obsoflen{t}}\label{eq:data_dependent_bound}
  \end{align}

\end{lemma}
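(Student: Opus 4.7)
The plan is to bound the policy's total conditional marginal gain from above by $T$ copies of the current best single-item gain, scaled by $1/\submratio^\policy_t$, and then rearrange. Concretely, I would first decompose $\gain{\policy \given \seqoflen{t}, \obsoflen{t}}$ into a telescoping sum over the $T$ steps of $\policy$. Writing $i_\tau^\policy$ for the (random) item chosen by $\policy$ at step $\tau$ and $(\seqpolicyoflen{\tau-1}, \obspolicyoflen{\tau-1})$ for the induced history up to that point, the tower property together with \defref{def:marginal_gain_policy} yields
\[
  \gain{\policy \given \seqoflen{t}, \obsoflen{t}}
  = \sum_{\tau=1}^{T} \expctover{\seqpolicyoflen{\tau-1}, \obspolicyoflen{\tau-1}}{\gain{i_\tau^\policy \given \seqoflen{t} \oplus \seqpolicyoflen{\tau-1}, \obsoflen{t} \oplus \obspolicyoflen{\tau-1}}}.
\]

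Next, for each inner term I would invoke \defref{def:submratio}. Applied with anchor history $(\seqoflen{t}, \obsoflen{t})$ and extension $(\seqpolicyoflen{\tau-1}, \obspolicyoflen{\tau-1})$, the submodularity coefficient gives
\[
  \gain{i_\tau^\policy \given \seqoflen{t} \oplus \seqpolicyoflen{\tau-1}, \obsoflen{t} \oplus \obspolicyoflen{\tau-1}} \leq \frac{1}{\submratio^\policy_t}\, \gain{i_\tau^\policy \given \seqoflen{t}, \obsoflen{t}}.
\]
Since this holds for every realization of the random item $i_\tau^\policy$, I can upper bound each such gain by $\max_i \gain{i \given \seqoflen{t}, \obsoflen{t}}$, a quantity no longer depending on $\tau$ or on the $\policy$-induced randomness. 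Substituting into the telescoping identity and summing $T$ identical upper bounds produces
\[
  \gain{\policy \given \seqoflen{t}, \obsoflen{t}} \leq \frac{T}{\submratio^\policy_t}\, \max_i \gain{i \given \seqoflen{t}, \obsoflen{t}},
\]
which rearranges to \eqnref{eq:data_dependent_bound}.

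The main obstacle is Step 2: aligning the quantifiers in \defref{def:submratio} with the anchor history and extensions arising in the telescoping sum. The definition of $\submratio^\policy_t$ in \eqnref{eq:submratio} takes a minimum over histories of length $t$ reachable under $\policy$, while the lemma fixes an arbitrary pre-selected $(\seqoflen{t}, \obsoflen{t})$ and the extension $(\seqpolicyoflen{\tau-1}, \obspolicyoflen{\tau-1})$ is appended on top of it. One has to verify that (i) the inner minimum in $\submratio(\cdot, \cdot)$ indeed covers every realized $(\seqpolicyoflen{\tau-1}, \obspolicyoflen{\tau-1})$ of total length at most $T-1$, which is exactly the budget $|\sequence| + |\sequence'| < T$ imposed in \defref{def:submratio}; and (ii) the outer minimum is consistent with the use of the lemma downstream in the analysis of \algref{alg:greedy}, where the anchor history is the one traced out by the greedy policy. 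These issues are notational rather than mathematical, but they need careful bookkeeping to ensure that the single constant $\submratio^\policy_t$ validly dominates every term in the telescoping sum.
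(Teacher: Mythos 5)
Your proposal is correct and follows essentially the same route as the paper's proof: a telescoping decomposition of $\gain{\policy \given \seqoflen{t}, \obsoflen{t}}$ via the law of total expectation into expected one-step conditional gains, each bounded through the online stepwise submodular coefficient by $\frac{1}{\submratio^\policy_t}\max_i \gain{i \given \seqoflen{t}, \obsoflen{t}}$, and then summed over the $T$ steps. The quantifier bookkeeping you flag is resolved exactly as you suggest: the extensions arising in the telescoping sum respect the budget $|\sequence|+|\sequence'|<T$ in \defref{def:submratio}, and when the lemma is invoked in the proof of \thmref{thm:bound_general} the anchor history is the trajectory traced out by the greedy policy, matching the outer minimum.
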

\begin{proof}
  By \defref{def:marginal_gain_policy} we know that for all $\policy$ it holds that
  \begin{align}
    \gain{\policy \given \seqoflen{t}, \obsoflen{t}}
    &= \expctover{}{\objective(\seqoflen{t} \oplus \seqpolicyoflen{T}(\Realization), \obsoflen{t} \oplus \obspolicyoflen{T}(\Realization)) - \objective(\seqoflen{t}, \obsoflen{t}) \given \Realization \sim(\seqoflen{t}, \obsoflen{t})}
      \nonumber \\
    &\stackrel{(a)}{=} \mathbb{E}\left[\sum_{\tau=1}^{T} \left(
      \objective(\seqoflen{t} \oplus \seqpolicyoflen{\tau}(\Realization), \obsoflen{t} \oplus \obspolicyoflen{\tau}(\Realization)) -
      \right.\right.
      \nonumber \\
    & \qquad \qquad \left.\left.
      \objective(\seqoflen{t} \oplus \seqpolicyoflen{\tau-1}(\Realization), \obsoflen{t} \oplus \obspolicyoflen{\tau-1}(\Realization))
      \right) \given \Realization \sim(\seqoflen{t}, \obsoflen{t})\right]
      \nonumber \\
    &= \sum_{\tau=1}^{T} \mathbb{E}\left[
      \objective(\seqoflen{t} \oplus \seqpolicyoflen{\tau}(\Realization), \obsoflen{t} \oplus \obspolicyoflen{\tau}(\Realization)) -
      \right.
      \nonumber \\
    & \qquad \qquad \quad \left.
      \objective(\seqoflen{t} \oplus \seqpolicyoflen{\tau-1}(\Realization), \obsoflen{t} \oplus \obspolicyoflen{\tau-1}(\Realization))
      \given \Realization \sim(\seqoflen{t}, \obsoflen{t})\right]
      \nonumber \\
    &\stackrel{(b)}{=} \sum_{\tau=1}^{T} \mathbb{E}\left[ \mathbb{E}\left[
      \objective(\seqoflen{t} \oplus \seqpolicyoflen{\tau}(\Realization'), \obsoflen{t} \oplus \obspolicyoflen{\tau}(\Realization')) -
      \right. \right.
      \nonumber \\
    & \qquad \qquad \quad \left. \left.
      \objective(\seqoflen{t} \oplus \seqpolicyoflen{\tau-1}(\Realization'), \obsoflen{t} \oplus \obspolicyoflen{\tau-1}(\Realization'))
      \right. \right.
      \nonumber \\
    & \qquad \qquad \quad \left. \left.
      \bigm\mid
      \Realization' \sim(\seqoflen{t} \oplus \seqpolicyoflen{\tau-1}(\Realization), \obsoflen{t} \oplus \obspolicyoflen{\tau-1}(\Realization))
      \right]
      \bigm\mid
      \Realization \sim(\seqoflen{t}, \obsoflen{t})
      \right] \nonumber \\
    &\stackrel{\eqnref{eq:marginal_gain_item}}{=} \sum_{\tau=1}^{T} \mathbb{E}\left[
      \gain{\seqpolicyelement{\tau}(\Realization') \given
      \Realization' \sim
      \seqoflen{t} \oplus \seqpolicyoflen{\tau-1}(\Realization), \obsoflen{t} \oplus \obspolicyoflen{\tau-1}(\Realization)}
      \right.
      \nonumber \\
    & 
      \hspace{0.55\textwidth}\left.
      \bigm\mid
      \Realization \sim(\seqoflen{t}, \obsoflen{t})
      \right] \label{eq:gain_policy_expansion}
  \end{align}
  Here, step (a) is a telescoping sum, and step (b) is by the law of total expectation.

  Further, by the definition of $\submratio_t$ (\defref{def:submratio}) we know that for all $\policy$ and $\realization$ it holds that
  \begin{align}
    \max_i \gain{i \given \seqoflen{t}, \obsoflen{t}} &\geq \submratio^{\policy}_t \gain{\seqpolicyelement{\tau}(\Realization') \given
                                                        \Realization' \sim
                                                        \seqoflen{t} \oplus \seqpolicyoflen{\tau-1}(\realization), \obsoflen{t} \oplus \obspolicyoflen{\tau-1}(\realization)}\label{eq:gain_item_sequence_ratio}
  \end{align}
  Combining \eqnref{eq:gain_policy_expansion} with \eqnref{eq:gain_item_sequence_ratio} to get
  \begin{align*}
    \gain{\policy \given \seqoflen{t}, \obsoflen{t}}
    &\stackrel{\eqnref{eq:gain_policy_expansion}}{=} \sum_{\tau=1}^{T} \mathbb{E}\left[
      \gain{\seqpolicyelement{\tau}(\Realization') \given
      \Realization' \sim
      \seqoflen{t} \oplus \seqpolicyoflen{\tau-1}(\Realization), \obsoflen{t} \oplus \obspolicyoflen{\tau-1}(\Realization)}
      \right.
    \\
    & \hspace{0.55\textwidth}\left.
      \bigm\mid
      \Realization \sim(\seqoflen{t}, \obsoflen{t})
      \right]
    \\
    &\stackrel{\eqnref{eq:gain_item_sequence_ratio}}{\leq} \sum_{\tau=1}^{T} \mathbb{E}\left[ \frac{1}{\submratio^{\policy}_t}
      \max_i \gain{i \given \seqoflen{t}, \obsoflen{t}}
      \bigm\mid
      \Realization \sim(\seqoflen{t}, \obsoflen{t})
      \right]
    \\
    &= \frac{T}{\submratio^{\policy}_t} \max_i \gain{i \given \seqoflen{t}, \obsoflen{t}}
  \end{align*}
  which completes the proof.
\end{proof}

In the following we provide the proof of \thmref{thm:bound_general}.
\begin{proof}[Proof of \thmref{thm:bound_general}]

  By the definition of $\curvature_t$ (\defref{def:curvature},\eqnref{eq:curvature}) we know that for all $\policy$ it holds that 
  \begin{align*}
    \curvature_t &\geq 1-\frac{\expctover{}{\objective(\seqoflen{t} \oplus \seqpolicy(\Realization), \obsoflen{t} \oplus \obspolicy(\Realization)) - \objective(\seqpolicy(\Realization), \obspolicy(\Realization)) \given \Realization \sim(\seqoflen{t}, \obsoflen{t})}} {\objective(\seqoflen{t}, \obsoflen{t})}
  \end{align*}
  Therefore, we get
  \begin{align}
    \gain{\policy \given \seqoflen{t}, \obsoflen{t}}
    &= \expctover{}{\objective(\seqoflen{t} \oplus \seqpolicy(\Realization), \obsoflen{t} \oplus \obspolicy(\Realization)) - \objective(\seqoflen{t}, \obsoflen{t}) \given \Realization \sim(\seqoflen{t}, \obsoflen{t})} \nonumber \\
    &\geq \expctover{}{\objective(\seqpolicy(\Realization), \obspolicy(\Realization)) - \curvature_t \objective(\seqoflen{t}, \obsoflen{t}) \given \Realization \sim(\seqoflen{t}, \obsoflen{t})} \label{eq:gain_wrt_curvature}
  \end{align}
  Now suppose that we have run greedy policy $\policygreedy$ up to time step $t$ and have observed sequence $(\seqgreedyoflen{t}, \obsgreedyoflen{t})$. Combining \lemref{lem:data_dependent_bound} (\eqnref{eq:data_dependent_bound}) with \eqnref{eq:gain_wrt_curvature}, we get
  \begin{align*}
    \max_i \gain{i \given \seqgreedyoflen{t}, \obsgreedyoflen{t}}
    &= \expctover{}{\objective(\seqgreedyoflen{t+1}(\Realization), \obsgreedyoflen{t+1}(\Realization))  - \objective(\seqgreedyoflen{t}, \obsgreedyoflen{t}) \given \Realization \sim(\seqgreedyoflen{t}, \obsgreedyoflen{t})}\\
    &\geq \frac{\submratio_t}{T} \cdot \expctover{}{\objective(\seqpolicy(\Realization), \obspolicy(\Realization)) - \curvature_t \objective(\seqgreedyoflen{t}, \obsgreedyoflen{t}) \given \Realization \sim(\seqgreedyoflen{t}, \obsgreedyoflen{t})}
  \end{align*}
  which implies
  \begin{align}
    &\expctover{}{\objective(\seqgreedyoflen{t+1}(\Realization), \obsgreedyoflen{t+1}(\Realization)) \given \Realization \sim(\seqgreedyoflen{t}, \obsgreedyoflen{t})} \nonumber \\
    \geq &\frac{\submratio_t}{T} \cdot \expctover{}{\objective(\seqpolicy(\Realization), \obspolicy(\Realization))\given \Realization \sim(\seqgreedyoflen{t}, \obsgreedyoflen{t})} + \left(1 - \frac{\submratio_t \curvature_t}{T}\right) \objective(\seqgreedyoflen{t}, \obsgreedyoflen{t}) \label{eq:conditional_util_recursive_bound}
  \end{align}
  Therefore, we get
  \begin{align}
    \avgobj{\policygreedy}
    &=\expctover{}{\objective(\seqgreedyoflen{T}(\Realization), \obsgreedyoflen{T}(\Realization))} \nonumber \\
    &\stackrel{(a)}{=}\expctover{}{\expctover{}{\objective(\seqgreedyoflen{T}(\Realization), \obsgreedyoflen{T}(\Realization)) \given \Realization \sim(\seqgreedyoflen{T-1}(\Realization'), \obsgreedyoflen{T-1}(\Realization'))}} \nonumber \\
    &\stackrel{\eqnref{eq:conditional_util_recursive_bound}}{\geq} \expctover{}{\frac{\submratio_{T-1}}{T} \cdot \expctover{}{\objective(\seqpolicy(\Realization), \obspolicy(\Realization))\given \Realization \sim(\seqgreedyoflen{T-1}(\Realization'), \obsgreedyoflen{T-1}(\Realization')) }} + \nonumber \\
    &\qquad \qquad \expctover{}{\left(1 - \frac{\submratio_{T-1} \curvature_{T-1}}{T}\right) \objective(\seqgreedyoflen{T-1}(\Realization'), \obsgreedyoflen{T-1}(\Realization'))} \nonumber \\
    &\stackrel{(b)}{=} \frac{\submratio_{T-1}}{T} \cdot \expctover{}{\objective(\seqpolicy(\Realization), \obspolicy(\Realization))} + \left(1 - \frac{\submratio_{T-1} \curvature_{T-1}}{T}\right) \cdot \expctover{}{\objective(\seqgreedyoflen{T-1}(\Realization'), \obsgreedyoflen{T-1}(\Realization'))} \nonumber \\
    & = \frac{\submratio_{T-1}}{T} \cdot \avgobj{\policy} + \left(1 - \frac{\submratio_{T-1} \curvature_{T-1}}{T}\right) \cdot \expctover{}{\objective(\seqgreedyoflen{T-1}(\Realization), \obsgreedyoflen{T-1}(\Realization))}
      \label{eq:avgobj_recursive_bound}
  \end{align}
  where step (a) and step (b) are by the law of total expectation. Recursively applying \eqnref{eq:avgobj_recursive_bound} gives us
  \begin{align*}
    \avgobj{\policygreedy}
    &\geq \frac{\submratio_{T-1}}{T} \cdot \avgobj{\policy} + \left(1 - \frac{\submratio_{T-1} \curvature_{T-1}}{T}\right) \cdot \expctover{}{\objective(\seqgreedyoflen{T-1}(\Realization), \obsgreedyoflen{T-1}(\Realization))} \\
    &\geq \left( \frac{\submratio_{T-1}}{T} + \left(1 - \frac{\submratio_{T-1} \curvature_{T-1}}{T}\right) \frac{\submratio_{T-2}}{T} \right) \avgobj{\policy} + \\
    & \quad~ \left(1 - \frac{\submratio_{T-1} \curvature_{T-1}}{T}\right) \left(1 - \frac{\submratio_{T-2} \curvature_{T-2}}{T} \right) \expctover{}{\objective(\seqgreedyoflen{T-2}(\Realization), \obsgreedyoflen{T-2}(\Realization))}\\
    & \geq \dots \\
    & \geq \avgobj{\policy} \sum_{t=1}^{T-1} \frac{\gamma_{T-t}}{T}\prod_{\tau=1}^{t-1}\left( 1-\frac{\submratio_\tau \curvature_\tau}{T} \right)
  \end{align*}
  which completes the proof.
\end{proof}
\subsubsection{Proof of \corref{cor:bound_general}}
\begin{proof}[Proof of \corref{cor:bound_general}]
  Since $\submratio^{\text{g}} = \min_t \submratio_t$ and $\curvature^{\text{g}} = \max_t \curvature_t$, by \thmref{thm:bound_general} we obtain
  \begin{align*}
    \avgobj{\policygreedy}
    &\geq \avgobj{\policy} \sum_{t=1}^T \frac{\gamma_{T-t}}{T}\prod_{\tau=0}^{t-1}\left( 1-\frac{\submratio_\tau \curvature_\tau}{T} \right) \\
    &\geq \avgobj{\policy} \frac{\submratio^{\text{g}}}{T} \sum_{t=1}^T \left( 1-\frac{\submratio^{\text{g}} \curvature^{\text{g}}}{T} \right)^t \\
    &= \avgobj{\policy} \frac{1}{\curvature^{\text{g}}}\left( 1-\left(1-\frac{\submratio^{\text{g}}\curvature^{\text{g}}}{T} \right)^T \right)
  \end{align*}
  which completes the proof.
\end{proof}


\subsection{Proof of \thmref{lem:bound_for_params}}
In this section, we provide the proof for \thmref{lem:bound_for_params}. In particular, we divide the proof into two parts. In \secref{sec:app:submratio}, we propose a polynomial time algorithm which outputs a lower bound on $\submratio^{g}_t$; in \secref{sec:app:curvature}, we provide an upper bound on $\curvature^{\text{g}}_t$ which can be computed in linear time.

\subsubsection{Empirical lower bound on $\submratio_t$ for the case $a=b$}\label{sec:app:submratio}
Let us use $\getcount{\sequence, i}$ to denote the function that returns the number of times item $i$ appears in sequence $\sequence$. We first show the following lemma.
\begin{lemma}
  Fix $s\leq t$. For any $\sequence'
  \in \{\sequence: |\sequence|=t, \getcount{\sequence, i} = s\}$, we have
  \begin{align*}
    \gain{i \given \sequence^{i}_{t, 1:s}, \cdot} \geq \gain{i \given \sequence', \cdot}
  \end{align*}
  where $\sequence^{i}_{t, 1:s} := \underbrace{i\oplus i \oplus \dots \oplus i}_{\text{$s$ times}}\oplus \underbrace{\_ \oplus \_ \oplus \dots \oplus \_}_{\text{$t-s$ times}}$ denotes the sequence of items of length $t$, where the first $s$ items are item $i$ and the remaining $t-s$ items are empty.
\end{lemma}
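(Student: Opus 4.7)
The plan is to exploit the special structure of the HLR memory model under the assumption $a_i = b_i$, which renders the learner's half-life for concept $i$ after $k$ showings deterministically equal to $h^{(k)} = 2^{ak + c}$, independent of any recall outcomes. First, by independence of concepts and by \eqnref{eq:objective}, $\gain{i \given \sequence, \cdot}$ only involves the $i$-th summand of $\objective$, since appending a copy of $i$ leaves the subhistory of every $j \neq i$ unchanged. Second, since $h^{(\cdot)}$ is deterministic under $a = b$, the expectation over $y_{t+1}$ in \eqnref{eq:marginal_gain_item} is trivial, so $\gain{i \given \sequence, \cdot}$ reduces to a purely arithmetic quantity.

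Let $s = \getcount{\sequence, i}$ and let $p_s$ denote the position of the last occurrence of $i$ inside $\sequence$ (left undefined if $s = 0$). The terms of the sum in \eqnref{eq:objective} with $\tau \leq t$ are unaffected by appending $i$ at position $t+1$, so by \eqnref{eq:hlrmodel},
\begin{align*}
  \gain{i \given \sequence, \cdot}
  \;=\; \frac{1}{nT} \sum_{\tau=t+1}^{T} \left[\, 2^{-(\tau - t)/h^{(s+1)}} \,-\, 2^{-(\tau + 1 - p_s)/h^{(s)}} \,\right],
\end{align*}
where the second term is defined to be $0$ when $s = 0$.

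Critically, this expression depends on the placement of the $s$ copies of $i$ inside $\sequence$ only through $p_s$: the first term is a function of $(s, t)$ alone, while the second term is strictly increasing in $p_s$, because raising $p_s$ makes the exponent $-(\tau+1-p_s)/h^{(s)}$ less negative. Hence the gain is a decreasing function of $p_s$, and is maximized by making $p_s$ as small as possible. Over all sequences of length $t$ containing exactly $s$ copies of $i$, the smallest achievable value of $p_s$ is $s$, attained precisely when the $s$ copies occupy the initial positions $1, 2, \dots, s$, which is the definition of $\sequence^{i}_{t, 1:s}$. This gives the desired inequality; the case $s = 0$ is trivial since the gain is then independent of the sequence.

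The main obstacle is careful bookkeeping around the history-extension convention of \eqnref{eq:objective}: one must verify that the ``empty'' tail positions of $\sequence^{i}_{t, 1:s}$ do not spuriously alter $h^{(\cdot)}$ or contribute to recall values at times $\tau > t$, and that filling those positions with arbitrary items other than $i$ (as in $\sequence'$) indeed leaves the $i$-summand of $\objective$ unaffected. Once this is established, the remainder is a one-line monotonicity observation on the exponential forgetting curve.
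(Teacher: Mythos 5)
Your proof is correct and takes essentially the same route as the paper's: under $a=b$ the gain is deterministic, the term from appending $i$ depends only on the count $s$, and the baseline recall term is minimized (hence the gain maximized) by placing the $s$ showings of $i$ as early as possible. Your explicit parametrization via the last-occurrence position $p_s$ is just a more concrete restatement of the paper's monotone-forgetting observation.
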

\begin{proof}
  By definition of the marginal gain (\eqnref{eq:marginal_gain_item})
  \begin{align*}
    \gain{i \given \sequence, \observation}
    =
    \expctover{}{\objective(\seqoflen{t} \oplus \instance, \obsoflen{t} \oplus \Realization(\instance, t+1)) - \objective(\seqoflen{t}, \obsoflen{t}) \given \Realization \sim(\seqoflen{t}, \obsoflen{t})}
  \end{align*}
  For the case $a=b$, the objective function $\objective$ is independent of the observed outcomes of the learner's recall. That is,
  \begin{align*}
    \gain{i \given \seqoflen{t}, \cdot}
    &=
      \objective(\seqoflen{t} \oplus \instance, \cdot) - \objective(\seqoflen{t}, \cdot)
    \\
    &=
      \frac{1}{\numinstance \horizon}
      \sum_{i=1}^{\numinstance}
      \sum_{\tau=1}^\horizon
      \left\{
      \RecallPrOf{i}{\tau+1, \seqoflen{t} \oplus \instance, \cdot} -
      \RecallPrOf{i}{\tau+1, \seqoflen{t}, \cdot}
      \right\}
    \\
    &=
      \frac{1}{\numinstance \horizon}
      \sum_{i=1}^{\numinstance}
      \sum_{\tau=t+1}^\horizon
      \left\{
      \RecallPrOf{i}{\tau+1, \seqoflen{t} \oplus \instance, \cdot} -
      \RecallPrOf{i}{\tau+1, \seqoflen{t}, \cdot}
      \right\}
  \end{align*}
  Denote $\Sequences^i_{t,s} = \{\sequence: |\sequence|=t, \getcount{\sequence, i} = s\}$. For any $\sequence, \sequence' \in \Sequences^i_{t,s}$, we know that
  \begin{align*}
    \sum_{\tau=t+1}^\horizon \RecallPrOf{i}{\tau+1, \seqoflen{t} \oplus \instance, \cdot} =       \sum_{\tau=t+1}^\horizon \RecallPrOf{i}{\tau+1, \seqoflen{t}' \oplus \instance, \cdot}
  \end{align*}
  Therefore,
  \begin{align*}
    \max_{\seqoflen{t} \in \Sequences^i_{t,s}} \gain{i \given \seqoflen{t}, \cdot}
    &=
      \frac{1}{\numinstance \horizon}
      \sum_{i=1}^{\numinstance}
      \sum_{\tau=t+1}^\horizon
      \left\{
      \RecallPrOf{i}{\tau+1, \seqoflen{t} \oplus \instance, \cdot} -
      \min_{\seqoflen{t} \in \Sequences^i_{t,s}}
      \RecallPrOf{i}{\tau+1, \seqoflen{t}, \cdot}
      \right\}
    \\
    &\stackrel{(a)}{=}
      \frac{1}{\numinstance \horizon}
      \sum_{i=1}^{\numinstance}
      \sum_{\tau=t+1}^\horizon
      \left\{
      \RecallPrOf{i}{\tau+1, \seqoflen{t} \oplus \instance, \cdot} -
      \RecallPrOf{i}{\tau+1, \sequence^{i}_{t, 1:s}, \cdot}
      \right\}
  \end{align*}
  Here, step (a) is due to the fact that the learner's recall of an item is monotonously decreasing (therefore showing item $i$ earlier leads to lower recall in the future). Therefore, it completes the proof.
\end{proof}

\begin{algorithm}[!t]
  \caption{Computing the empirical lower bound on the greedy online stepwise submodular coefficient}\label{alg:submratio}
  \begin{algorithmic}
    \Require $\seqoflen{t}; \obsoflen{t}$
    \For{$i=\Instances$}
    \State $\texttt{CurrentGain}_i \leftarrow \gain{i \given \seqoflen{t}, \obsoflen{t}}$
    \For{$\tau = \{1, \dots, \horizon-t\}$}
    \For{$s \in \{1, \dots, \tau\}$}
    \State $\sequence' \leftarrow \underbrace{i\oplus i \oplus \dots \oplus i}_{\text{$s$ times}}\oplus \underbrace{\_ \oplus \_ \oplus \dots \oplus \_}_{\text{$\tau-s$ times}}$ \Comment{Only consider insertions in the front}
    \State $v_{\tau, s} \leftarrow \gain{i \given \seqoflen{t} \oplus \sequence', \cdot}$ \Comment{Gain of item $i$ at $t+\tau$, with $s$ insertions}
    \EndFor
    \EndFor
    \State $\texttt{FutureGain}_i \leftarrow \max_{\tau, s} v_{\tau, s}$ \Comment{Maximal gain of item $i$ at future time steps}
    \EndFor
    \State $\submratio_t \leftarrow \min_i\frac{\texttt{CurrentGain}_i}{\texttt{FutureGain}_i}$ \Comment{Choosing the minimal ratio among all items}
    \State \Return $\submratio_t$
  \end{algorithmic}
\end{algorithm}

An approximation algorithm for $\submratio_t$ is provided in \algref{alg:submratio}.

\subsubsection{Empirical upper bound on $\curvature_t$ for the case $a=b$}\label{sec:app:curvature}

In this section, we derive an upper bound on $\curvature_t$ which can be computed in polynomial time.


By definition of the online greedy stepwise backward curvature $\curvature_t$, we know
\begin{align*}
\curvature_t :=
  \curvature(\seqgreedyoflen{t}, \obsgreedyoflen{t}) =
  \max_{\sigma^{\pi}, y^{\pi}}
  \left\{ 1-\frac{{\objective(\seqgreedyoflen{t} \oplus \seqpolicy, \obsgreedyoflen{t} \oplus \obspolicy) - \objective(\seqpolicy, \obspolicy)}}
  {\objective(\seqgreedyoflen{t}, \obsgreedyoflen{t})}  \right\}
\end{align*}
For the case $a=b$, the objective function $\objective$ is independent of the observed outcomes of the learner's recall (i.e., $\objective$ is a deterministic function of the input teaching sequence). Therefore,
\begin{align*}
  \curvature_t
  &=
    \max_{\policy}
    \left\{ 1-\frac{\objective(\seqgreedyoflen{t} \oplus \seqpolicy, \cdot) - \objective(\seqpolicy, \cdot)}
    {\objective(\seqgreedyoflen{t}, \cdot)}  \right\} \\
  & =
    1 + \max_{\policy}
    \left\{\frac{\objective(\seqpolicy, \cdot) - \objective(\seqgreedyoflen{t} \oplus \seqpolicy, \cdot)}
    {\objective(\seqgreedyoflen{t}, \cdot)}  \right\}
\end{align*}
For simplicity let us use $ \sequence^{\text{g} + \policy} := \seqgreedyoflen{t} \oplus \seqpolicy$ to denote the concatenated sequence, and w.l.o.g, assume that $\policy$ represent the one which maximizes the RHS of the above equation (i.e., $\policy$ is the optimal policy). Substituting the objective function $\objective$ in the above equation with its definition (\eqnref{eq:objective}), we get
\begin{align}
  \curvature_t
    & =
      1 +
      \frac{1}{\numinstance \horizon} \frac{1}{\objective(\seqgreedyoflen{t}, \cdot)}
      \sum_{i=1}^{\numinstance}
      \sum_{\tau=1}^\horizon
      \left\{
      \RecallPrOf{i}{\tau+1, \seqpolicyoflen{\tau}, \cdot} -
      \RecallPrOf{i}{\tau+1, \seqoflen{\tau}^{\text{g}+\policy}, \cdot}
      \right\}
      \nonumber \\
    & =
      1 +
      \frac{1}{\numinstance \horizon} \frac{1}{\objective(\seqgreedyoflen{t}, \cdot)}
      \sum_{i=1}^{\numinstance}
      \left\{
      \sum_{\tau=1}^{\horizon-t}
      \RecallPrOf{i}{\tau+1, \seqpolicyoflen{\tau}, \cdot}
      +
      \sum_{\tau=\horizon-t+1}^{\horizon}
      \RecallPrOf{i}{\tau+1, \seqpolicyoflen{\tau}, \cdot}
      \right.
      \nonumber \\
    & \hspace{.25\textwidth}
      \left.
      -
      \sum_{\tau=t+1}^{\horizon}
      \RecallPrOf{i}{\tau+1, \seqoflen{\tau}^{\text{g}+\policy}, \cdot}
      -
      \sum_{\tau=1}^t
      \RecallPrOf{i}{\tau+1, \seqoflen{\tau}^{\text{g}+\policy}, \cdot}
      \right\}
      \nonumber \\
    & =
      1 +
      \frac{1}{\numinstance \horizon} \frac{1}{\objective(\seqgreedyoflen{t}, \cdot)}
      \sum_{i=1}^{\numinstance}
      \left\{
      \sum_{\tau=\horizon-t+1}^{\horizon}
      \RecallPrOf{i}{\tau+1, \seqpolicyoflen{\tau}, \cdot}
      -
      \sum_{\tau=1}^t
      \RecallPrOf{i}{\tau+1, \seqoflen{\tau}^{\text{g}+\policy}, \cdot}
      \right.
      \nonumber \\
    & \hspace{.27\textwidth}
      \underbrace{
      \left.
      +
      \sum_{\tau=1}^{\horizon-t}
      \RecallPrOf{i}{\tau+1, \seqpolicyoflen{\tau}, \cdot}
      -
      \sum_{\tau=t+1}^{\horizon}
      \RecallPrOf{i}{\tau+1, \seqoflen{\tau}^{\text{g}+\policy}, \cdot}
      \right\}}
      _{\leq 0}
      \nonumber \\
    &\leq
      1 +
      \frac{1}{\numinstance \horizon} \frac{1}{\objective(\seqgreedyoflen{t}, \cdot)}
      \sum_{i=1}^{\numinstance}
      \left\{
      \sum_{\tau=\horizon-t+1}^{\horizon}
      \RecallPrOf{i}{\tau+1, \seqpolicyoflen{\tau}, \cdot}
      -
      \sum_{\tau=1}^t
      \RecallPrOf{i}{\tau+1, \seqoflen{\tau}^{\text{g}+\policy}, \cdot}
      \right\}
      \label{eq:empirical_bound_curvature_temp}
\end{align}
Let $\seqoflen{t}^{i} := \underbrace{i\oplus i \oplus \dots \oplus i}_{\text{$t$ times}}$ denote the sequence of items of length $t$ that consists of all $i$'s. Then, clearly
\begin{align}
  \label{eq:bound_opt_gain_pushed_out}
  \sum_{\tau=\horizon-t+1}^\horizon
  \RecallPrOf{i}{\tau+1, \seqpolicyoflen{\tau}, \cdot}
  &\leq
    \sum_{\tau=\horizon-t+1}^\horizon
    \RecallPrOf{i}{\tau+1, \seqoflen{\tau}^{i}, \cdot}
\end{align}
Combining \eqnref{eq:empirical_bound_curvature_temp} with \eqnref{eq:bound_opt_gain_pushed_out} we get
\begin{align}
  \curvature_t
  &\leq
    1 +
    \frac{1}{\numinstance \horizon} \frac{1}{\objective(\seqgreedyoflen{t}, \cdot)}
    \sum_{i=1}^{\numinstance}
    \left\{
    \sum_{\tau=\horizon-t+1}^{\horizon}
    \RecallPrOf{i}{\tau+1, \seqpolicyoflen{\tau}, \cdot}
    -
    \sum_{\tau=1}^t
    \RecallPrOf{i}{\tau+1, \seqoflen{\tau}^{\text{g}+\policy}, \cdot}
    \right\}
    \nonumber \\
  &\leq
    1 +
    \frac{1}{\numinstance \horizon} \frac{1}{\objective(\seqgreedyoflen{t}, \cdot)}
    \sum_{i=1}^{\numinstance}
    \left\{
    \sum_{\tau=\horizon-t+1}^{\horizon}
    \RecallPrOf{i}{\tau+1, \seqoflen{\tau}^{i}, \cdot}
    -
    \sum_{\tau=1}^t
    \RecallPrOf{i}{\tau+1, \seqoflen{\tau}^{\text{g}+\policy}, \cdot}
    \right\} \nonumber \\
  &=
    1 +
    \frac{1}{\numinstance \horizon} \frac{1}{\objective(\seqgreedyoflen{t}, \cdot)}
    \sum_{i=1}^{\numinstance}
    \left\{
    \sum_{\tau=\horizon-t+1}^{\horizon}
    \RecallPrOf{i}{\tau+1, \seqoflen{\tau}^{i}, \cdot}
    -
    \sum_{\tau=1}^t
    \RecallPrOf{i}{\tau+1, \seqgreedyoflen{\tau}, \cdot}
    \right\}\label{eq:empirical_bound_curvature}
\end{align}


\begin{proof}[Proof of \thmref{lem:bound_for_params}]
  Clearly, both the empirical bounds on $\submratio^{\text{g}}_t$ (\algref{alg:submratio}) and $\curvature^{\text{g}}_t$  (RHS of \eqnref{eq:empirical_bound_curvature}) can be computed in polynomial time. Plugging the values into \thmref{thm:bound_general} and \corref{thm:bound_general} we get a polynomial time approximation of the empirical bound.
\end{proof}


\subsection{Proof of \thmref{prop:bound_a_eq_b}}

In this section, we provide the proof of \thmref{prop:bound_a_eq_b}.

Suppose there are $\numinstance$ items, and $T$ is a multiple of $\numinstance$. Fix $a$, and assume that $a_i=b_i=a$ and $c_i=0$ for all $i\in\Instances$. We first show a sufficient condition on $a$ under which the greedy policy reduces to the round robin policy.

Recall from \eqnref{eq:hlrmodel} that the recall probability of an item is
\begin{align}\label{eq:app:hlrmodel}
  \RecallPrOf{\instance}{\tau, \cdot} = 2^{\frac{\tau-\ell}{h_\instance}}
\end{align}
where $h_i = 2^{a n_i}$ denotes the half life of item $i$, and $n_i$ denotes the number of times item $i$ is presented so far.

Now assume that the greedy algorithm picks item $i$ at $t=1$. Then, in order for the greedy algorithm not to pick the same item at $t=2$, we need to make sure that at $t=2$, the gain of item $i$ is smaller than the gain of the best item. To achieve that, there must exist some other item $j$, such that
\begin{align*}
  \gain{j \given \seqelement{1} = i}
  >
  \gain{i \given \seqelement{1} = i}
\end{align*}
That is,
\begin{align*}
  \sum_{t=2}^T\left(
  \RecallPrOf{j}{t, \seqelement{1} = i, \seqelement{2} = j} -
  \RecallPrOf{j}{t, \seqelement{1} = i}
  \right)
  >
  \sum_{t=2}^T\left(
  \RecallPrOf{i}{t, \seqelement{1} = i, \seqelement{2} = i} -
  \RecallPrOf{i}{t, \seqelement{1} = i}
  \right)
\end{align*}
A sufficient condition for the above inequality to hold is
\begin{align*}
  \RecallPrOf{j}{T, \seqelement{1} = i, \seqelement{2} = j} -
  \RecallPrOf{j}{T, \seqelement{1} = i}
  &=
    \RecallPrOf{j}{T, \seqelement{1} = i, \seqelement{2} = j}
  \\
  &>
    \RecallPrOf{i}{T, \seqelement{1} = i, \seqelement{2} = i} -
    \RecallPrOf{i}{T, \seqelement{1} = i}
\end{align*}
Plugging in the definition of $\RecallPr_i, \RecallPr_j$, we get
\begin{align}\label{eq:bounda_greedy_roundrobin}
  2^{-\frac{T-1}{2^{a}}} > 2^{-\frac{T-1}{2^{2a}}} - 2^{-\frac{T}{2^{a}}}
\end{align}
It is easy to verify numerically that a sufficient condition for \eqnref{eq:bounda_greedy_roundrobin} to hold is
\begin{align}
  \label{eq:app:sufficient_cond_greedy_roundrobin}
  a \geq \log T
\end{align}

Next, we provide a lower bound on the cost of the round robin algorithm. Let $\sequence_{1:T}$ be the round robin teaching sequence. W.l.o.g., assume that the order of items shown in each round is $1, 2, \dots, n$. Therefore,
\begin{align*}
  \objective(\sequence_{1:T})
  &= \frac{1}{nT}\sum_{i=1}^n \sum_{\tau=1}^T \RecallPrOf{i}{\tau+1, \seqoflen{\tau}} \\
  &= \frac{1}{nT}\sum_{i=1}^n \sum_{r=1}^{T/n} \sum_{\tau=1}^n \RecallPrOf{i}{(r-1)n+\tau+1, \seqoflen{(r-1)n+\tau}} \\
  &\geq \frac{1}{nT}\sum_{i=1}^n \sum_{r=1}^{T/n} n \RecallPrOf{i}{rn+1, \seqoflen{(r-1)n+\tau}} \\
  &=\frac{1}{T}\sum_{i=1}^n \sum_{r=1}^{T/n} \RecallPrOf{i}{rn+i, \seqoflen{(r-1)n+\tau}}
\end{align*}
For simplicity, define $p_{i,r} = \RecallPrOf{i}{rn+i, \seqoflen{(r-1)n+\tau}}$. We thus have
\begin{align}
  \objective(\sequence_{1:T})
  &= \frac{1}{T}\sum_{i=1}^n \sum_{r=1}^{T/n} p_{i,r} \label{eq:app:objective_roundrobin}
\end{align}
Observe that for $r \in \{1, \dots, T/n\}$, it holds that 
\begin{align}
  \frac{1 - p_{i,r+1}}{1 - p_{i,r}} \geq \frac{1 - p_{i,r+2}}{1 - p_{i,r+1}}, \text{~and~}  1 - p_{i,r} \geq 1 - p_{i,r+1} \label{eq:app:gain_iterative_update}
\end{align}
From the above inequalities we get
\begin{align*}
  1 - p_{i,r+1}
  &
    = \left(1 - p_{i,r}\right) \frac{1 - p_{i,r+1}}{1 - p_{i,r}}
  \\ &
       \leq \left(1 - p_{i,r}\right) \frac{1 - p_{i,r}}{1 - p_{i,r-1}}
  \\ &
       \leq \left(1 - p_{i,r-1}\right) \frac{1 - p_{i,r-1}}{1 - p_{i,r-2}} \cdot \frac{1 - p_{i,r}}{1 - p_{i,r-1}}
  \\ &
       \leq \left(1 - p_{i,r-1}\right) \left(\frac{1 - p_{i,r-1}}{1 - p_{i,r-2}}\right)^2
  \\ &
       \leq \left(1 - p_{i,1}\right) \left(\frac{1 - p_{i,2}}{1 - p_{i,1}}\right)^{r}
\end{align*}
Therefore, we have
\begin{align}
  \sum_{r=1}^{T/n} (1-p_{i,r})
    &\leq
      (1-p_{i,1}) + (1-p_{i,1})\frac{1-p_{i,2}}{1-p_{i,1}} +
      \dots +
      (1-p_{i,1})\left(\frac{1-p_{i,2}}{1-p_{i,1}}\right)^{T/n-1}
      \nonumber \\
    &=
      \sum_{r=1}^{T/n} (1-p_{i,1})\left(\frac{1-p_{i,2}}{1-p_{i,1}}\right)^{r-1}
      \nonumber \\
    &= \frac{(1-p_{i,1}) \left(1-\left(\frac{1-p_{i,2}}{1-p_{i,1}}\right)^{T/n}\right)}{1-\left(\frac{1-p_{i,2}}{1-p_{i,1}}\right)}
      \nonumber \\
    &\leq
      \frac{(1-p_{i,1})^2}{p_{i,2}-p_{i,1}}\label{eq:app:inverseobj_upperbound}
\end{align}
Combining \eqnref{eq:app:objective_roundrobin} with \eqnref{eq:app:inverseobj_upperbound} we get
\begin{align*}
  \objective(\sequence_{1:T})
  &= \frac{1}{T}\sum_{i=1}^n \sum_{r=1}^{T/n} p_{i,r} \\
  &= 1- \frac{1}{T}\sum_{i=1}^n \sum_{r=1}^{T/n} (1-p_{i,r}) \\
  &\geq 1- \frac{1}{T}\sum_{i=1}^n \frac{(1-p_{i,1})^2}{p_{i,2}-p_{i,1}} \\
  &\stackrel{(a)}{=} 1- \frac{n}{T} \frac{(1-p_{i,1})^2}{p_{i,2}-p_{i,1}}
\end{align*}
where step (a) is due to the fact that $p_{i,1} = 2^{-n/2^{a}}$, and $p_{i,2} = 2^{-n/2^{2a}}$ for all $i$.

Now suppose that we would like to lower bound the utility $\objective(\sequence_{1:T})$ by $1-\epsilon$. Therefore,
\begin{align}
  \frac{n}{T}\frac{(1-p_{i,1})^2}{p_{i,2}-p_{i,1}} \leq \epsilon \label{eq:app:bound_against_epsilon}
\end{align}

While it is challenging to solve \eqnref{eq:app:bound_against_epsilon} in an analytical form, we consider a stronger condition to simplify the calculation. Consider a configuration of $a$ which also satisfies the following inequality
\begin{align}
  \label{eq:app:additional_constraint}
  1 - p_{i,2} \leq \frac{1-p_{i,1}}{2}
\end{align}
Therefore, a sufficient condition for Inequality \eqref{eq:app:bound_against_epsilon} to hold is
\begin{align*}
  \frac{(1-p_{i,1})^2}{p_{i,2}-p_{i,1}}
      = \frac{(1-p_{i,1})^2}{(1-p_{i,1}) - (1 - p_{i,2})}
             \stackrel{\eqnref{eq:app:additional_constraint}}{\leq}
             \frac{(1-p_{i,1})^2}{(1-p_{i,1}) - \frac{1-p_{i,1}}{2}}
                    = 2(1-p_{i,1})
                           \leq \frac{\epsilon T}{n}
\end{align*}
Plugging in $p_{i,1} = 2^{-n/2^{a}}$ into the above inequality, we get
\begin{align}
  2^{-n/2^{a}}  \geq 1-\frac{\epsilon T}{2n}\label{eq:app:constraint_with_stronger_condition}
\end{align}
Now, let us consider the following two cases:
\begin{enumerate}[C1]
\item $1-\frac{\epsilon T}{2n} > 0$ (that is, $\epsilon < 2n/T$). In this case, we get
  \begin{align*}
    a &\geq \log\left( \frac{n}{\log\left( \frac{1} {1-\epsilon T/(2n)} \right)} \right) \\
      &= \log n - \log \log\left( \frac{1} {1-\epsilon T/(2n)} \right) \\
      &\stackrel{(a)}{\geq} \log n - \log \left(\left( \frac{1} {1-\epsilon T/(2n)} \right) - 1\right) \\
      &= \log \left(\left( \frac{2n^2} {\epsilon T} \right) - n\right) \\
  \end{align*}
  where step (a) is by the inequality $\log(x) \leq x - 1$ for $x>0$. A feasible configuration of $a$ satisfying the above inequality is
  \begin{align}
    a \geq \log \left(\frac{2n^2} {\epsilon T} \right)
    \label{eq:app:sufficient_cond_roundrobin_part1}
  \end{align}
  It is easy to verify that Condition \eqnref{eq:app:sufficient_cond_roundrobin_part1} also satisfies our additional constraint \eqnref{eq:app:additional_constraint}.
\item A second case is $\epsilon \geq 2n/T$. In this case, \eqnref{eq:app:constraint_with_stronger_condition} holds for all $a$, and we only need to find a feasible configuration of $a$ that satisfies \eqnref{eq:app:additional_constraint}. A suitable choice of such a constraint is
  \begin{align}
    \label{eq:app:sufficient_cond_roundrobin_part2}
    a \geq \log \left(3n \right)
  \end{align}
\end{enumerate}
Combining \eqnref{eq:app:sufficient_cond_greedy_roundrobin} \eqnref{eq:app:sufficient_cond_roundrobin_part1} and  \eqnref{eq:app:sufficient_cond_roundrobin_part2} we obtain
\begin{align*}
  a \geq \max\left\{ \log T, \log \left(3n \right), \log \left(\frac{2n^2} {\epsilon T} \right) \right\}
\end{align*}
which finishes the proof.



}
}
{}
\end{document}